\documentclass{article}
\PassOptionsToPackage{dvipsnames}{xcolor}

\usepackage{microtype}

\usepackage{graphicx}
\usepackage{subcaption}
\usepackage{caption}
\usepackage{stfloats}

\usepackage{hyperref}
\usepackage{url}

\usepackage{adjustbox}
\usepackage{multirow}
\usepackage{array}
\usepackage{makecell} 
\usepackage{wrapfig} 
\usepackage{booktabs}

\usepackage[accepted]{icml2026}

\usepackage{amsmath}
\usepackage{amssymb}
\usepackage{mathtools}
\usepackage{amsthm}

\usepackage[capitalize,noabbrev]{cleveref}

\theoremstyle{plain}
\newtheorem{theorem}{Theorem}[section]

\newtheorem{lemma}[theorem]{Lemma}
\newtheorem{corollary}[theorem]{Corollary}
\theoremstyle{definition}
\newtheorem{definition}[theorem]{Definition}

\theoremstyle{remark}
\newtheorem{remark}[theorem]{Remark}

\makeatletter
\def\@part[#1]#2{%
    \ifnum \c@secnumdepth >\m@ne
      \refstepcounter{part}%
      \addcontentsline{toc}{part}{\thepart\hspace{1em}#1}%
    \else
      \addcontentsline{toc}{part}{#1}%
    \fi
    {\parindent \z@ \raggedright
     \interlinepenalty \@M
     \normalfont
     \ifnum \c@secnumdepth >\m@ne
       \Large\bfseries \partname\nobreakspace\thepart: #2%
     \else
       \Large\bfseries #2%
     \fi}%
    \markboth{}{}\par
    \nobreak
    \vskip 3ex
    \@afterheading}
\makeatother

\usepackage{tcolorbox}
\tcbuselibrary{raster,skins,breakable,theorems}

\definecolor{mycitecolor}{HTML}{3498DC}
\definecolor{mylinkcolor}{HTML}{E74D3B}
\definecolor{myurlcolor}{HTML}{980000}
\definecolor{mydarkgreen}{HTML}{6a994e}
\definecolor{myorange}{HTML}{E7730D}
\definecolor{myblue}{HTML}{4594c1}

\newcommand{\contribution}[1]{%
\begin{tcolorbox}[
    enhanced,
    colback=myblue!8!white,
    colframe=myblue,
    leftrule=2mm,
    rightrule=0mm,
    toprule=0mm,
    bottomrule=0mm,
    arc=0mm,
    left=5pt,
    right=5pt,
    top=0pt,
    bottom=2pt,
    breakable,
    leftlower=2mm,
    leftupper=2mm
]
\normalsize 
#1
\end{tcolorbox}
}

\icmltitlerunning{Revisiting Regularized Policy Optimization for Stable and Efficient Reinforcement Learning in Two-Player Games}

\begin{document}
\twocolumn[
  \icmltitle{Revisiting Regularized Policy Optimization for \\
  Stable and Efficient Reinforcement Learning in Two-Player Games}

  \icmlsetsymbol{equal}{*}

  \begin{icmlauthorlist}
  \icmlauthor{Kazuki Ota}{utokyo,riken}
  \icmlauthor{Takayuki Osa}{riken}
  \icmlauthor{Motoki Omura}{utokyo}
  \icmlauthor{Tatsuya Harada}{utokyo,riken}
  \end{icmlauthorlist}

  \icmlaffiliation{utokyo}{The University of Tokyo, Japan}
  \icmlaffiliation{riken}{RIKEN Center for Advanced Intelligence Project, Japan}

  \icmlcorrespondingauthor{Kazuki Ota}{ota@mi.t.u-tokyo.ac.jp}

  \icmlkeywords{Machine Learning, Reinforcement Learning, Regularized Policy Optimization, Two-Player Games, ICML}

  \vskip 0.3in
]

\printAffiliationsAndNotice{}  % no special notice (required even if empty)
\begin{abstract}
Two-player games such as board games have long been used as traditional benchmarks for reinforcement learning. This work revisits a policy optimization method with reverse Kullback-Leibler regularization and entropy regularization and analyzes this combination in two-player zero-sum settings from theoretical and empirical perspectives. From a theoretical perspective, we investigate the stability of the policy update rule in two theoretical settings: game-theoretic normal-form games and finite-length games. We provide novel convergence guarantees and verify our theoretical results through numerical experiments on synthetic games. From an empirical perspective, we derive a practical model-free reinforcement learning algorithm based on the regularized policy optimization. We validate the training efficiency of our algorithm through comprehensive experiments on five board games: Animal Shogi, Gardner Chess, Go, Hex, and Othello. Experimental results show that our agent learns more efficiently than existing methods across environments. 
\end{abstract}

\section{Introduction}

In the domain of two-player zero-sum games, search-based reinforcement learning (RL) methods, exemplified by AlphaZero \citep{Silver2018}, have established themselves as the state-of-the-art. While these approaches have achieved superhuman performance by combining deep neural networks with look-ahead search including Monte-Carlo Tree Search (MCTS), their success comes at a substantial computational cost. Indeed, it has been reported that the training process of AlphaZero requires more than 10 GPU-years to converge \citep{Silver2018, pmlr-v97-tian19a-elfopengo}. This massive demand for computational resources often hinders reproducibility and practical application in resource-constrained environments \citep{pmlr-v162-zhao22h}.

To address this issue, recent studies have proposed modified search-based algorithms that reduce the depth and rollout count of the tree search during training \citep{pmlr-v139-hessel21a, Danihelka2022}. While these methods mitigate the computational burden, they still fundamentally rely on look-ahead search to construct improved policy targets. In contrast, model-free RL methods, which are dominant in robotics and continuous control due to their simplicity and computational efficiency \citep{kroemer2021review, tang2025deep}, have not been fully explored in board games due to perceived instability in multi-agent settings. This raises a fundamental question: \textit{Can we design a pure model-free RL algorithm that achieves stable and competitive learning with significantly fewer training resources than search-based counterparts?}

We answer this question affirmatively by revisiting regularized policy optimization and analyzing it from both theoretical and empirical perspectives. Inspired by the insight that AlphaZero implicitly solves a policy optimization problem with KL regularization \citep{grill2020monte}, we identify that the instability of self-play RL in two-player games can be effectively tamed by a specific combination of regularization techniques: reverse Kullback-Leibler (KL) regularization for gradual policy updates and entropy regularization for sustained exploration. 
To theoretically validate the stability of this approach, we derive novel convergence guarantees for the resulting update rule in two theoretical settings: normal-form games and finite-length games. We also verify these theoretical results by numerical experiments.

Motivated by the theoretical insights, we investigate a practical model-free RL algorithm based on policy optimization with reverse-KL regularization and entropy regularization, which we refer to as KLENT. 
We validate the efficiency of our agent through comprehensive experiments on five board games: Animal Shogi, Gardner Chess, 9x9 Go, Hex, and Othello. 
Without model-based search during training, our agent achieved up to 4x higher training efficiency than existing approaches. %including AlphaZero. 
Our extensive ablation study verify the importance of each techniques.

The main contributions are summarized as follows:
\vspace{-0.5ex}
\contribution{
\begin{enumerate}
\setlength{\leftskip}{-2ex}
\item We revisit a regularized policy optimization problem with reverse-KL regularization and entropy regularization and derive policy update rule and practical RL algorithm based on it.
\item From theoretical perspective, we provide novel convergence guarantees on two settings: normal-form games and finite-length games and verify them with numerical experiments.
\item From empirical perspective, we demonstrate that our agent achieves more efficient learning than existing methods through comprehensive experiments on board games.
\end{enumerate}
}

We would like to clarify that our main contributions do not lie in proposing entirely new algorithmic ingredients. Instead, our contributions are new theoretical and empirical characterizations of this specific regularized policy optimization in two-player games.

Although we focus on board games in this study, our assumption is merely an MDP with a finite action space. This class of problems covers several real-world applications such as discrete optimization, algorithmic discovery, and mathematical proving \citep{Fawzi2022, Mankowitz2023, hubert2025olympiad}, where search-based methods such as AlphaZero are widely used. Our efficient algorithm may also achieve efficient learning in these practical domains, accelerating research on such real-world problems.

\section{Problem Setting}

In this study, we formulate two-player zero-sum games such as board games as reinforcement learning problems. 
Reinforcement learning (RL) \citep{sutton1998reinforcement} is a framework in which an agent learns a policy \(\pi\) through interactions with an environment to maximize an expected return. This framework can be formalized as a Markov Decision Process (MDP) \citep{bellman1957markovian}, consisting of a state space \(\mathcal{S}\), an action space \(\mathcal{A}\), a transition probability function \(P(s'|s, a)\), a reward function \(r(s, a)\), and a discount factor \(\gamma \in [0, 1]\). At each time step $t$, the agent selects an action $A_t \in \mathcal{A}$ based on its policy $\pi(A_t|S_t)$ and the current state $S_t \in \mathcal{S}$. In response, the environment transitions to the next state $S_{t+1} \in \mathcal{S}$ according to the transition probability $P(S_{t+1}|S_t, A_t)$ and provides a reward $R_t = r(S_t, A_t)$. The objective of the agent is to maximize the expected return \(\mathbb{E}_{(S_t, A_t, R_t) \sim (P,\pi)}\left[\sum_{t=0}^T \gamma^t R_t\right].\) 
Here, \(T\) represents the terminal timestep of an episode. 
The state-value function \(V^\pi(s) = \mathbb E_{(P,\pi)}[\sum_{t=0}^T \gamma^t R_t|S_0=s]\) and the action-value function \(Q^\pi(s, a) = \mathbb E_{(P,\pi)}[\sum_{t=0}^T \gamma^t R_t|S_0=s, A_0=a]\) can be used to evaluate and improve the policy~\(\pi\).

Board games are highly complex decision-making tasks as even human experts may spend minutes to hours deliberating on a single action. Due to this complexity, they have served as a canonical benchmark for artificial intelligence for several decades \citep{samuel1959some, tesauro1995temporal, Campbell2002DeepB, Silver2016, yannakakis2018artificial}. Two-player zero-sum games, including board games, can be formulated as an MDP. The reward is assigned as \(R_T = +1\) for a win, \(R_T = -1\) for a loss, and \(R_T = 0\) for a draw, while rewards at all other timesteps \(t \in \{0, 1, \dots, T-1\}\) are zero. By setting the discount factor \(\gamma\) to \(1\), we ensure that the final outcome of the game is directly reflected in the expected return. 

\begin{figure*}[t]
    \centering
    \begin{subfigure}[b]{0.45\linewidth}
        \centering
        \includegraphics[width=\linewidth, clip, trim={50 120 440 320}]{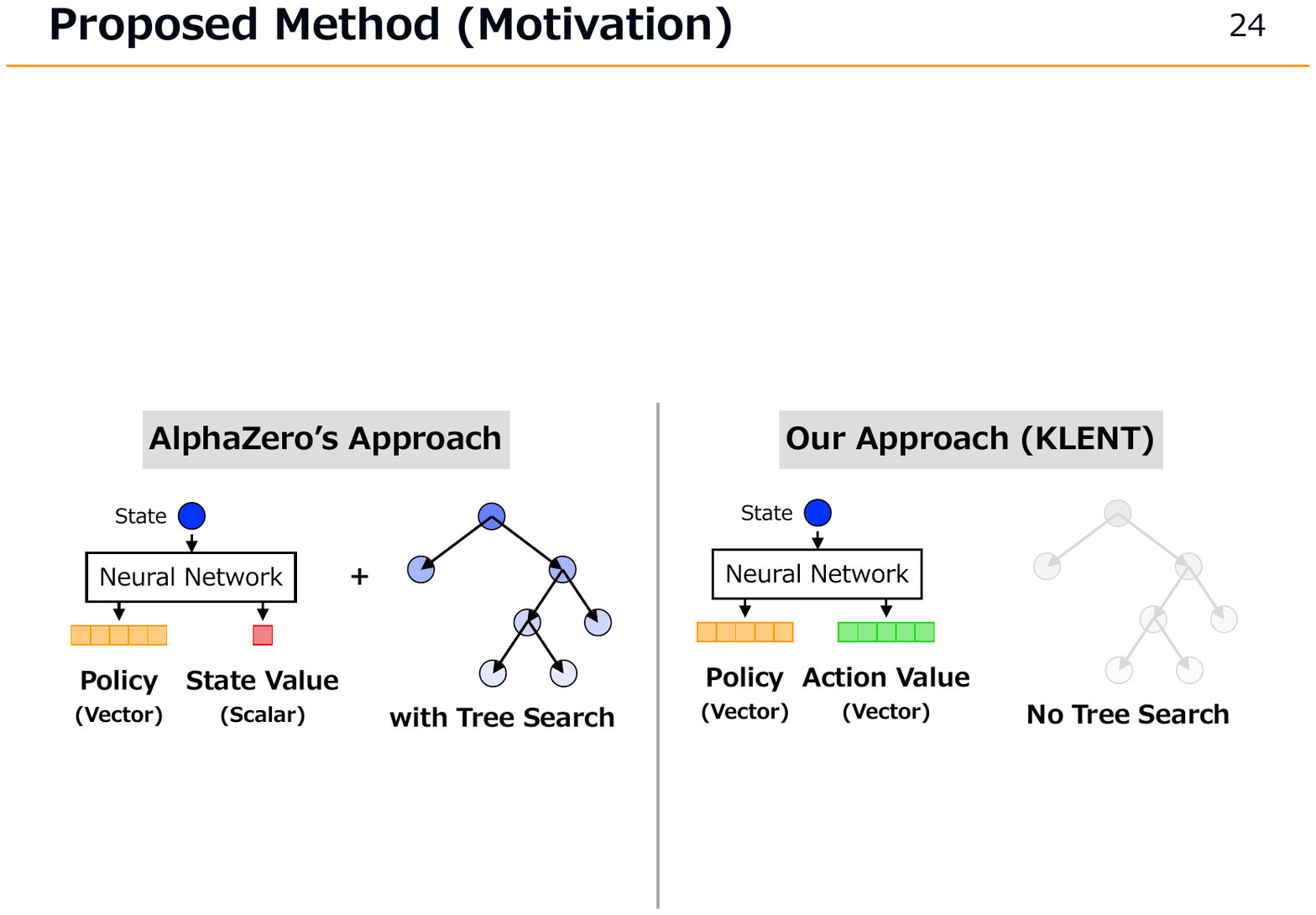}
        \caption{Search-based agent}
        \label{fig:concept_alphazero}
    \end{subfigure}
    \hfill
    \makebox[0pt][c]{\rule{0.4pt}{0.22\linewidth}}
    \hfill
    \begin{subfigure}[b]{0.45\linewidth}
        \centering
        \includegraphics[width=\linewidth, clip, trim={440 120 50 320}]{figure1.pdf}
        \caption{Our agent}
        \label{fig:concept_klent}
    \end{subfigure}
    
    \caption{
    Conceptual comparison between search-based approaches and our agent KLENT.
    \textit{(a)} Search-based methods such as AlphaZero model the policy and the state-value function \(V(s)\), and use tree search to estimate the action-value function \(Q(s,a)\).
    \textit{(b)} KLENT, by contrast, directly models both the policy and the action-value function using neural networks, eliminating the need for search.
    }
    \label{fig:concept}
\end{figure*}

\section{Related Work}

\subsection{Regularized Policy Optimization}
Several RL algorithms have been proposed based on the paradigm of regularized policy optimization, which can generally be formulated as follows:
\begin{equation}
\underset{\pi'}{\text{maximize}} \; \mathbb{E}_{A \sim \pi'(\cdot|s)}[Q^\pi(s, A)] - \mathcal R(\pi').
\end{equation}
Here, \(\pi'\) is the optimized policy, \(\pi\) is the prior policy, and \(\mathcal R(\pi')\) is the regularization term. For example, if we define the regularization term as \(\mathcal R(\pi') = -\alpha H(\pi')\), where \(H(\pi')\) is the entropy of the optimized policy \(\pi'\), the optimal solution corresponds to a softmax policy \(\pi(a|s) \propto \exp(Q^\pi(s, a)/\alpha)\). This policy has long been adopted in prior studies, including classical approaches such as REINFORCE \citep{williams1992simple} and SARSA \citep{rummery1994line, van2009theoretical}. 
Soft Q-Learning \citep{haarnoja2017reinforcement} and SAC \citep{haarnoja2018soft} are methods that treat the entropy term as additional rewards.

Alternatively, if we define the regularization term as the difference between prior policy \(\pi\) and optimized policy \(\pi'\), we can make the policy updates gradual. 
For example, TRPO \citep{schulman2015trpo} and one variant of PPO \citep{schulman2017proximal} use the forward KL divergence \(\mathcal R(\pi') = \beta D_\text{KL}(\pi\|\pi') \) and MPO \citep{abdolmaleki2018maximum} use the reverse KL divergence \(\mathcal R(\pi') = \beta D_\text{KL}(\pi'\|\pi)\). 
Entropy regularization can also be combined to enhance exploration for these methods.
Interestingly, \citet{grill2020monte} have pointed out that AlphaZero \citep{Silver2018} is also approximately solving a policy optimization problem with KL regularization.

For the methods that leverage both reverse KL regularization and entropy regularization, \citet{Vieillard2020leverage}, \citet{sokota2022a}, and \citet{zhan2023policy} provided the theoretical properties of this combination. 
Following these lines of research, we provide convergence guarantee on this combination of regularizations and validate it with numerical experiments.

\subsection{Self-Play RL in Two-Player Games}
Search-based approaches have demonstrated strong performance in two-player games such as board games. 
One of the most well-known algorithms is AlphaGo \citep{Silver2016}. It combined supervised pre-training with human expert game records and fine-tuning by RL with MCTS, defeating a human world champion in the game of Go. AlphaGo Zero \citep{Silver2017MasteringTG} eliminated the need for supervised pre-training, and AlphaZero \citep{Silver2018} extended it to general perfect-information finite-action games. Its generality has enabled applications in other fields, including mathematical and algorithmic discovery \citep{Fawzi2022, Mankowitz2023}. 
Subsequent studies of AlphaZero \citep{Schrittwieser2020, pmlr-v139-hubert21a, ozair2021vector, schrittwieser2021online} have extended its applicability to a wider range of RL settings, such as continuous action spaces and partial observations.

While these search-based approaches are powerful, their computational demand has been noted as a limitation~\citep{pmlr-v162-zhao22h}.
To address this issue, \citet{pmlr-v139-hessel21a} and \citet{Danihelka2022} proposed methods that reduce tree depth and rollout count, respectively.
Our work shares the goal of achieving efficient learning with these prior studies, but takes a more drastic approach. While previous methods reduce the amount of look-ahead search during training, we aim to completely eliminate it. 
In this sense, our method can be regarded as the zero-search limit of this line of research. Conceptual comparison between search-based methods and KLENT is illustrated in Figure~\ref{fig:concept}.

Another line of research aims to enhance game-playing agents by incorporating game-specific knowledge. 
This approach has been adopted in both perfect-information games \citep{stockfish, edaxgithub, wu2020accelerating} and imperfect-information games \citep{moravvcik2017deepstack, li2020suphx, perolat2022mastering, bakhtin2023mastering}, leading to strong performance. 
However, our aim is to design a game-agnostic pure RL algorithm, which distinguishes our work from these prior studies.

\section{KL and Entropy Regularized Policy Optimization}

In this study, we investigate KL and Entropy Regularized Policy Optimization (KLENT). 
In Section \ref{sec:Derivation of Improved Policy}, we describe our policy update rule, detailing our policy optimization problem and the solution to it. In Section \ref{sec:Learning Action-Value Function}, we explain value function learning methodology, utilizing \(\lambda\)-returns to stabilize the learning process. Section \ref{sec:Overall Algorithm} presents the practical RL algorithm with a pseudo code.

\subsection{Policy Update Rule}\label{sec:Derivation of Improved Policy}
In regularized policy optimization, it is well established that reverse KL regularization yields gradual updates and entropy regularization maintains exploration.
In self-play, optimizing a policy against constantly changing opponents is a non-stationary problem requiring gradual updates to prevent abrupt policy changes. Furthermore, addressing the train-test distribution shift from unseen test-time opponents requires moderate exploration to prevent over-fitting to the policy of the agent itself. Reverse KL and entropy regularizations address non-stationarity and distribution shift, respectively.

Leveraging these two regularizers, we consider the following regularized policy optimization problem.
\begin{equation}\label{policy_optimization_problem}
\begin{split}
\underset{\pi'}{\text{maximize}} \; & \mathbb{E}_{A \sim \pi'(\cdot|s)}[Q^\pi(s, A)] \\
& - \beta D_\text{KL}(\pi'(\cdot|s)\|\pi(\cdot|s))  + \alpha H(\pi'(\cdot|s)).
\end{split}
\end{equation}

Here, $D_\text{KL}(\pi' \| \pi)$ is the reverse KL divergence between the new policy $\pi'$ and the current policy $\pi$, and $H(\pi')$ is the entropy of $\pi'$. The coefficients \(\alpha\) and \(\beta\) are the non-negative scalar hyperparameters which control the strength of the regularization terms. Leveraging the fact that the action space $\mathcal A$ of board games is finite, the optimal solution $\pi'$ can be analytically derived in the following closed-form expression:
\begin{equation}\label{update_rule}
\pi'(a|s) = \frac{1}{Z(s)}\exp\left(\frac{Q^\pi(s, a)+\beta \log \pi(a|s)}{\alpha+\beta}\right), 
\end{equation}
where \(Z(s) = \sum_{a \in \mathcal A} \exp\left(\frac{Q^\pi(s, a)+\beta \log \pi(a|s)}{\alpha+\beta}\right)\) is a normalization term to ensure that \(\pi'(\cdot|s)\) is a probability distribution. 
Appendix \ref{sec:app:proof} provides the detailed derivation of this optimal solution. In KLENT, this analytically obtained policy \(\pi'\) is used for action selection during the training.

We model the policy as \(\pi_\theta(a|s)\) with a neural network. When updating the parameter \(\theta\), the analytically obtained optimal policy \(\pi'(\cdot|s)\) is used as the learning target, and fitting of \(\theta\) is conducted to minimize the cross-entropy \(-\sum_{a\in \mathcal A} \pi'(a|s) \log \pi_{\theta}(a|s)\). 

\subsection{Learning Action-Value Function}\label{sec:Learning Action-Value Function}
In self-play methods for two-player games such as board games, a common approach is to model the state-value function \(V^\pi(s)\). One then runs MCTS and backs up state values along future trajectories to estimate action values at the current state. These action-value estimates are used to update the policy at the current state \citep{Silver2018, grill2020monte, Danihelka2022}. In contrast, KLENT directly parameterize the action-value function \(Q^\pi(s,a)\) with a neural network and use it to compute Equation \ref{update_rule}, as mentioned in Figure~\ref{fig:concept}. This enables model-free learning without relying on MCTS.

\begin{figure}[b]
    \vspace{-1ex}
    \centering
    \includegraphics[width=0.90\linewidth, clip, trim={0 8 0 0}]{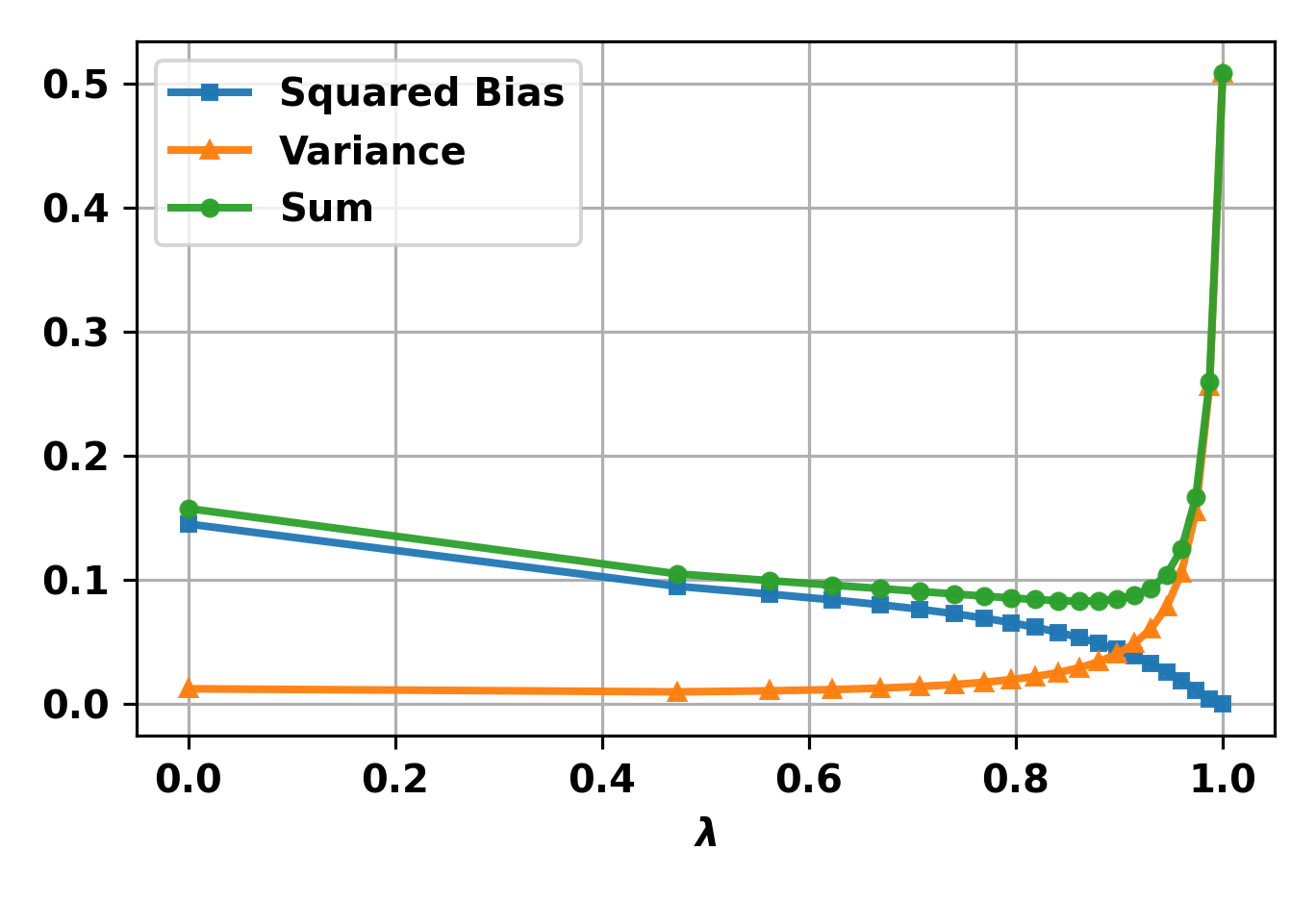}
    \vspace{-1ex}
    \captionof{figure}{Results of preliminary experiments on bias-variance tradeoff in 9x9 Go. Larger \(\lambda\) reduces squared bias but increases variance, and an intermediate \(\lambda\) minimizes their sum.}
    \label{fig:bias_variance}
\end{figure}

In prior work \citep{Silver2018, grill2020monte, Danihelka2022}, it is common to use the Monte Carlo return as the learning target for the value function. However, in other areas of reinforcement learning, \(\lambda\)-returns are also used since they can reduce variance \citep{Sutton1988}. 
We conducted preliminary experiments on 9x9 Go using a pretrained checkpoint \citep{Koyamada2023}, with results illustrated in Figure \ref{fig:bias_variance}. We have confirmed that even in a board game environment, an intermediate \(\lambda \in (0,1)\) minimizes the sum of squared bias and variance better than \(\lambda=1\), which corresponds to the Monte Carlo return.
Motivated by this observation, we use the \(\lambda\)-return \(G^\lambda\) as the learning target for the value function in this work.

\begin{figure*}[t!]
    \centering
    \begin{subfigure}[b]{0.24\linewidth}
        \centering
        \includegraphics[height=\linewidth, clip, trim={20 50 700 410}]{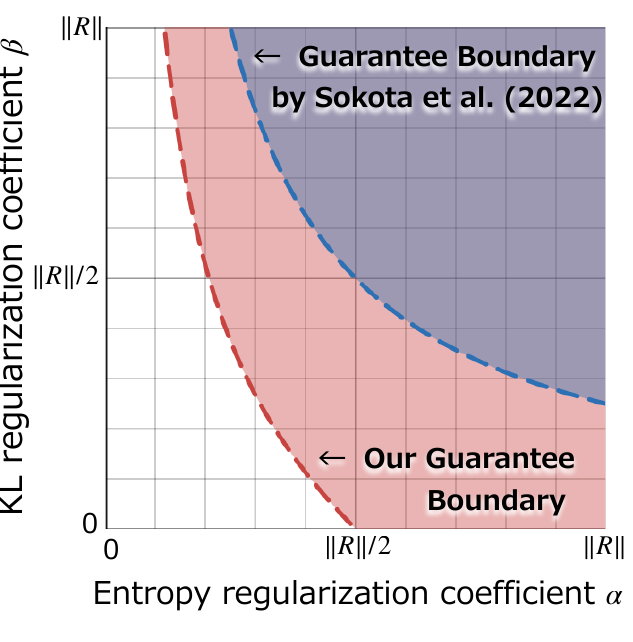}
        \caption{Theoretical results}
        \label{fig:comparison_to_sokota_in_section_five}
    \end{subfigure}
    \hfill
    \begin{subfigure}[b]{0.24\linewidth}
        \centering
        \includegraphics[height=\linewidth, clip, trim={30 50 700 410}]{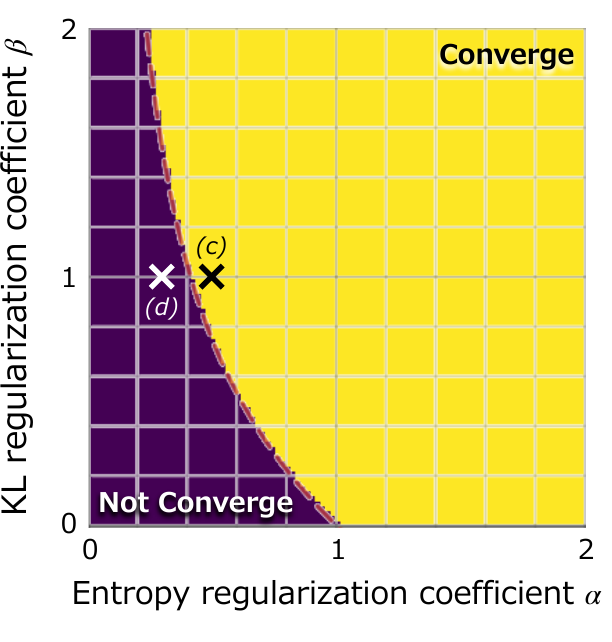}
        \caption{Experimental results}
        \label{fig:mp_landscape}
    \end{subfigure}
    \hfill
    \begin{subfigure}[b]{0.24\linewidth}
        \centering
        \includegraphics[height=\linewidth, clip, trim={680 50 50 410}]{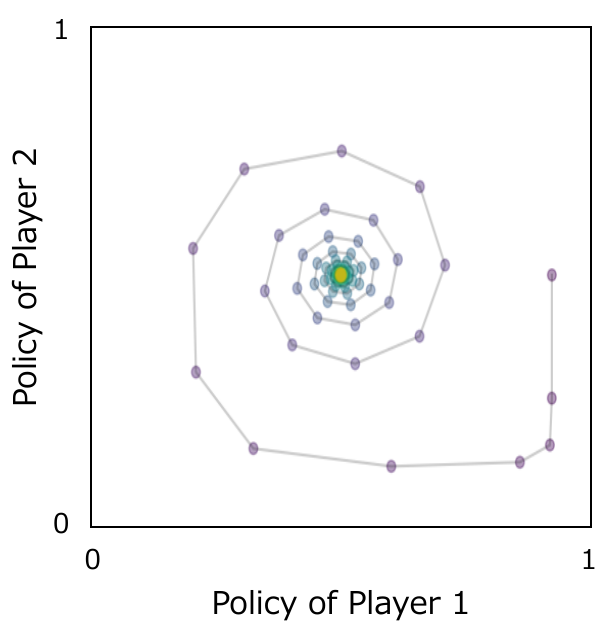}
        \caption{Converging trajectory}
        \label{fig:mp_stable}
    \end{subfigure}
    \hfill
    \begin{subfigure}[b]{0.24\linewidth}
        \centering
        \includegraphics[height=\linewidth, clip, trim={355 50 375 410}]{matching_pennies_3.pdf}
        \caption{Oscillating trajectory}
        \label{fig:mp_cycle}
    \end{subfigure}
    \caption{
        Theoretical and experimental results on the normal-form game. 
        \textit{(a)} Theorem \ref{thm:normal_form_convergence} guarantees convergence in red area.
        \textit{(b)} The yellow and purple areas indicate convergence and non-convergence in numerical experiments respectively.
        \textit{(c)} With $(\alpha, \beta)=(0.5, 1)$, which satisfies condition~\eqref{eq:normal_form_convergence}, the policies converge.
        \textit{(d)} With $(\alpha, \beta)=(0.3, 1)$, which violates condition~\eqref{eq:normal_form_convergence}, the policies oscillate.
    }
    \label{fig:matching_pennies}
    \vspace{-2ex}
\end{figure*}

\begin{figure*}[t]
  \centering
  \begin{minipage}[t]{0.35\textwidth}
    \centering
    \includegraphics[width=\linewidth, clip, trim={520 0 0 450}]{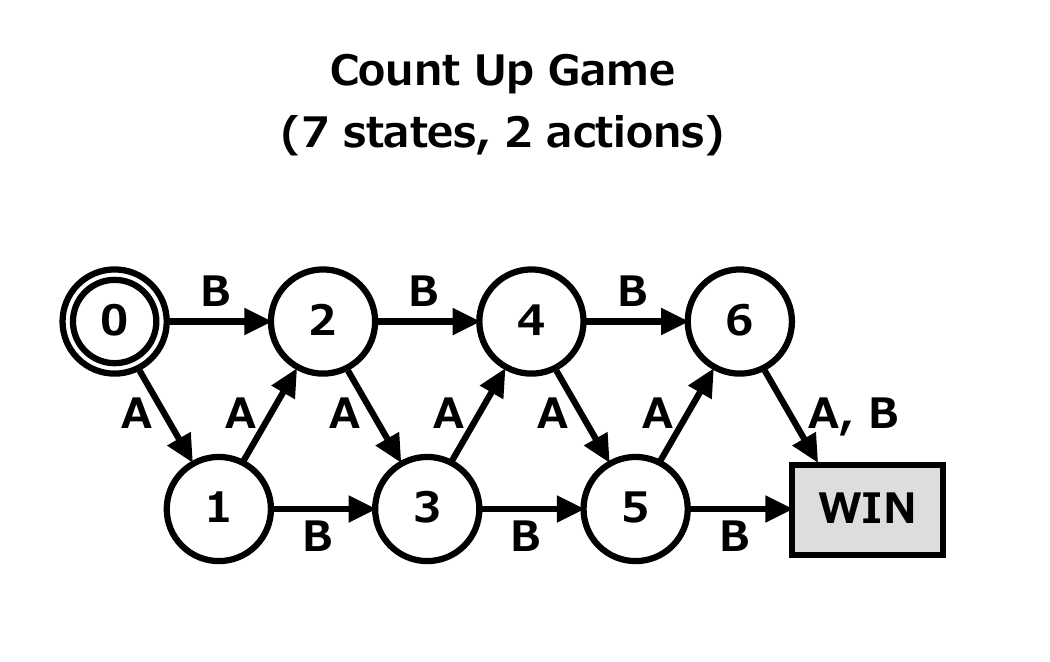}
  \end{minipage}
  \hfill
  \begin{minipage}[t]{0.63\textwidth}
    \centering
    \includegraphics[width=\linewidth, clip, trim={230 0 0 450}]{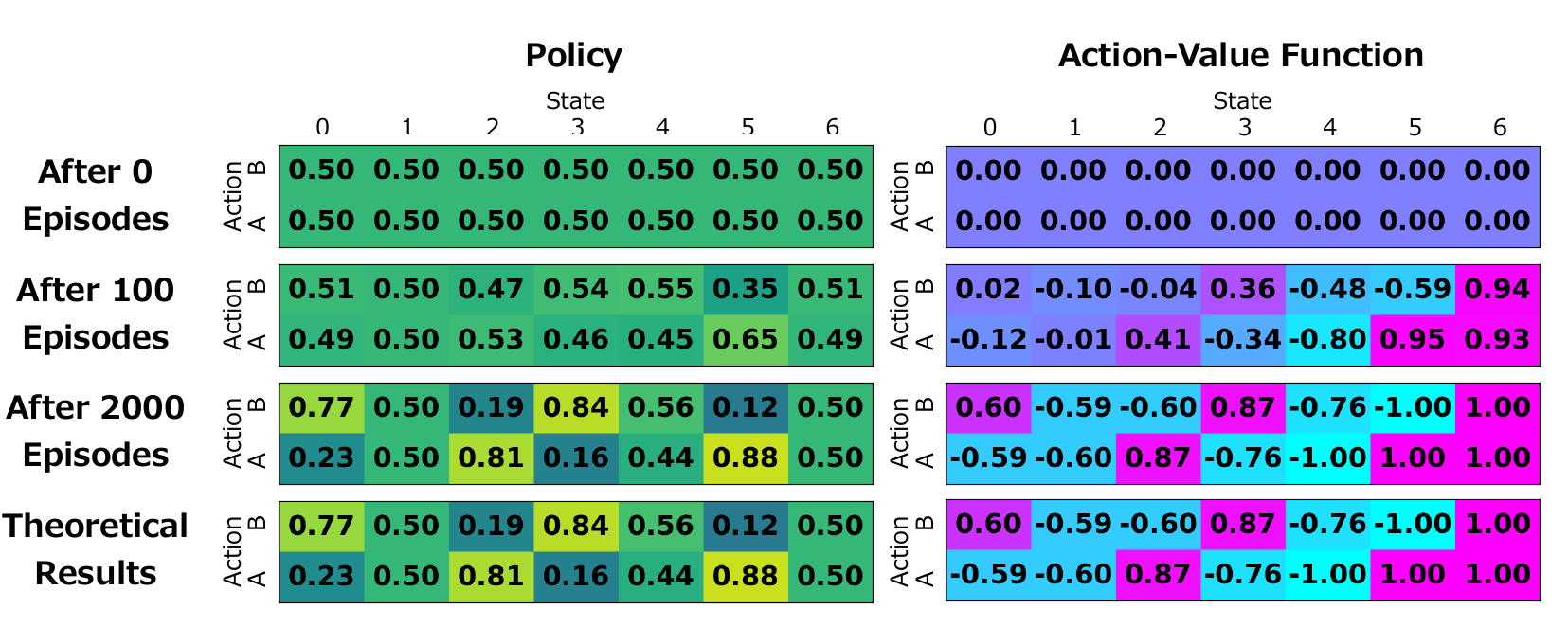}
  \end{minipage}
  \caption{Experimental results on the finite-length game. Left: The transition diagram of the Count Up Game with 7 states and 2 actions. Right: The evolution of the policy and action-value function and theoretically expected convergence limit.}
  \label{fig:count_up}
\end{figure*}

\subsection{Practical Algorithm}\label{sec:Overall Algorithm}

The practical algorithm of our agent KLENT is illustrated in Algorithm \ref{pseudo_code}. Starting from randomly initialized networks, KLENT updates the policy \(\pi_\theta\) and the action-value function \(Q_\theta\) alternating a self-play phase for data collection and a fitting phase for network updates.

In the self-play phase, the goal is to populate the on-policy sample buffer \(\mathcal D\). During the episode, the actions are sampled from the policy \(\pi'\) using the current networks \(\pi_\theta\) and \(Q_\theta\). 
After the episode terminates, the \(\lambda\)-return \(G^\lambda_t\) is computed for all timesteps \(t\).
Samples are collected by repeatedly running episodes until the number of samples in the buffer reaches a predefined capacity. 

In the fitting phase, the data accumulated in the buffer \(\mathcal D\) is used to update the network parameter \(\theta\). The loss function \(L(\theta)\) is defined as follows:
{\small
\begin{equation}\label{loss_function}
    L(\theta) = \mathbb E_{\mathcal D}\bigg[ -\sum_{a\in \mathcal A} \pi'(a|S) \log \pi_{\theta}(a|S) + (Q_{\theta}(S, A)-G^\lambda)^2\bigg].
\end{equation}
}

Here, \(\mathbb{E}_{\mathcal D}[\cdot]\) indicates that \((S, A, (\pi'(a | S))_{a \in \mathcal A}, G^\lambda)\) are sampled from the buffer \(\mathcal D\). This loss function is designed to simultaneously optimize the policy and action-value networks, with the analytically obtained policy \(\pi'(\cdot|S)\) and \(\lambda\)-return \(G^\lambda\) serving as targets for learning. By iterating these self-play and fitting phases, the policy \(\pi_\theta\) and the action-value function \(Q_\theta\) are progressively refined and eventually become strong.

\begin{algorithm}[H]
  \caption{KLENT Algorithm}
  \label{pseudo_code}
  \begin{algorithmic}[1]
    \STATE Initialize the policy network $\pi_{\theta}(a|s)$.
    \STATE Initialize the action-value network $Q_{\theta}(s,a)$.
    \REPEAT
      \STATE $\mathcal D \gets \{\}$ %\COMMENT{on-policy sample buffer}
      \REPEAT
        \STATE Initialize the state $S_0$.
        \FOR{$t = 0, \dots, T$}
          \STATE $\displaystyle
                 \pi'(a |S_t) \propto
                 \exp\Bigl(
                   \tfrac{Q_{\theta}(S_t,a)+\beta \log \pi_{\theta}(a |S_t)}{\alpha+\beta}
                 \Bigr)$
          \STATE $\hat v_t \gets \mathbb E_{A \sim \pi'(\cdot |S_t)}
                 \bigl[\,Q_{\theta}(S_t,A)\bigr]$
          \STATE Sample $A_t \sim \pi'(\cdot |S_t)$.
          \STATE Execute $A_t$ and observe $(S_{t+1}, R_t)$.
        \ENDFOR
        \STATE Compute $\lambda$-returns
               $\{G_t^{\lambda}\}_{t=0}^{T}$.
        \STATE $\mathcal D \gets \mathcal D \cup
               \bigl\{
                 (S_t, A_t,
                 (\pi'(a |S_t))_{a \in \mathcal A},
                 G_t^{\lambda})
               \bigr\}_{t=0}^{T}$
      \UNTIL{$\mathcal D$ reaches a predefined capacity.}
      \STATE Update $\theta$ by minimizing $L(\theta)$ in \eqref{loss_function}.
    \UNTIL{convergence.}
  \end{algorithmic}
\end{algorithm}

\section{Theoretical Analysis}\label{sec:theoretical_analysis}

In this section, we investigate theoretical aspects of KLENT, especially its convergence properties. Section~\ref{sec:normal_form_games} provides a theoretical analysis of normal-form games, a standard setting in game theory. Section~\ref{sec:finite_length_games} presents an analysis of finite-length games, including sequential turn-based board games. In both sections, we focus specifically on two-player zero-sum games.

\subsection{Normal-Form Games}\label{sec:normal_form_games}
In this section, we investigate the convergence property of our policy update rule in normal-form games. This setting is standard in game-theoretic analysis and aligns with the theoretical setting used in prior work \citep{sokota2022a}. We consider a two-player zero-sum game defined by a payoff matrix \(R\), where both players update their policies according to Equation~\ref{update_rule}. Under this setting, the following theorem holds regarding the local linear convergence to the unique fixed point.

\begin{theorem}\label{thm:normal_form_convergence}
    The policy update rule in Equation~\ref{update_rule} is locally linearly convergent to the unique fixed point if the following condition is satisfied:
    \begin{equation}\label{eq:normal_form_convergence}
    \alpha(\alpha + 2\beta) > \|R\|_2^2/4.
    \end{equation}
\end{theorem}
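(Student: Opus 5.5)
Write player 1's policy as $x\in\Delta_m$ and player 2's policy as $y\in\Delta_n$. Player 1 receives the payoff $x^\top R y$, so $Q_1=Ry$ and $Q_2=-R^\top x$. The update in Equation~\ref{update_rule} then takes the form
\[
\log x' = \tfrac{\beta}{\alpha+\beta}\log x + \tfrac{1}{\alpha+\beta}Ry - c_1, \qquad \log y' = \tfrac{\beta}{\alpha+\beta}\log y - \tfrac{1}{\alpha+\beta}R^\top x - c_2,
\]
where $c_1,c_2$ are normalizing constants. The argument has three steps: identify the fixed point, linearize the map there, and bound the spectral radius of the linearization.

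\emph{Fixed point.} At a fixed point we have $\alpha\log x = Ry + \text{const}$ and $\alpha\log y = -R^\top x + \text{const}$. These are exactly the first-order conditions of the entropy-regularized saddle problem $\max_x\min_y\, x^\top R y + \alpha H(x) - \alpha H(y)$. That objective is strictly concave in $x$ and strictly convex in $y$, so the saddle point $(x^*,y^*)$ exists, is unique, and lies in the interior of the simplices. Hence the fixed point of the update is unique. It is the logit quantal response equilibrium.

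\emph{Linearization.} I will work in logit coordinates, i.e.\ $u=\log x$ restricted to the tangent space $\{\delta : \mathbf 1^\top \delta = 0\}$ modulo constants. To first order, a perturbation $\delta u$ gives $\delta x = D_x\,\delta u$ with $D_x = \mathrm{diag}(x^*) - x^* x^{*\top}$. The Jacobian is then
\[
J = \frac{1}{\alpha+\beta}\begin{pmatrix} \beta I & P_1 R D_y \\ -P_2 R^\top D_x & \beta I\end{pmatrix},
\]
where $P_i$ are the projections that remove constants. To symmetrize, I change variables to $\tilde u = D_x^{1/2}\delta u$ and $\tilde v = D_y^{1/2}\delta v$. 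This makes $J$ similar to $\frac{1}{\alpha+\beta}(\beta I + K)$ with $K = \begin{pmatrix}0 & M\\ -M^\top & 0\end{pmatrix}$ and $M = D_x^{1/2} R D_y^{1/2}$. Since $K$ is skew-symmetric, its eigenvalues are $\pm i\sigma$, where $\sigma$ ranges over the singular values of $M$. Therefore
\[
|\lambda|^2 = \frac{\beta^2+\sigma^2}{(\alpha+\beta)^2}.
\]

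\emph{Spectral bound.} Local linear convergence follows if $|\lambda|<1$ for every eigenvalue, i.e.\ if $\sigma^2 < \alpha^2 + 2\alpha\beta = \alpha(\alpha+2\beta)$. It remains to show $\sigma_{\max}(M)^2 \le \|R\|_2^2/4$, which follows from $\|D_x^{1/2}\|_2 \le 1/2$. This holds because $D_x$ is the covariance matrix of a one-hot vector: its largest eigenvalue is at most $\max_i x_i(1-x_i) \le 1/4$. I will check this through the quadratic form on the tangent space, $\delta^\top D_x \delta = \mathrm{Var}_{x}(\delta)$. Its maximum over unit $\delta$ is attained by splitting mass between two coordinates, and $x_ix_j(\cdot)$-type bounds give at most $1/4$. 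Combining the three steps, condition~\eqref{eq:normal_form_convergence} gives $\rho(J)<1$, and the standard linearization theorem for smooth maps (Ostrowski) yields local linear convergence at rate $\rho(J)$.

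The main obstacle is making the similarity transformation rigorous on the constrained manifold. $D_x$ is singular along $\mathbf 1$, so it must be handled on the tangent quotient space. I also need to confirm that the off-diagonal blocks truly become $M$ and $-M^\top$, so that $K$ is skew. If the exact skew structure fails because of the projections, the fallback is to bound $\|J\|$ in the weighted norm $\|\delta u\|_{D_x}+\|\delta v\|_{D_y}$, which gives the same inequality.
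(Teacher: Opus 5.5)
The step that fails is your final bound. You claim $\lambda_{\max}(D_x)\le\max_i x_i(1-x_i)\le 1/4$, i.e.\ $\|D_x^{1/2}\|_2\le 1/2$. This is false, and the reasoning runs the wrong way. For any symmetric matrix, the Rayleigh quotient at $e_i$ gives $\lambda_{\max}\ge (D_x)_{ii}$, so the diagonal entries $x_i(1-x_i)$ bound the top eigenvalue from below, not above.

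Concretely, $x=(1/2,1/2)$ and $\delta=(1,-1)/\sqrt2\in\mathbf 1^\perp$ give $\delta^\top D_x\delta=\mathrm{Var}_x(\delta)=1/2$. This is exactly the Matching Pennies fixed point. There $D_x=D_y=R/4$, and $M=D_x^{1/2}RD_y^{1/2}$ has $\|M\|_2=1=\|R\|_2/2$, whereas your lemma would force $\|M\|_2\le\|R\|_2/4$. Your bookkeeping is also inconsistent: two factors each bounded by $1/2$ would give $\|R\|_2^2/16$, not $\|R\|_2^2/4$.

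The correct and sharp bound is $\|D_x\|_2\le 1/2$, which the paper obtains from Popoviciu's inequality. For unit $\delta$, $\mathrm{Var}_x(\delta)\le(\max_i\delta_i-\min_j\delta_j)^2/4\le(\sqrt2)^2/4=1/2$, since $\max_i\delta_i-\min_j\delta_j\le\|e_i-e_j\|_2\|\delta\|_2$. Then $\|M\|_2\le\|D_x^{1/2}\|_2\|R\|_2\|D_y^{1/2}\|_2\le\|R\|_2/2$. This is precisely what the condition $\alpha(\alpha+2\beta)>\|R\|_2^2/4$ requires, and the Matching Pennies example shows there is no slack.

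With that repair, your proof coincides with the paper's: logit coordinates on $\mathbf 1^\perp$, the same Jacobian, conjugation by $\mathrm{diag}(D_x^{1/2},D_y^{1/2})$ to $\frac{1}{\alpha+\beta}(\beta I+K)$ with $K$ skew, and the value $\sqrt{\beta^2+\|M\|_2^2}/(\alpha+\beta)$. The paper phrases this as the operator norm in the weighted norm $\sqrt{\delta u^\top D_x\delta u+\delta v^\top D_y\delta v}$, followed by a contraction lemma, rather than spectral radius plus Ostrowski. Since $\beta I+K$ is normal, the two numbers agree.

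The obstacle you flag is not one. Because $D_x^{1/2}\mathbf 1=0$, we have $D_x^{1/2}P_1=D_x^{1/2}$, and $D_x^{1/2}$ maps $\mathbf 1^\perp$ bijectively onto itself. So the conjugated blocks are exactly $M$ and $-M^\top$. Your fallback, however, would not give the same inequality. Under the sum norm $\|\delta u\|_{D_x}+\|\delta v\|_{D_y}$, the linearization has induced norm $(\beta+\|M\|_2)/(\alpha+\beta)$, which yields only $\|M\|_2<\alpha$ and loses the $2\alpha\beta$ term. You need the $\ell^2$ combination of the two weighted norms, as in the paper. Your saddle-point argument for uniqueness of the fixed point is correct, and it supplies something the paper only asserts.
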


This condition is illustrated in Figure~\ref{fig:comparison_to_sokota_in_section_five}. For comparison, the convergence condition derived in \citet{sokota2022a} corresponds to \(\alpha\beta > \|R\|_2^2\). As shown in the figure, our result covers a broader range of regularization coefficients \((\alpha, \beta)\) than their result. The detailed statement and proof are provided in Appendix \ref{sec:app:normal_form_games}. Our proof strategy involves analyzing the spectral radius of the Jacobian of the update operator around the fixed point. We prove that the operator norm is less than 1 under the condition above.

To verify this theoretical result, we conducted numerical experiments on the Matching Pennies game \citep{Robert1992}, defined by \(R=\left(\begin{smallmatrix}1 & -1\\ -1 & 1\end{smallmatrix}\right)\), which has a spectral norm of \(\|R\|_2=2\). 
We tried various combinations of \(\alpha\) and \(\beta\) to check convergence and the results are shown in Figure \ref{fig:mp_landscape}. 
By comparing the theoretical boundary in Figure \ref{fig:comparison_to_sokota_in_section_five} and the experimental results in Figure \ref{fig:mp_landscape}, we observe that the boundary between convergence and divergence in the experiments matches our theoretical condition.
It can also be observed that the case \((\alpha, \beta)=(0, 0)\) results in non-convergence, highlighting the necessity of regularization for stable learning.
Figures \ref{fig:mp_stable} and \ref{fig:mp_cycle} further illustrate typical policy trajectories, showing stable convergence inside the boundary and limit cycles outside it.

\begin{table*}[t]
  \caption{Five board game environments used for the experiments. Branching factor is the average number of legal actions over states encountered in random rollouts generated by baseline agents in Pgx~\citep{Koyamada2023}.}
  \label{table:game-environments}
  \centering
  \small
  \begin{tabular}{c|ccccc}
    \toprule
    \textbf{Game Name} & \textbf{Animal Shogi} & \textbf{Gardner Chess} & \textbf{9x9 Go} & \textbf{Hex} & \textbf{Othello} \\ 
    \midrule
    \multirow{2}{*}{\textbf{Initial State}} & 
    \adjustbox{valign=c}{\includegraphics[width=2.0cm]{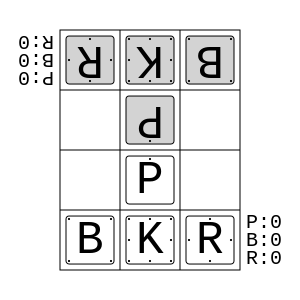}} & 
    \adjustbox{valign=c}{\includegraphics[width=2.0cm]{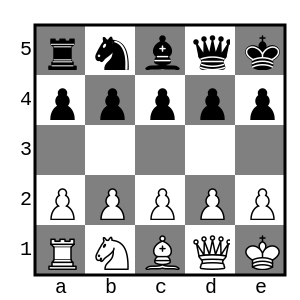}} & 
    \adjustbox{valign=c}{\includegraphics[width=2.0cm]{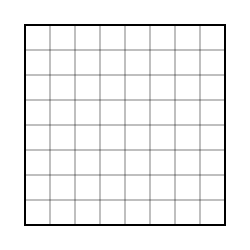}} & 
    \adjustbox{valign=c}{\includegraphics[width=2.4cm]{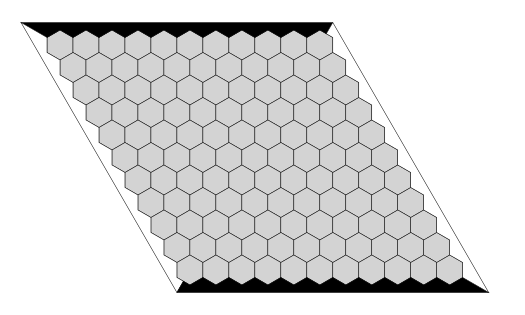}} & 
    \adjustbox{valign=c}{\includegraphics[width=2.0cm]{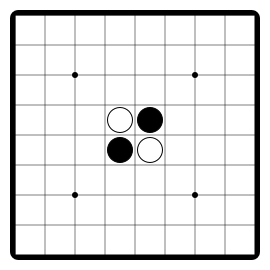}} \\ 
     & & & & & \\ 
    \midrule
    \textbf{Observation Shape} & (4, 3, 194) & (5, 5, 115) & (9, 9, 17) & (11, 11, 4) & (8, 8, 2) \\ 
    \midrule
    \textbf{Action Space Size} & 132 & 1225 & 82 & 122 & 65 \\ 
    \midrule
    \textbf{Branching Factor} & 7.5 & 9.5 & 42.3 & 90.6 & 8.0 \\
    \bottomrule
  \end{tabular}
\vspace{-1.5ex}
\end{table*}

\subsection{Finite-Length Games}\label{sec:finite_length_games}
In this section, we discuss the convergence property of the overall KLENT algorithm in sequential two-player zero-sum games. Specifically, we assume that the game length is finite. 
That is, there exists a positive integer \(T_{\rm max}\) such that the terminal timestep \(T\) always satisfies \(T \le T_{\rm max}\)\footnote{This assumption implies that the reachable state-transition graph is acyclic, as the presence of a reachable directed cycle would allow trajectories of unbounded length.}.
This assumption can be justified for board games as games like Hex and Othello naturally terminate when the board fills up and others like Chess are typically truncated by rules to ensure finiteness.

Under this assumption, we prove that the policy of the KLENT agent converges to the entropy-regularized optimal policy, which satisfies the following equation:
\begin{equation}
\pi(a|s) = \frac1{Z(s)}\exp(Q^\pi(s, a)/\alpha).
\end{equation}
As \(\alpha \to 0\), this regularized equilibrium approaches a Nash equilibrium of the original game \citep{mckelvey1995quantal}.
The detailed statement and proof are provided in Appendix \ref{sec:app:finite_length_games}. We prove this convergence via backward induction from terminal states where fixed terminal values propagate stability back to the root.

We empirically verified these theoretical results using a Count Up Game as a synthetic testbed. As illustrated in the left of Figure~\ref{fig:count_up}, this game consists of 7 states with 2 actions each, where the player who reaches the WIN state wins the game. 
We ran KLENT on this environment and presented the evolution of learned policy and action-value function in the right of Figure~\ref{fig:count_up}. 
After 2000 episodes, the learned policy and action-value function align with the theoretical results, confirming the consistency of our analysis. Furthermore, the snapshot at 100 episodes reveals that the policy and values near the terminal states are learned earlier than those near the start, which is consistent with the backward propagation mechanism described in our proof.

\begin{figure}[h]
    \centering
    \includegraphics[width=0.95\linewidth, clip, trim={10 10 10 10}]{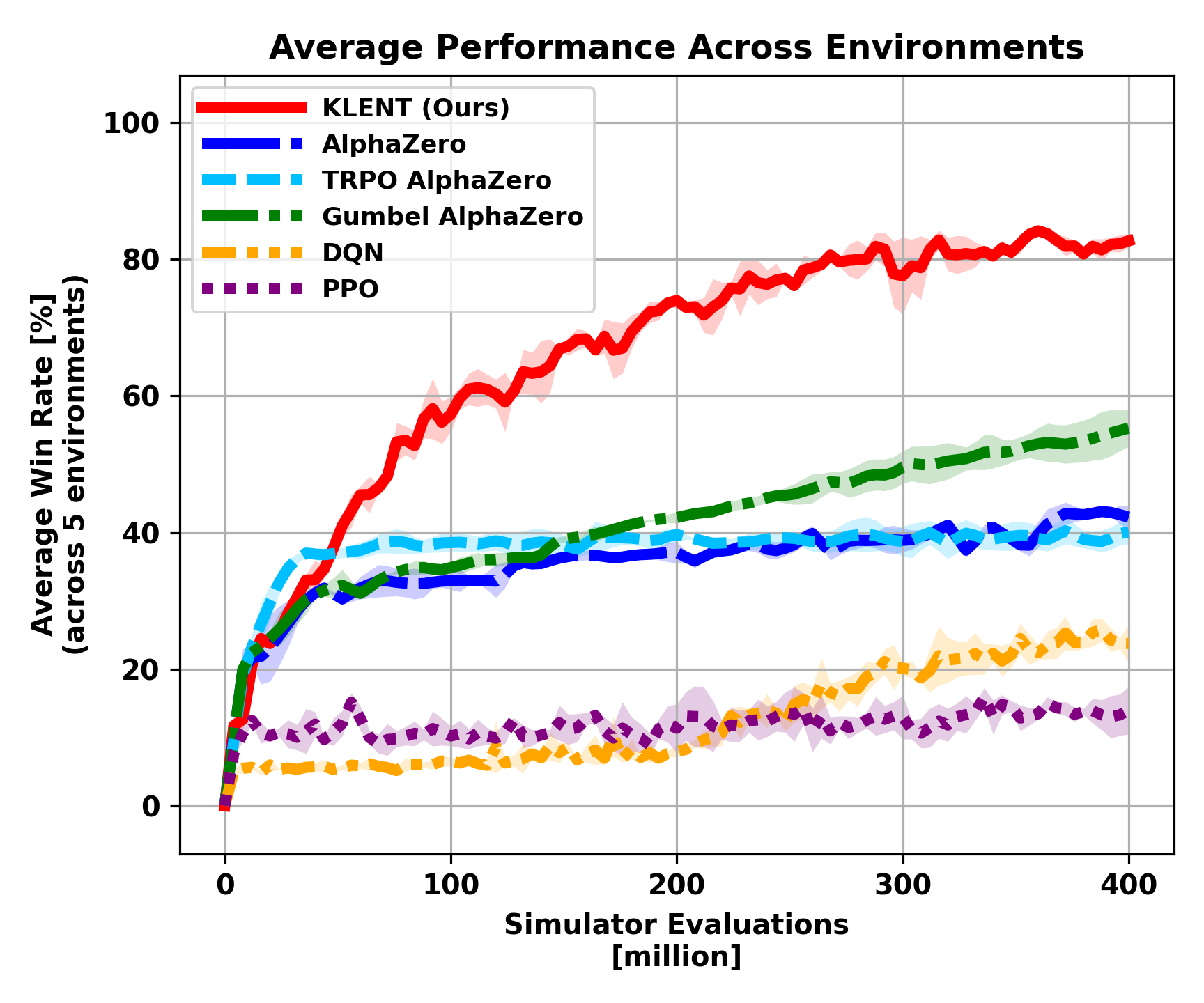}
    \captionof{figure}{Empirical results on learning efficiency. Average performance across five board games. Our agent (KLENT) achieves up to 4x higher training efficiency than Gumbel AlphaZero, the strongest competing baseline in our experiments. %including AlphaZero~\citep{Silver2018}.
    }
    
    \vspace{-2ex}

    \label{fig:average_performance}
\end{figure}

\section{Experiments}\label{sec:experiments}

In this section, we present our experimental results on board games. 
Specifically, Section \ref{sec:performance} provides the results of performance comparison on five board games, demonstrating the efficiency of KLENT compared to existing methods. Subsequently, we present the results of our ablation study in Section \ref{sec:ablation}, demonstrating the importance of the key techniques in KLENT, namely KL regularization, entropy regularization, and \(\lambda\)-returns. Lastly, we provide the experimental results on large-scale 19x19 Go in Section \ref{sec:scalability_to_large_scale_games}. 

\subsection{Performance Comparison}\label{sec:performance}

\newcommand{\imgwidthtwo}{0.187\textwidth}

\begin{figure*}[t]
  \begin{minipage}[b]{0.2343\textwidth}
    \centering \includegraphics[width=\linewidth, clip, trim={10 10 10 10}]{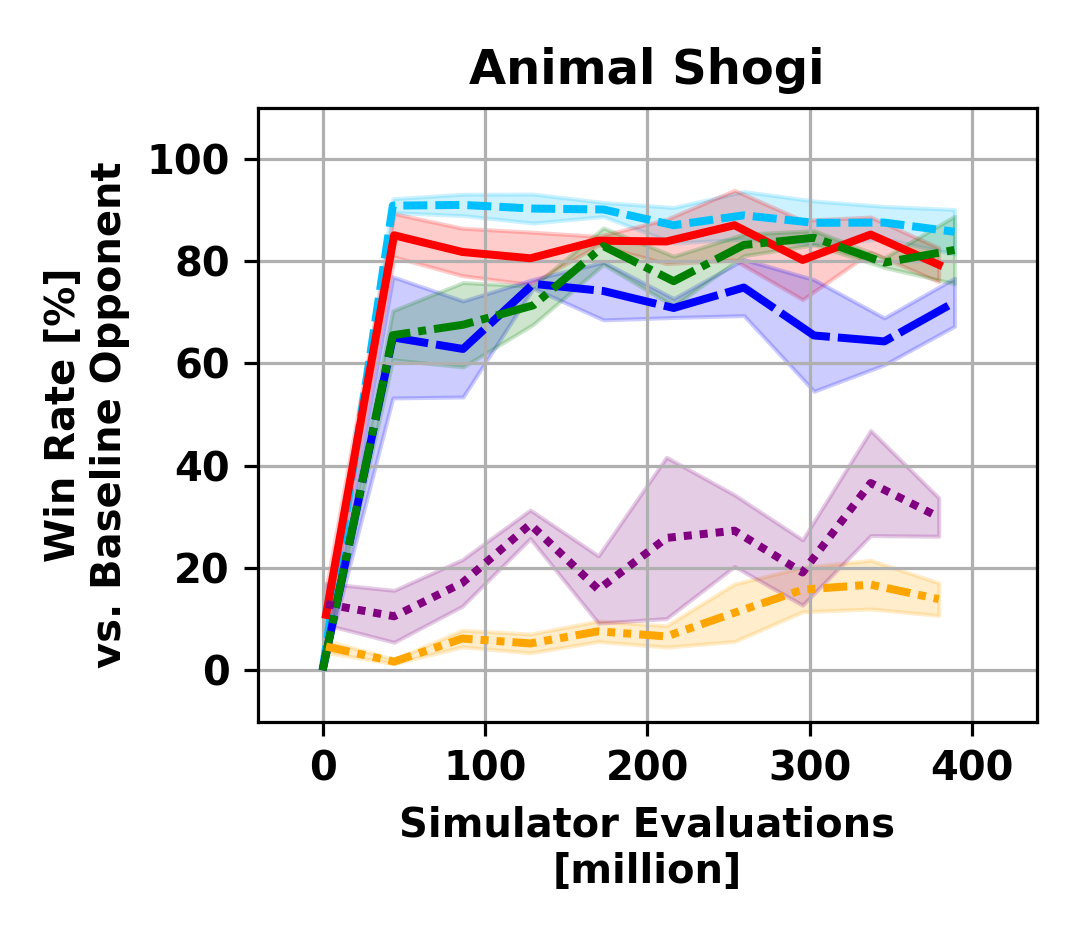}
  \end{minipage}\hfill
  \begin{minipage}[b]{\imgwidthtwo}
    \centering \includegraphics[width=\linewidth, clip, trim={60 10 10 10}]{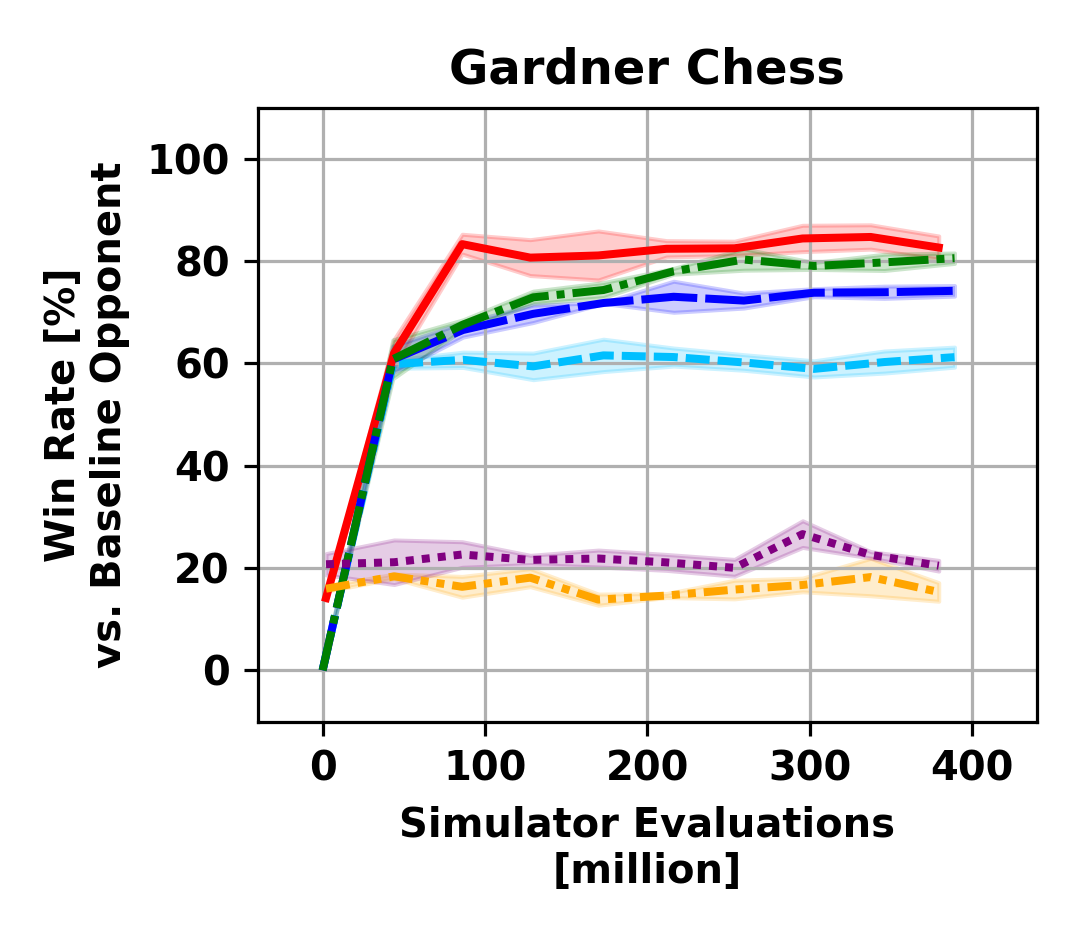}
  \end{minipage}\hfill
  \begin{minipage}[b]{\imgwidthtwo}
    \centering \includegraphics[width=\linewidth, clip, trim={60 10 10 10}]{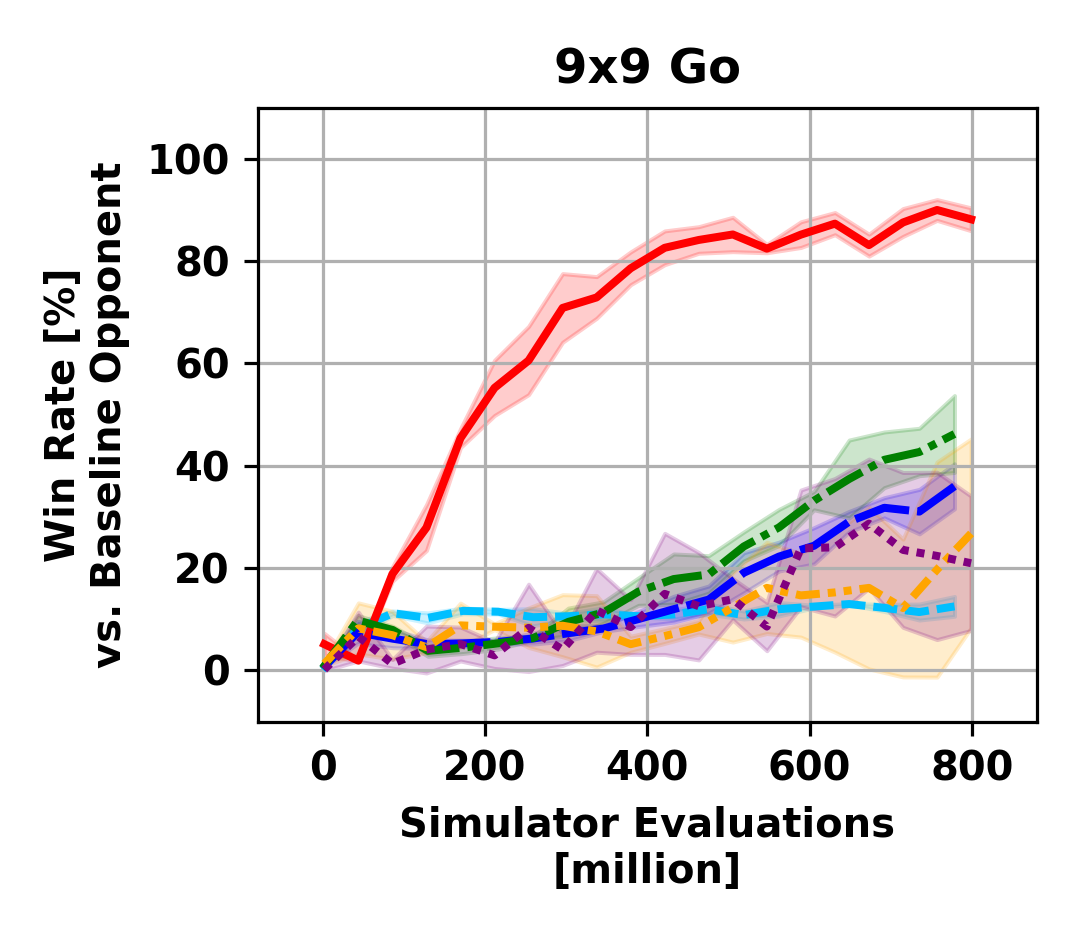}
  \end{minipage}\hfill
  \begin{minipage}[b]{\imgwidthtwo}
    \centering \includegraphics[width=\linewidth, clip, trim={60 10 10 10}]{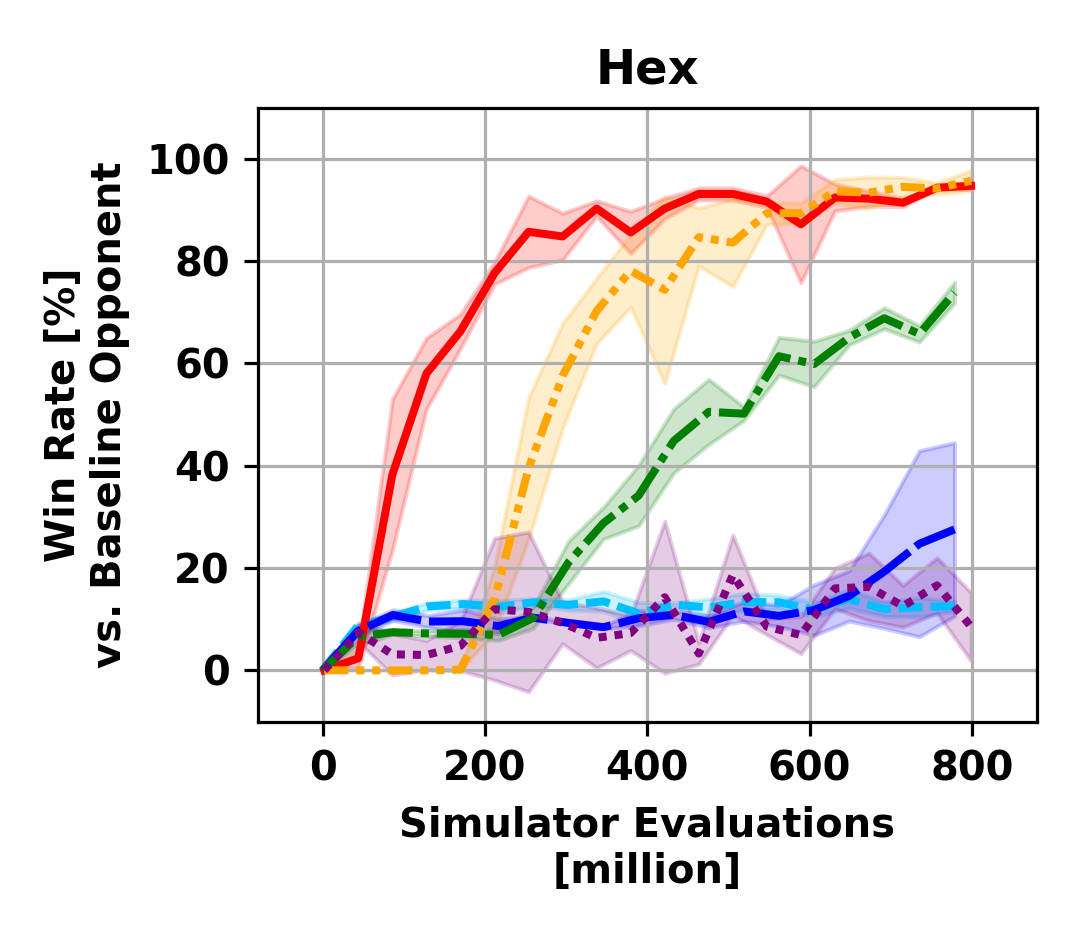}
  \end{minipage}\hfill
  \begin{minipage}[b]{\imgwidthtwo}
    \centering \includegraphics[width=\linewidth, clip, trim={60 10 10 10}]{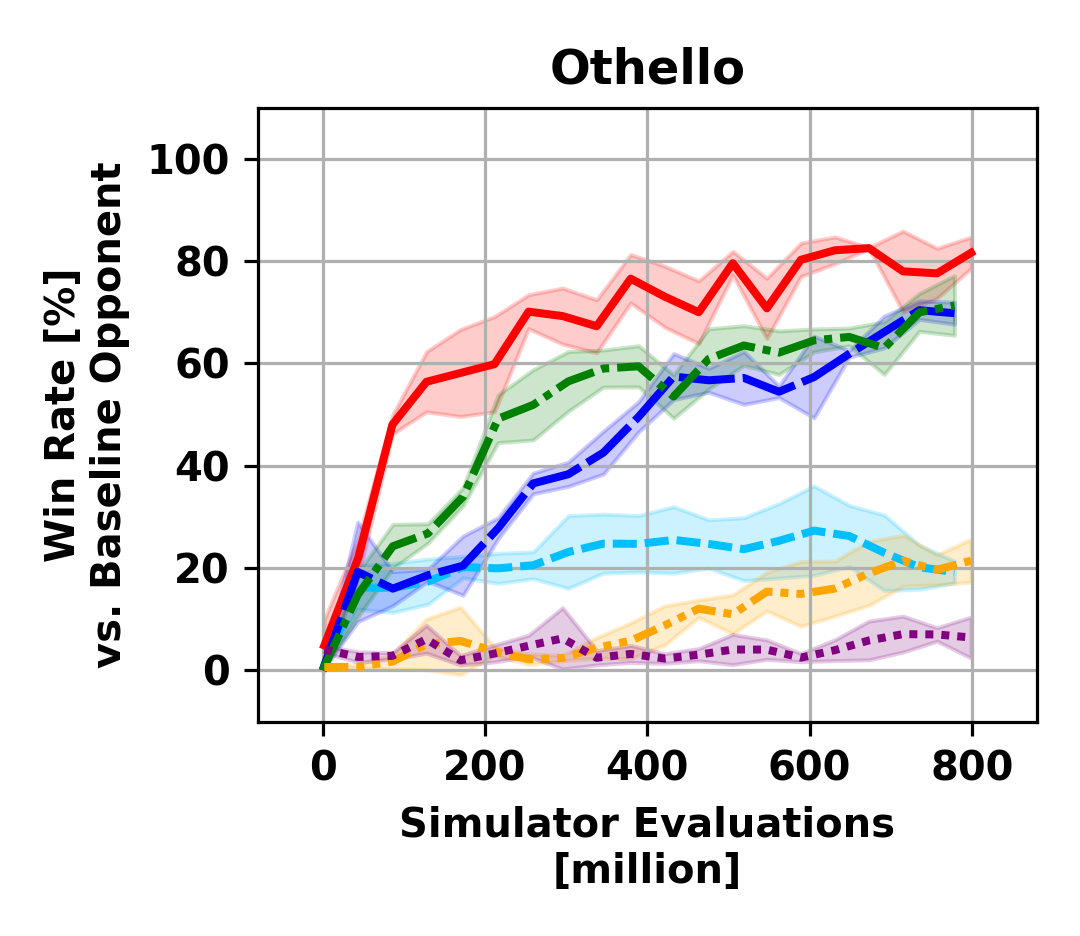}
  \end{minipage}
  \newline
  \begin{minipage}{\linewidth}
    \centering \includegraphics[width=0.8\linewidth, clip, trim={0 125 0 610}]{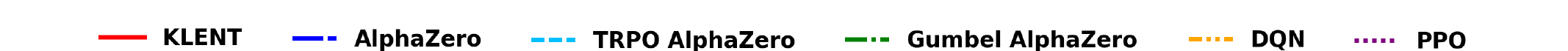}
  \end{minipage}
  \caption{Performance comparison between our agent KLENT and existing methods on five board games. KLENT achieves competitive or higher efficiency compared to existing methods.}
  \label{fig:performance}
\vspace{1ex}
\end{figure*}

\paragraph{Setup.} We employed five medium-scale board games listed in Table \ref{table:game-environments} to compare the performance and the learning efficiency of KLENT and existing approaches. We measured the win rates of each agent against anchored opponents with pretrained checkpoints from  \citet{Koyamada2023}. 
For the horizontal axis, we employed the number of simulator evaluations which serves as an indicator of the computational demand of training processes and has been adopted in the literature, particularly when training efficiency is of the primary interest \citep{wu2020accelerating}. In the evaluation, we have used a reactive policy for plotting the learning curve in order to unify the test-time computational resources for all methods.
As baselines for performance comparison, we used AlphaZero \citep{Silver2018}, TRPO AlphaZero \citep{grill2020monte}, and Gumbel AlphaZero \citep{Danihelka2022} as search-based approaches, and DQN \citep{mnih2015human} and PPO \citep{schulman2017proximal} as model-free approaches. 
The network architecture was unified across all experiments, specifically utilizing a ResNet \citep{he2016identity} with 6 residual blocks.
The hyperparameters of KLENT were unified across all five environments and set to \((\alpha, \beta, \lambda) = (0.03, 0.1, e^{-1/8})\). 
Further details of the experimental setup and implementation are provided in Appendix \ref{sec:app:setup} and Appendix \ref{sec:app:implementation}, respectively.

\begin{figure*}[t]
  \begin{minipage}[b]{0.2343\textwidth}
    \centering \includegraphics[width=\linewidth, clip, trim={10 10 10 10}]{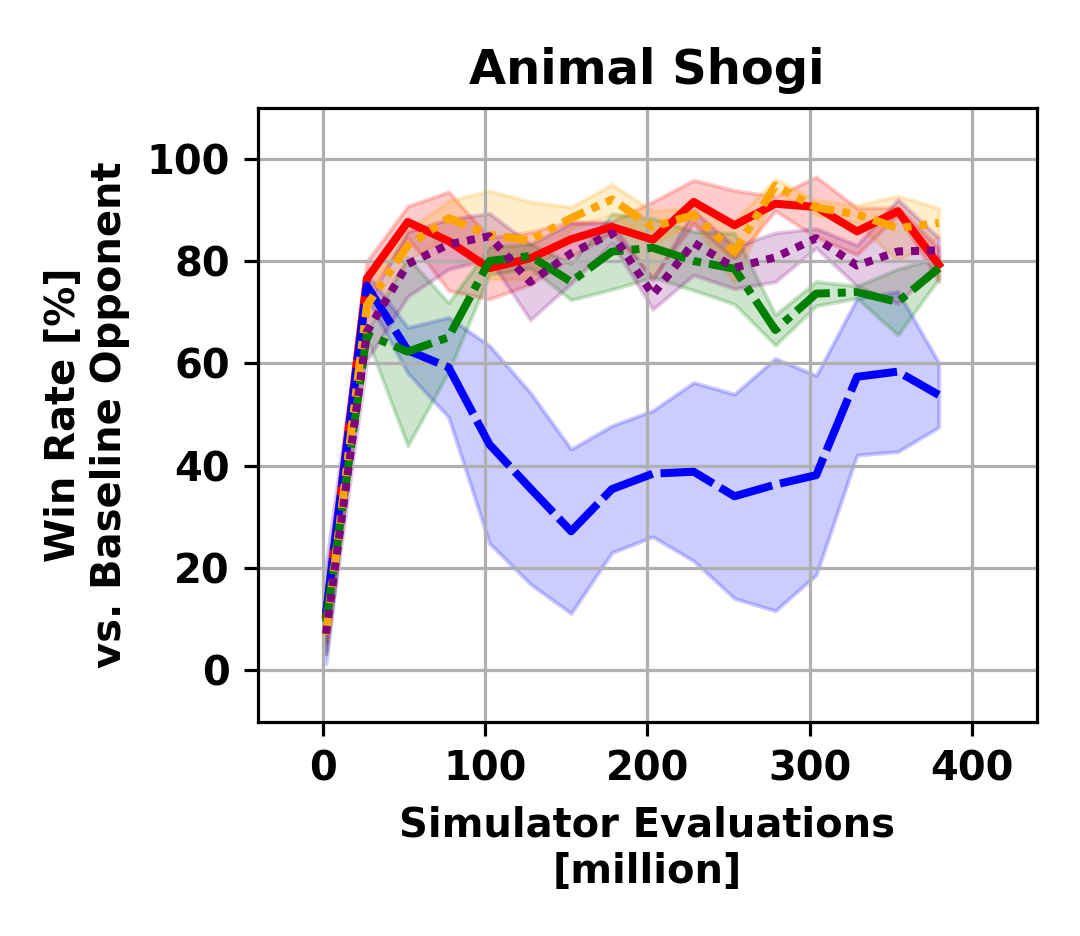}
  \end{minipage}\hfill
  \begin{minipage}[b]{\imgwidthtwo}
    \centering \includegraphics[width=\linewidth, clip, trim={60 10 10 10}]{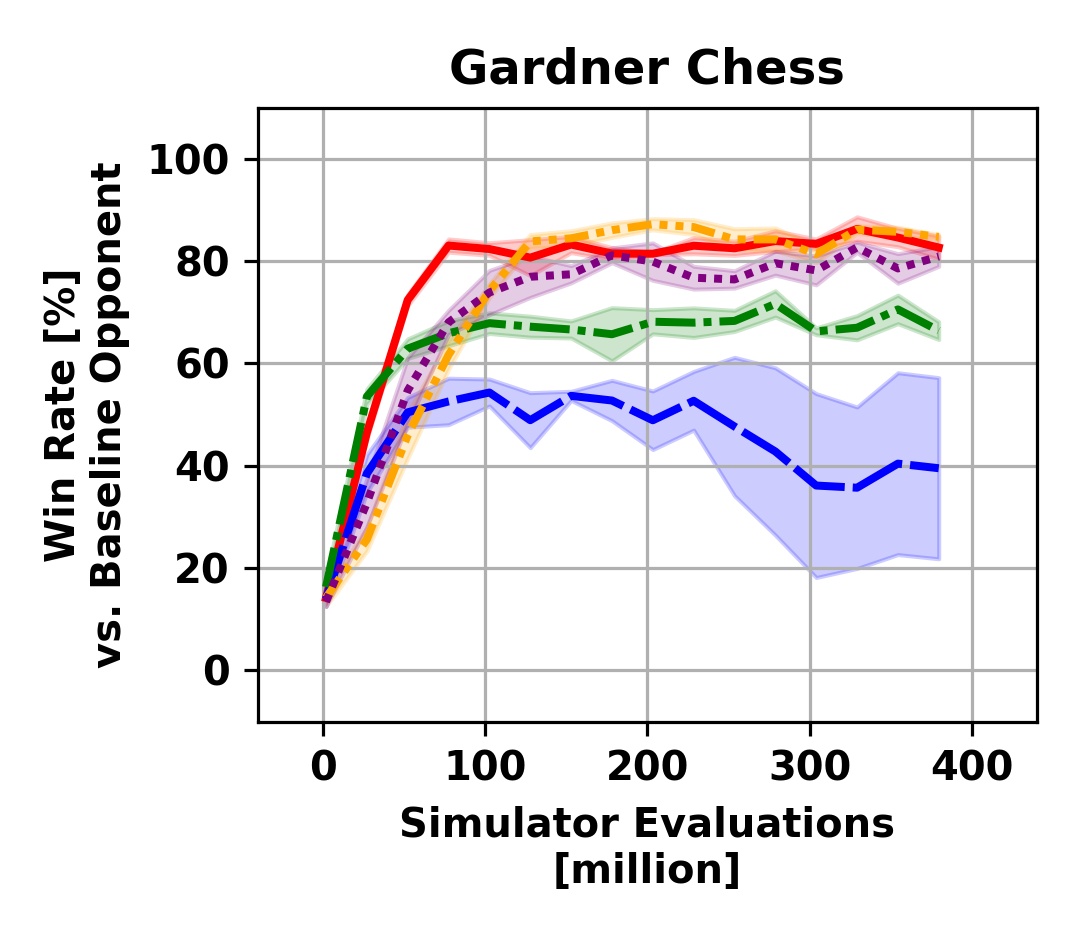}
  \end{minipage}\hfill
  \begin{minipage}[b]{\imgwidthtwo}
    \centering \includegraphics[width=\linewidth, clip, trim={60 10 10 10}]{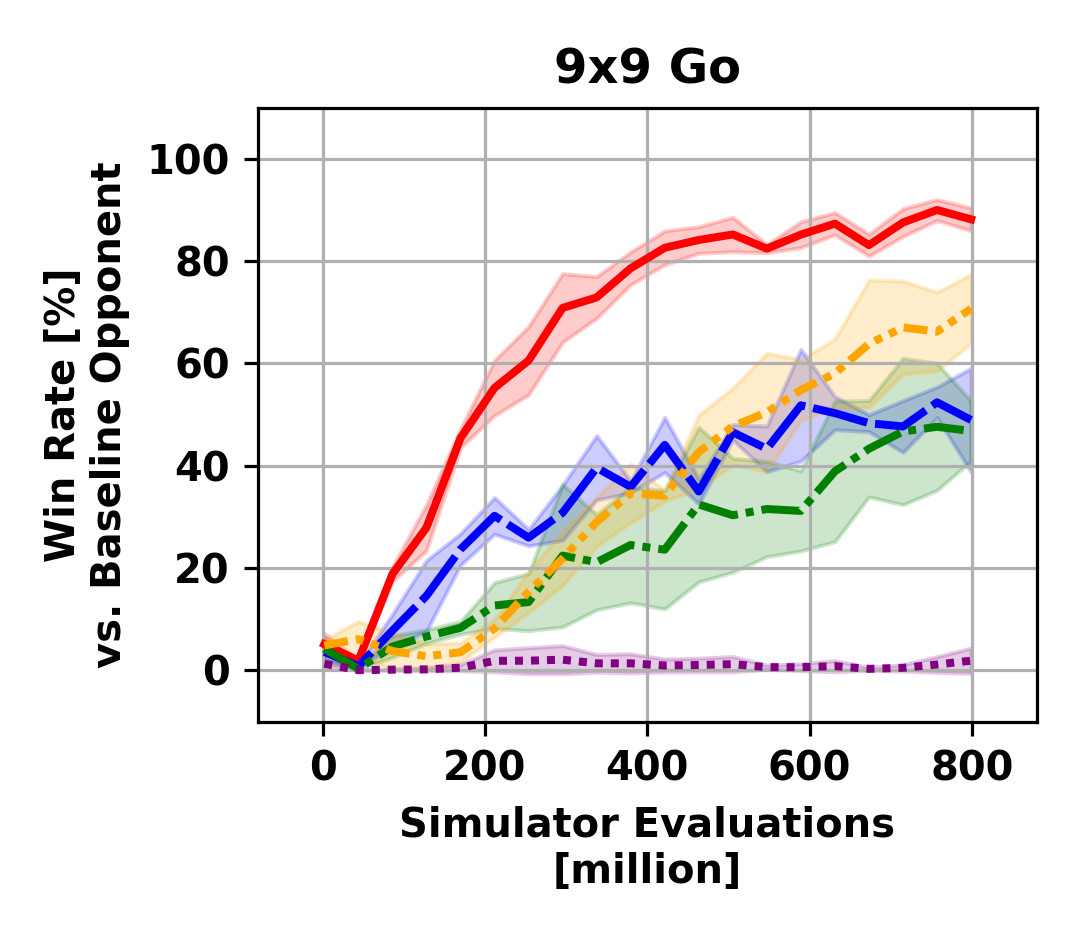}
  \end{minipage}\hfill
  \begin{minipage}[b]{\imgwidthtwo}
    \centering \includegraphics[width=\linewidth, clip, trim={60 10 10 10}]{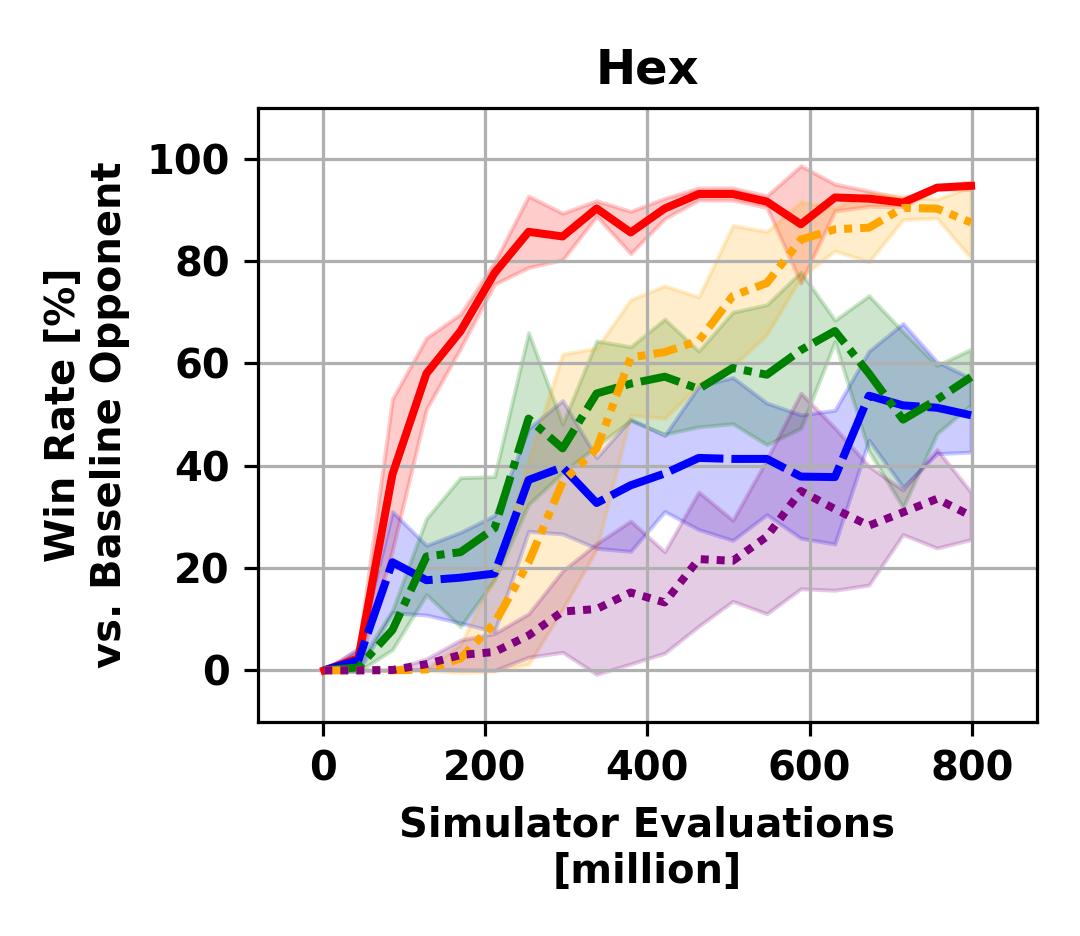}
  \end{minipage}\hfill
  \begin{minipage}[b]{\imgwidthtwo}
    \centering \includegraphics[width=\linewidth, clip, trim={60 10 10 10}]{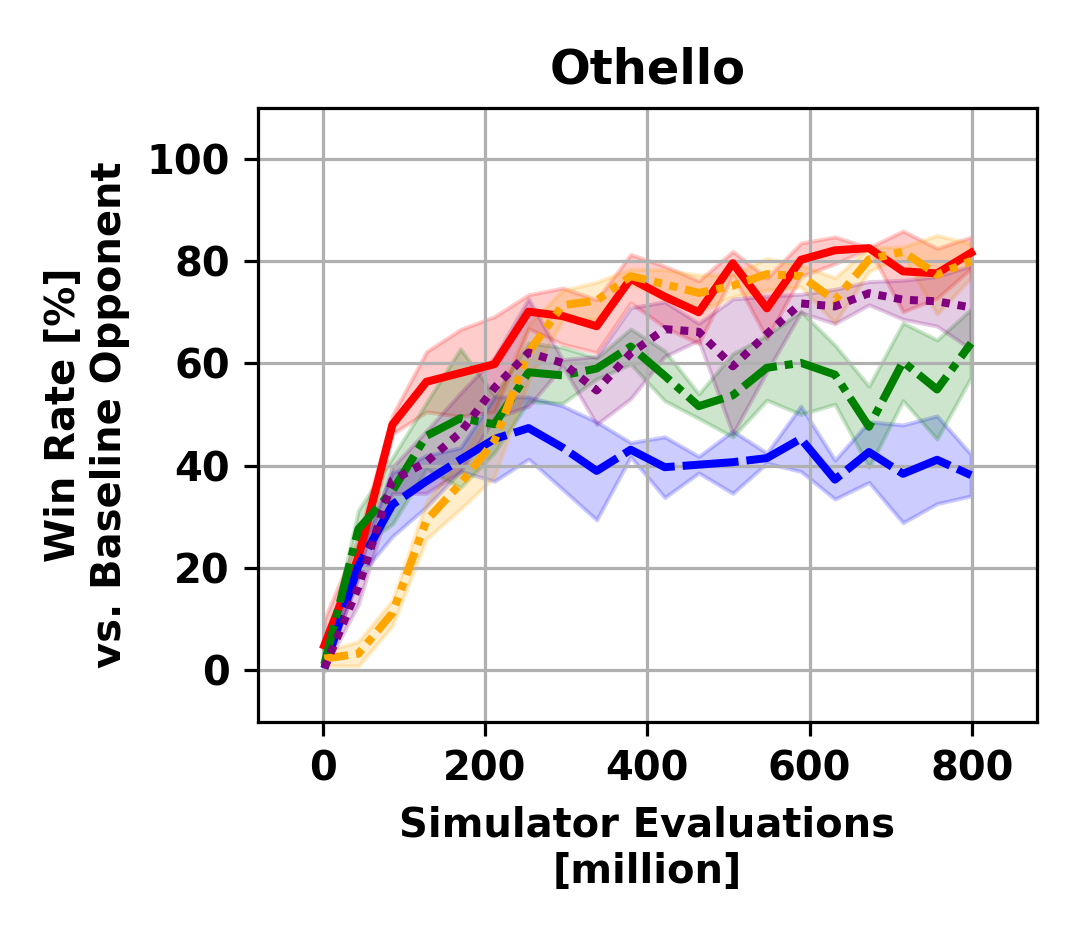}
  \end{minipage}
  \newline
  \begin{minipage}{\linewidth}
    \centering \includegraphics[width=0.8\linewidth, clip, trim={0 67 0 668}]{260128_legends.pdf}
  \end{minipage}
\caption{The results of the ablation study. They highlight the importance of all three techniques for consistently achieving high efficiency across the environments.}
  \label{fig:ablation}
\end{figure*}

\paragraph{Results.} The average performances across the five environments are presented in Figure \ref{fig:average_performance}. The results show that our agent KLENT achieves the most efficient learning on average. In particular, the results indicate several-fold efficiency gains.
For example, Gumbel AlphaZero required 300 million simulator evaluations to reach an average win rate of 50\%, whereas KLENT required only 75 million, representing a fourfold efficiency gain.

The detailed performances in each environment are also presented in Figure \ref{fig:performance}. In Animal Shogi and Gardner Chess, where search-based approaches demonstrate high performance with moderate number of simulator evaluations, KLENT achieves competitive efficiency. In 9x9 Go, Hex, and Othello, where search-based approaches require substantial training resources, KLENT demonstrates significantly higher efficiency. 
This pattern is reflected in the branching-factor statistics in Table~\ref{table:game-environments}, where the branching factor is measured as the mean number of legal actions. In Animal Shogi and Gardner Chess, where the branching factors are 7.5 and 9.5, KLENT and search-based methods are competitive. In contrast, in 9x9 Go and Hex, where the factors are 42.3 and 90.6, KLENT shows a clearer advantage. This is consistent with the intuition that larger branching factors increase MCTS simulator-evaluation budgets, whereas KLENT avoids look-ahead search.

We attribute the efficiency of KLENT to two design choices. First, KLENT updates the policy analytically via regularization, rather than using MCTS outputs, which reduces simulator evaluations and neural network inferences per decision. Second, KLENT uses \(\lambda\)-returns as value targets instead of the Monte Carlo returns commonly used in search-based methods, improving target stability in value learning; our ablation results support this effect. Together, analytical policy updates and stable value targets appear to be the main drivers of KLENT's high learning efficiency.

\begin{figure*}[t]
  \begin{minipage}[t]{0.32\textwidth}
    \centering
    \includegraphics[width=\linewidth, clip, trim={10 10 10 10}]{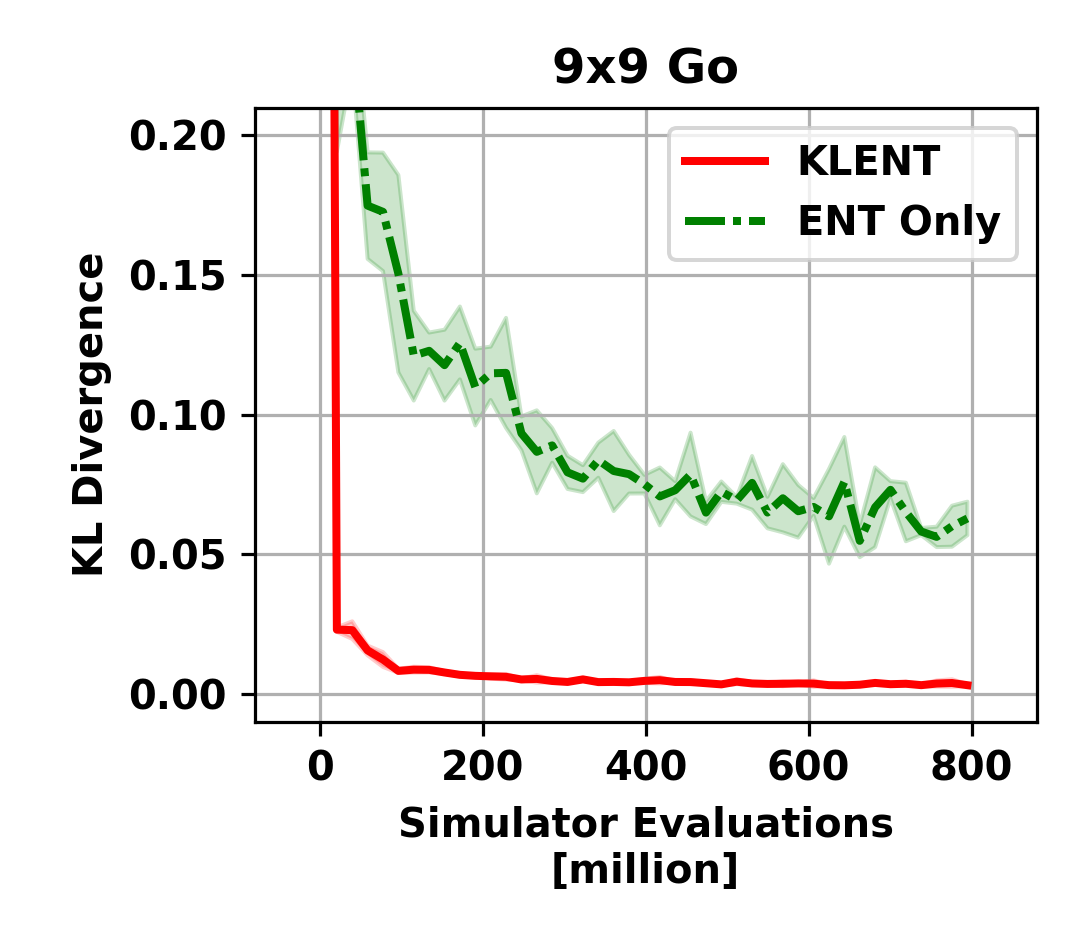}
    \captionof{figure}{The evolution of KL Divergence \(D_\text{KL}(\pi'\|\pi)\) in 9x9 Go. In KLENT, reverse-KL regularization keeps it relatively low, resulting in gradual policy updates.}
    \vspace{1.8ex}
    \label{fig:dkl_go9x9_klent_vs_entonly}
  \end{minipage}\hfill
  \begin{minipage}[t]{0.32\textwidth}
    \centering
    \includegraphics[width=\linewidth, clip, trim={10 10 10 10}]{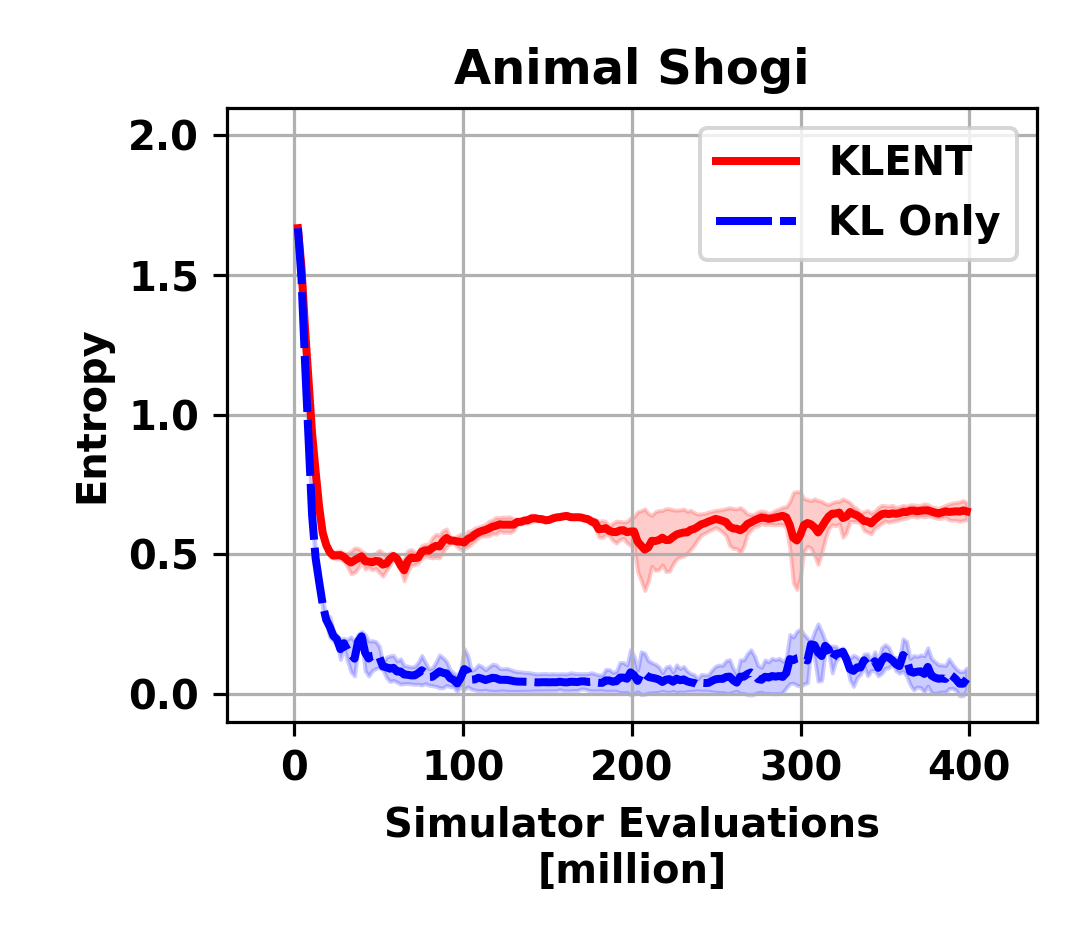}
    \captionof{figure}{The evolution of policy entropy \(H(\pi')\) in Animal Shogi. While KLENT maintains the entropy, it becomes nearly zero in KL Only.}
    \vspace{1.8ex}
    \label{fig:entropy}
  \end{minipage}\hfill
  \begin{minipage}[t]{0.32\textwidth}
    \centering
    \includegraphics[width=\linewidth, clip, trim={10 10 10 10}]{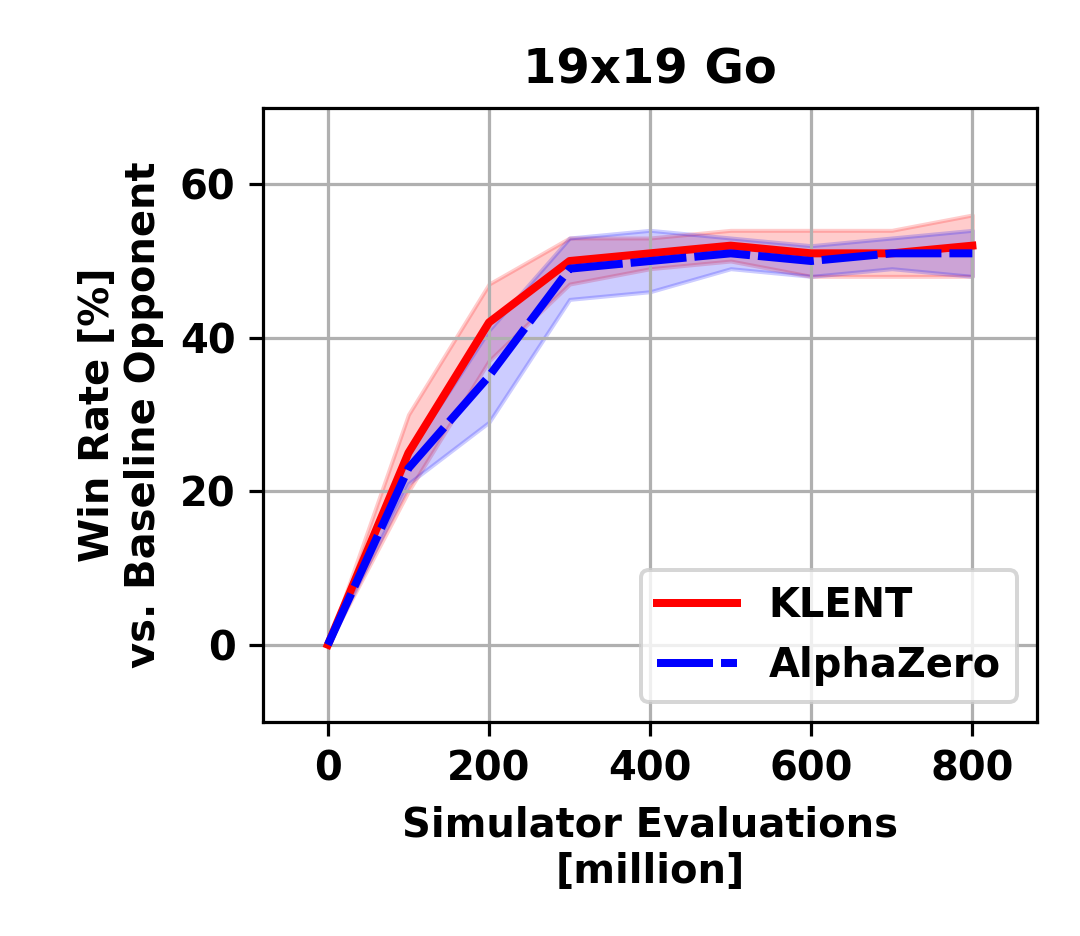}
    \captionof{figure}{The results of experiments in 19x19 Go. Even in the large-scale environment, KLENT achieves competitive learning efficiency compared to AlphaZero.}
    \vspace{1.8ex}
    \label{fig:19x19go}
  \end{minipage}
\vspace{-2ex}
\end{figure*}

We also evaluated AlphaZero, Gumbel AlphaZero and KLENT in matches with test-time MCTS. For each method, we used the checkpoint trained after 800 million simulator evaluations. For KLENT, we adopted the Gumbel AlphaZero MCTS at test time. All agents including baseline opponent used MCTS with 800 rollouts in matches. The results in Table~\ref{table:ttc} show that KLENT attains high win rates in average even when equipped with test-time MCTS.

\begin{table}[h]
\vspace{0ex}
\caption{Win rates of each agent against the baseline agent in matches using test-time MCTS. Each agent is trained with 800 million simulator evaluations. Average and standard error are shown. ``AZ'' stands for AlphaZero. 
}
\vspace{0ex}
\label{table:ttc}
\centering
{
\begin{tabular}{lccc}
\toprule
 & AZ & Gumbel AZ & KLENT \\
\midrule
Animal Shogi  & 31\(\pm\)2\% & \textbf{67\(\pm\)5\%} & 63\(\pm\)4\% \\
Gardner Chess & 64\(\pm\)3\% & 70\(\pm\)1\% & \textbf{81\(\pm\)1\%}  \\
9x9 Go        & 7\(\pm\)2\% & 37\(\pm\)2\% & \textbf{89\(\pm\)1\%}  \\
Hex           & 8\(\pm\)5\% & 47\(\pm\)5\% & \textbf{98\(\pm\)1\%}  \\
Othello       & 51\(\pm\)2\% & 47\(\pm\)3\% & \textbf{55\(\pm\)6\%}  \\
\midrule
Average & 32.2\% & 53.6\% & \textbf{77.2}\%  \\
\bottomrule
\end{tabular}
}
\vspace{-1ex}
\end{table}

\subsection{Ablation Study}\label{sec:ablation}
\paragraph{Setup.} We conducted an ablation study to validate the importance of the three key techniques in KLENT: KL regularization, entropy regularization, and the use of \(\lambda\)-returns. 
We compared KLENT with the following four variants.
\textbf{KL Only}: Entropy regularization is removed by setting $\alpha = 0$.  
\textbf{ENT Only}: KL regularization is removed by setting $\beta = 0$.  
\textbf{1-Step KLENT}: \(\lambda\)-returns are replaced with 1-step backups by setting $\lambda = 0$.  
\textbf{Monte Carlo KLENT}: \(\lambda\)-returns are replaced with Monte Carlo returns by setting $\lambda = 1$. 

\paragraph{Results.} The results of our ablation study are shown in Figure~\ref{fig:ablation}. The results demonstrate the importance of all three techniques for consistently achieving high efficiency in the five environments. We discuss the effect of each technique below.

\textbf{The effect of KL regularization} can be observed by comparing the results of KLENT and ENT Only. 
In ENT Only, KL regularization is removed so that the policy \(\pi'\) is represented as \(\pi'(a|s) = \frac{1}{Z(s)}\exp\left(Q_\theta(s, a)/\alpha\right).\) 
In other words, the output of the policy network is completely ignored, and actions are selected according to a softmax policy based solely on the action-value function. 
According to the results, ENT Only exhibits degraded performance compared to the original KLENT.
Figure~\ref{fig:dkl_go9x9_klent_vs_entonly} shows the evolution of the average KL divergence \(D_\text{KL}(\pi'\|\pi)\) in 9x9 Go, where the performance gap between KLENT and ENT Only is large. While KLENT keeps the divergence relatively low via reverse-KL regularization, it is larger in ENT Only, suggesting more abrupt policy updates. These results suggest that it is important to gradually update the policy.

\textbf{The effect of entropy regularization} can be analyzed by comparing KLENT and KL Only. In KL Only, where the entropy regularization is removed, performance degrades significantly across all the five games. Specifically, in Animal Shogi, the win rate initially rises to 75\% but subsequently declines, suggesting unstable learning. Figure~\ref{fig:entropy} shows the evolution of the average entropy of the policy \(\pi'\) in Animal Shogi. While KLENT maintains the entropy, it rapidly decreases and becomes nearly zero in KL Only, indicating that the policy becomes excessively deterministic. 
These results suggest that encouraging sufficient exploration is crucial for stable learning process.

\textbf{The effect of \(\lambda\)-returns} can be observed by comparing the results of KLENT, 1-Step KLENT, and Monte Carlo KLENT. Replacing \(\lambda\)-returns with 1-step returns or Monte Carlo returns results in a performance drop especially in 9x9 Go and Hex.
As discussed in Section \ref{sec:Learning Action-Value Function}, the results suggest the importance of balancing bias-variance trade-off through the use of an intermediate \(\lambda\).

\subsection{Large-Scale Game}\label{sec:scalability_to_large_scale_games}
\paragraph{Setup.} We further conducted experiments in 19x19 Go, comparing KLENT with AlphaZero. 
As Pgx \citep{Koyamada2023} does not provide the pretrained checkpoint for 19x19 Go, we instead used the checkpoint released by ElfOpen Go \citep{pmlr-v97-tian19a-elfopengo} for the anchored opponent. 
For the network architecture, we used 20-block ResNet \citep{he2016identity} instead of 6-block one to capture features in the larger board. 
KLENT used the same hyperparameters \((\alpha, \beta, \lambda) = (0.03, 0.1, e^{-1/8})\).

\vspace{-1ex}

\paragraph{Results.} We present the results in Figure \ref{fig:19x19go}. We can observe that even in 19x19 Go, KLENT achieves competitive learning compared to AlphaZero. Overall, our experimental results demonstrate that the KLENT achieves high learning efficiency in medium-scale environments, while also maintaining competitive learning in the large-scale environment.

\section{Conclusions}\label{conclusion}
In this study, we have revisited regularized policy optimization with reverse Kullback-Leibler regularization and entropy regularization and analyzed this combination on two-player zero-sum settings from theoretical and empirical perspectives. 
From theoretical perspective, we investigated the stability of the policy update rule on two theoretical settings: game-theoretic normal-form games and finite-length games. 
From empirical perspective, we have also investigated practical RL algorithm KLENT with this combination. 
Our experimental results have demonstrated learning efficiency of KLENT compared to existing methods. Through our ablation study, we also validated the importance of these three key techniques: KL regularization for gradual policy updates, entropy regularization for exploration, and \(\lambda\)-returns for efficient and stable value function learning.

\vspace{1ex}

A limitation of this study is that our empirical experiments are focused to improve the efficiency. 
While we have shown that KLENT can achieve efficient learning, this does not necessarily mean that it achieves superior asymptotic performance when unlimited computational resources are given.
Although we consider our efficient approach to be valuable for most practitioners and researchers in the community, our results do not preclude the effectiveness of search-based approaches including AlphaZero, particularly when massive computational resources such as thousands of GPUs are available.

\vspace{1ex}

Our results have shown that even in board games, a domain long dominated by search-based methods, properly revisiting existing regularization techniques can lead to stable and efficient learning. We hope this encourages extending this approach to other fields where specialized methods prevail, further advancing reinforcement learning.

\section*{Acknowledgements}
We thank Yusuke Mukuta for fruitful discussions on the theoretical analysis.
We thank Sotetsu Koyamada and Soichiro Nishimori for their kind guidance on using the Pgx library.

This work was partially supported by JST Moonshot R\&D Grant Number JPMJPS2011, CREST Grant Number JPMJCR2015 and Basic Research Grant (Super AI) of Institute for AI and Beyond of the University of Tokyo.
Kazuki Ota was supported by JST SPRING, Grant Number JPMJSP2108. Takayuki Osa was supported by JSPS KAKENHI Grant Number JP25K03176.

\section*{Impact Statement}

This paper presents work whose goal is to advance the field of Machine
Learning. There are many potential societal consequences of our work, none
of which we feel must be specifically highlighted here.

\bibliography{main}
\bibliographystyle{icml2026}

\newpage
\appendix
\onecolumn

\noindent{\Large\bfseries Appendix\par}

\section*{Table of Contents}
\newcommand{\appendixoverviewitem}[2]{\item Appendix~\ref{#1}: #2 \dotfill \pageref{#1}}
\begin{itemize}
\item \textbf{Part I: Derivation of the Proposed Method}
\begin{itemize}
\appendixoverviewitem{sec:app:proof}{Derivation of the Analytical Solution for the Regularized Policy Optimization Problem}
\appendixoverviewitem{sec:app:bias_variance_details}{Details of Preliminary Experiments on Bias-Variance Trade-Off}
\end{itemize}
\item \textbf{Part II: Details of Theoretical Analysis}
\begin{itemize}
\appendixoverviewitem{sec:app:theoretical_details}{Details of Theoretical Analysis}
\begin{itemize}
\appendixoverviewitem{sec:app:normal_form_games}{Normal-Form Games}
\appendixoverviewitem{sec:app:finite_length_games}{Finite-Length Games}
\end{itemize}
\appendixoverviewitem{sec:app:convergence_limit_of_klent}{Details on Count Up Game}
\end{itemize}
\item \textbf{Part III: Details of Experiments}
\begin{itemize}
\appendixoverviewitem{sec:app:setup}{Experimental Setup Details}
\appendixoverviewitem{sec:app:implementation}{Implementation Details}
\appendixoverviewitem{sec:app:sensitivity}{Sensitivity Analysis of Hyperparameters in KLENT}
\appendixoverviewitem{sec:app:network_architecture_comparison}{Experimental Comparison of Network Architectures}
\appendixoverviewitem{sec:app:regularized_policy_optimization_comparison}{Experimental Comparison with Other Regularized Policy Optimization Approaches}
\appendixoverviewitem{sec:app:recent_model_free_comparison}{Experimental Comparison with Recent Model-Free Approaches}
\appendixoverviewitem{sec:app:az_reliability}{Additional Evaluation on the Reliability of the AlphaZero Implementation}
\begin{itemize}
\appendixoverviewitem{sec:app:az_reliability_pgx}{Reliability of the Pgx Implementation as a Baseline}
\appendixoverviewitem{sec:app:az_reliability_comparison}{Performance Comparison with Other Implementations}
\end{itemize}
\appendixoverviewitem{sec:app:az_rollout_budget}{Extended Experiments on Rollout Counts and Training Budgets for AlphaZero}
\begin{itemize}
\appendixoverviewitem{sec:app:az_rollout_9x9}{Performance of AlphaZero with Varying Rollout Counts in 9x9 Go}
\appendixoverviewitem{sec:app:az_rollout_19x19}{Performance of AlphaZero with Varying Rollout Counts in 19x19 Go}
\appendixoverviewitem{sec:app:az_large_budget}{Performance of AlphaZero with Increased Training Budgets}
\end{itemize}
\appendixoverviewitem{sec:app:ttc}{Strength Scaling with Additional Test-time Computation}
\appendixoverviewitem{sec:app:head-to-head_matches}{Head-to-Head Matches}
\begin{itemize}
\appendixoverviewitem{sec:app:h2h_pachi_gnugo}{Evaluation against Pachi and GnuGo in 9x9 Go}
\appendixoverviewitem{sec:app:h2h_az_19x19}{Head-to-Head Match against AlphaZero in 19x19 Go}
\end{itemize}
\appendixoverviewitem{appendix:elo}{Performance Comparison in Elo Ratings}
\appendixoverviewitem{sec:app:Computational_Requirements}{Computational Requirements}
\end{itemize}
\item \textbf{Part IV: Further Discussions}
\begin{itemize}
\appendixoverviewitem{sec:app:stochastic_environments}{Extension to Stochastic Environments}
\appendixoverviewitem{sec:app:further_discussions}{Extension to Continuous Action Spaces}
\end{itemize}
\end{itemize}

\part{Derivation of the Proposed Method}
\section{Derivation of the Analytical Solution for the Regularized Policy Optimization Problem}\label{sec:app:proof}

Here, we provide the formal mathematical definitions of the terms in Definition \ref{def:kl_and_ent} and present the proof for the derivation of the optimal solution in Equation \ref{update_rule} in Theorem \ref{thm:formal_proof}. For simplicity, we do not explicitly write the considered state \(s\) in the following equations.

\begin{definition}[KL divergence and entropy]\label{def:kl_and_ent}
Let $\mathcal{A}$ be a finite set and $\Delta$ be the set of all probability mass functions over $\mathcal{A}$. The Kullback-Leibler (KL) divergence between two probability mass functions $\pi' \in \Delta$ and $\pi \in \Delta$ over a finite set $\mathcal{A}$ is defined as:
\begin{equation}
D_\text{KL}(\pi' \| \pi) = \sum_{a \in \mathcal{A}} \pi'(a) \log \frac{\pi'(a)}{\pi(a)},
\end{equation}
where it is assumed that $\pi'(a) = 0 \implies \pi'(a) \log \frac{\pi'(a)}{\pi(a)} = 0$ and $\pi(a) > 0$ for all $a \in \mathcal{A}$. The entropy of a probability mass function $\pi' \in \Delta$ over $\mathcal{A}$ is defined as:
\begin{equation}
H(\pi') = -\sum_{a \in \mathcal{A}} \pi'(a) \log \pi'(a),
\end{equation}
where it is assumed that $\pi'(a) = 0 \implies \pi'(a) \log \pi'(a) = 0$.
\end{definition}

\begin{theorem}[Formal Derivation of the Analytical Solution \(\pi'\)]\label{thm:formal_proof}
Let $\mathcal{A}$ be a finite set, $\pi(a)$ a probability mass function over $\mathcal{A}$, $Q(a): \mathcal{A} \to \mathbb{R}$ a function, and $\Delta$ the set of all probability mass functions over $\mathcal{A}$. Consider the following optimization problem:
\begin{equation}
\underset{\pi' \in \Delta}{\mathrm{maximize}} \; \mathbb{E}_{A \sim \pi'}[Q(A)] - \beta D_\text{KL}(\pi' \|\pi) + \alpha H(\pi'),
\end{equation}
where $\beta > 0$ and $\alpha > 0$. Then, the optimal solution is given by:
\begin{equation}\label{eq:appendix_statement}
\pi'(a) = \frac{1}{Z} \exp\left(\frac{Q(a) + \beta \log \pi(a)}{\alpha + \beta}\right),
\end{equation}
where 
\begin{equation}
Z = \sum_{a \in \mathcal{A}} \exp\left(\frac{Q(a) + \beta \log \pi(a)}{\alpha + \beta}\right)
\end{equation}
is the normalization constant.
\end{theorem}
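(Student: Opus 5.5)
The cleanest route avoids Lagrange multipliers and boundary cases by rewriting the objective as a single KL divergence to the candidate solution. Write $\pi^\ast$ for the right-hand side of \eqref{eq:appendix_statement}. Since $\pi(a)>0$ for all $a$ (Definition~\ref{def:kl_and_ent}), $\pi^\ast$ is well defined, strictly positive, and sums to one, so $\pi^\ast\in\Delta$. For an arbitrary $\pi'\in\Delta$, I will expand the objective
\[
J(\pi') = \sum_a \pi'(a)Q(a) - \beta\sum_a \pi'(a)\log\frac{\pi'(a)}{\pi(a)} - \alpha\sum_a \pi'(a)\log\pi'(a),
\]
using the convention $0\log 0=0$ throughout.

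First, I collect terms into
\[
J(\pi') = \sum_a \pi'(a)\bigl(Q(a)+\beta\log\pi(a)\bigr) - (\alpha+\beta)\sum_a \pi'(a)\log\pi'(a).
\]
Next, I substitute the identity $Q(a)+\beta\log\pi(a) = (\alpha+\beta)\bigl(\log\pi^\ast(a)+\log Z\bigr)$, which comes from taking the log of the definition of $\pi^\ast$. Because $\sum_a\pi'(a)=1$, this gives
\[
J(\pi') = (\alpha+\beta)\log Z - (\alpha+\beta)\,D_\text{KL}(\pi'\|\pi^\ast).
\]
All terms are finite because $\pi^\ast>0$ and terms with $\pi'(a)=0$ vanish by convention.

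Finally, I apply Gibbs' inequality: $D_\text{KL}(\pi'\|\pi^\ast)\ge 0$, with equality iff $\pi'=\pi^\ast$. It follows from Jensen's inequality applied to the strictly concave $\log$ over the support of $\pi'$, together with $\sum_{a\in\mathrm{supp}\,\pi'}\pi^\ast(a)\le 1$. Since $\alpha+\beta>0$, $J$ is maximized exactly at $\pi'=\pi^\ast$, and the maximal value is $(\alpha+\beta)\log Z$. This establishes both that $\pi^\ast$ is optimal and that it is the unique optimum.

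The only delicate point is the boundary of the simplex, where some $\pi'(a)=0$. A Lagrangian/KKT derivation would have to argue separately that the optimum is interior, since the entropy gradient blows up there. The KL rewriting sidesteps this entirely, so I expect no real obstacle beyond checking that the $0\log 0$ conventions are applied consistently in the expansion and in the equality case of Gibbs' inequality.
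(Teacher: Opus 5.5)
Your proof is correct, and it takes a different route from the paper's.

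\textbf{The paper's route.} The paper forms a Lagrangian with a multiplier for $\sum_a \pi'(a)=1$ only. It sets $\nabla_{\pi'}\mathcal{L}=0$, obtains $\log\pi'(a) = \frac{Q(a)+\beta\log\pi(a)}{\alpha+\beta} + \text{const}$, and normalizes. That is purely a stationary-point calculation. It ignores the constraints $\pi'(a)\ge 0$ and never checks that the critical point is a maximum, let alone the unique one. To complete it, one would add two facts. First, the objective is strictly concave on $\Delta$: it consists of linear terms minus $(\alpha+\beta)\sum_a\pi'(a)\log\pi'(a)$. Second, the critical point found is strictly positive, so it lies in the relative interior and is therefore the unique global maximizer.

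\textbf{Your route.} Your identity $J(\pi') = (\alpha+\beta)\log Z - (\alpha+\beta)D_\text{KL}(\pi'\|\pi^\ast)$ delivers all of this in one step. It gives global optimality, uniqueness via the equality case of Gibbs' inequality, and a clean treatment of boundary points through the $0\log 0$ convention. It also yields the optimal value $(\alpha+\beta)\log Z$ as a by-product. Your argument only uses $\alpha+\beta>0$ (together with $\pi>0$ when $\beta>0$), so it also covers the $\alpha=0$ and $\beta=0$ settings used in the paper's ablations.

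\textbf{Trade-off.} The Lagrangian route is constructive: it shows how the formula is discovered. Yours requires knowing $\pi^\ast$ in advance and then verifying it, but as a proof of the stated theorem it is the more complete of the two.
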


\begin{proof}
Define the Lagrangian as follows:
\begin{equation}
\mathcal{L}(\pi', \lambda)
= \sum_{a \in \mathcal{A}} \pi'(a) Q(a)
- \beta D_\text{KL}(\pi' \|\pi)
+ \alpha H(\pi')
- \lambda \left( \sum_{a \in \mathcal{A}} \pi'(a) - 1 \right),
\end{equation}
where $\lambda$ is the Lagrange multiplier enforcing the constraint that $\pi'(a)$ is a probability mass function.

Using the method of Lagrange multipliers, we find $\pi'$ that satisfies 
\begin{equation}
\nabla_{\pi'} \mathcal{L}(\pi', \lambda) = 0.
\end{equation}
Expanding this condition yields:
\begin{eqnarray}
&& \nabla_{\pi'} \mathcal{L}(\pi', \lambda) = 0 \\
&\iff& \nabla_{\pi'} \left( \sum_{a \in \mathcal{A}} \pi'(a) Q(a)
- \beta D_\text{KL}(\pi' \|\pi)
+ \alpha H(\pi')
- \lambda \left( \sum_{a \in \mathcal{A}} \pi'(a) - 1 \right) \right) = 0 \\
&\iff& Q(a) + \beta \log \pi(a)
- (\beta + \alpha) \left( \log \pi'(a) + 1 \right)
- \lambda = 0, \quad \forall a \in \mathcal{A} \\
&\iff& \log \pi'(a) 
= \frac{Q(a) + \beta \log \pi(a)}{\beta + \alpha} + (\text{const.}),
\quad \forall a \in \mathcal{A}.
\end{eqnarray}

Since $\pi'(a)$ must be a probability mass function, the solution is given by 
Equation \ref{eq:appendix_statement}.
\end{proof}

\begin{remark}
Here, the Lagrange multiplier \(\lambda\) is unrelated to the \(\lambda\) in \(\lambda\)-returns.
\end{remark}

\begin{remark}
The advantage of combining reverse KL and entropy regularization is that it yields an explicit, closed-form analytical update. For example, forward KL regularization alone admits an analytical solution~\citep{grill2020monte}, but it lacks explicit exploration. Conversely, combining forward KL and entropy leads to an optimality equation involving the Lambert W function, and does not yield an elementary closed-form update without special functions~\citep{chow1999closedform}. In contrast, combining reverse KL and entropy provides a directly computable analytical solution, as shown in Theorem~\ref{thm:formal_proof}.
\end{remark}

\section{Details of Preliminary Experiments on Bias-Variance Trade-Off}\label{sec:app:bias_variance_details}
In Figure \ref{fig:bias_variance}, we have demonstrated the bias-variance trade-off of \(\lambda\)-returns in 9x9 Go environment. For the detailed experimental setup, we fixed both the policy and the value function using a pretrained baseline model from Pgx library \citep{Koyamada2023} in order to isolate the effect of varying \(\lambda\). Since estimating the bias requires access to the true action value, which is not directly observable, we approximated the ground-truth value by computing the Monte Carlo return 1,000 times from the same state action pair and taking the average as a surrogate for the true value.

\part{Details of Theoretical Analysis}
\section{Details of Theoretical Analysis}\label{sec:app:theoretical_details}
\subsection{Normal-Form Games}\label{sec:app:normal_form_games}
In this subsection, we analyze the local linear convergence of the regularized policy update map that arises in two-player zero-sum normal-form games.

\begin{definition}[Simplex and its interior]\label{def:app:simplex}
For a positive integer $m$, define
\[
\Delta := \left\{p\in\mathbb{R}^m \ \middle|\ \forall i,\ p_i\ge 0,\ \langle p,\mathbf 1\rangle = 1 \right\},
\qquad
{\rm int}\Delta := \left\{p\in\Delta \ \middle|\ \forall i,\ p_i>0\right\},
\]
where $\mathbf 1=(1,\dots,1)^\top\in\mathbb{R}^m$.
\end{definition}

\begin{definition}[Component-wise logarithm]\label{def:app:log}
For $p\in{\rm int}\Delta$, define
\[
\log p := (\log p_1,\dots,\log p_m)^\top \in \mathbb{R}^m.
\]
\end{definition}

\begin{definition}[Entropy]\label{def:app:entropy}
Define $H:{\rm int}\Delta\to\mathbb{R}$ by
\[
H(p) := -p^\top\log p = -\sum_{i=1}^m p_i\log p_i.
\]
When needed, we also use its continuous extension to $\Delta$:
\[
H(p) := -\sum_{i=1}^m p_i\log p_i,\qquad 0\log 0 := 0.
\]
\end{definition}

\begin{definition}[Softmax]\label{def:app:softmax}
Define $\sigma:\mathbb{R}^m\to{\rm int}\Delta$ by
\[
\forall z=(z_1,\dots,z_m)^\top\in\mathbb{R}^m,\quad
\sigma(z):=\left(\frac{e^{z_1}}{\sum_{j=1}^m e^{z_j}},\dots,\frac{e^{z_m}}{\sum_{j=1}^m e^{z_j}}\right)^\top.
\]
Softmax is invariant to constant shifts: for any $\kappa\in\mathbb{R}$,
\[
\sigma(z+\kappa\mathbf 1)=\sigma(z).
\]
\end{definition}

\begin{lemma}[Jacobian of softmax]\label{lem:app:softmax_jacobian}
For any $x\in\mathbb{R}^m$,
\[
\nabla\sigma(x)=\mathrm{diag}(\sigma(x))-\sigma(x)\sigma(x)^\top.
\]
Moreover, if $p=\sigma(x)\in{\rm int}\Delta$, then $\mathrm{diag}(p)-pp^\top$ is positive semidefinite with ${\rm Ker}(\mathrm{diag}(p)-pp^\top)={\rm span}\{\mathbf 1\}$, and hence it is positive definite on $\mathbf 1^\perp:=\{v\in\mathbb{R}^m\mid \langle v,\mathbf 1\rangle=0\}$.
\end{lemma}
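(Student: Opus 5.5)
The plan is to compute the Jacobian entrywise from the quotient formula and then identify its kernel directly. Write $s:=\sum_j e^{x_j}$ and $p_i=\sigma(x)_i=e^{x_i}/s$. Differentiating, $\partial p_i/\partial x_k = \delta_{ik}e^{x_i}/s - e^{x_i}e^{x_k}/s^2 = \delta_{ik}p_i - p_ip_k$. This is exactly the $(i,k)$ entry of $\mathrm{diag}(\sigma(x))-\sigma(x)\sigma(x)^\top$, which proves the first claim.

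For positive semidefiniteness, I will fix $p\in{\rm int}\Delta$ and $v\in\mathbb R^m$ and compute the quadratic form
\[
v^\top(\mathrm{diag}(p)-pp^\top)v=\sum_i p_iv_i^2-\Big(\sum_i p_iv_i\Big)^2 .
\]
This is the variance of $v_I$ when the index $I$ is drawn with probabilities $p$. Equivalently, by Cauchy--Schwarz, $(\sum_i p_iv_i)^2=(\sum_i \sqrt{p_i}\cdot\sqrt{p_i}v_i)^2\le(\sum_i p_i)(\sum_i p_iv_i^2)=\sum_i p_iv_i^2$, since $\sum_i p_i=1$. So the form is nonnegative.

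For the kernel, note first that the matrix is symmetric and positive semidefinite. Hence $Mv=0$ holds if and only if $v^\top Mv=0$, because for a PSD matrix $v^\top Mv=\|M^{1/2}v\|^2$. The form vanishes exactly when Cauchy--Schwarz holds with equality, i.e. when $(\sqrt{p_i}v_i)_i$ is parallel to $(\sqrt{p_i})_i$. Because every $p_i>0$, this forces all $v_i$ to equal a common constant, so $v\in{\rm span}\{\mathbf 1\}$. Conversely, a direct check gives $(\mathrm{diag}(p)-pp^\top)\mathbf 1=p-p\langle p,\mathbf 1\rangle=0$. So the kernel is exactly ${\rm span}\{\mathbf 1\}$.

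Positive definiteness on $\mathbf 1^\perp$ then follows immediately. Any nonzero $v\in\mathbf 1^\perp$ lies outside ${\rm span}\{\mathbf 1\}$, since $\mathbf 1\notin\mathbf 1^\perp$, so $v^\top Mv>0$.

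There is no real obstacle in this argument. The only point needing care is the use of $p_i>0$ in the equality case: it is what excludes larger kernels coming from zero coordinates. That is why the lemma is stated for $p$ in the interior of the simplex, which softmax always produces.
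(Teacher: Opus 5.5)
Your proof is correct and complete: the entrywise derivative gives the Jacobian, the Cauchy--Schwarz equality case (which uses every $p_i>0$) pins the kernel to ${\rm span}\{\mathbf 1\}$, and positive semidefiniteness together with ${\rm span}\{\mathbf 1\}\cap\mathbf 1^\perp=\{0\}$ gives positive definiteness on $\mathbf 1^\perp$. The paper states this lemma without proof, treating it as standard. Your identity $v^\top(\mathrm{diag}(p)-pp^\top)v=\sum_i p_iv_i^2-(\sum_i p_iv_i)^2$, read as a variance, is the same computation the paper uses in the proof of Lemma~\ref{lem:app:softmax_jacobian_bound}, so your argument matches the paper's implicit reasoning and supplies the missing details.
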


\begin{lemma}[Operator-norm bound of the softmax Jacobian]\label{lem:app:softmax_jacobian_bound}
For any $p\in\Delta$,
\[
\bigl\|\mathrm{diag}(p)-pp^\top\bigr\|_2\le \frac12.
\]
Consequently, for any $z\in\mathbb{R}^m$,
\[
\|\nabla\sigma(z)\|_2\le \frac12.
\]
\end{lemma}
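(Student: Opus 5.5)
The plan is to bound the spectral norm of the symmetric matrix $M:=\mathrm{diag}(p)-pp^\top$ by a quadratic-form argument; the second claim then follows from Lemma~\ref{lem:app:softmax_jacobian}. Since $M$ is symmetric, $\|M\|_2=\max_{\|v\|_2=1}|v^\top M v|$. For any $v$,
\[
v^\top M v=\sum_i p_i v_i^2-\Bigl(\sum_i p_i v_i\Bigr)^2=\mathrm{Var}_{I\sim p}(v_I)\ge 0,
\]
the variance of the random variable taking value $v_i$ with probability $p_i$. This holds for all $p\in\Delta$, including boundary points, because the formula is purely algebraic. So $M$ is positive semidefinite, and it suffices to show $\mathrm{Var}_p(v)\le \tfrac12\|v\|_2^2$.

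The key step is to bound the variance by a function of the range of $v$ only over the support of $p$. By Popoviciu's inequality, a random variable supported on $[m,M']$ has variance at most $(M'-m)^2/4$. Here we may take $m=\min_i v_i$ and $M'=\max_i v_i$, choosing indices $j,k$ where these are attained. Then
\[
(M'-m)^2=(v_k-v_j)^2\le 2(v_k^2+v_j^2)\le 2\|v\|_2^2 .
\]
If $j=k$, then all $v_i$ are equal, the variance is zero, and the bound is trivial. Combining the two displays gives $v^\top M v\le \tfrac12\|v\|_2^2$, hence $\|M\|_2\le \tfrac12$.

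To keep the argument self-contained, I would also include a one-line proof of Popoviciu's inequality. Let $c=(m+M')/2$. Then $\mathrm{Var}(X)\le \mathbb{E}[(X-c)^2]\le \bigl((M'-m)/2\bigr)^2$, since the variance is the minimum of $\mathbb{E}[(X-c)^2]$ over $c$.

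For the consequence, Lemma~\ref{lem:app:softmax_jacobian} gives $\nabla\sigma(z)=\mathrm{diag}(\sigma(z))-\sigma(z)\sigma(z)^\top$ with $\sigma(z)\in\mathrm{int}\Delta\subset\Delta$. The bound just proved therefore applies directly and gives $\|\nabla\sigma(z)\|_2\le\tfrac12$.

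The only delicate point is the step from range to Euclidean norm, namely $(v_k-v_j)^2\le 2\|v\|_2^2$; the rest is routine. The constant $\tfrac12$ is attained at $p=(\tfrac12,\tfrac12,0,\dots)$ with $v=(1,-1,0,\dots)/\sqrt2$, which confirms that no sharper bound is available.
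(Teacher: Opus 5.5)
Your proof is correct and follows the paper's argument: both write the quadratic form $v^\top(\mathrm{diag}(p)-pp^\top)v$ as a variance, bound it by Popoviciu's inequality, and control the range of $v$ by $\sqrt{2}\|v\|_2$ (the paper does this via Cauchy--Schwarz with $e_i-e_j$, you via $(a-b)^2\le 2(a^2+b^2)$), then deduce the softmax bound from Lemma~\ref{lem:app:softmax_jacobian}. Your self-contained proof of Popoviciu's inequality and your example showing the constant $\tfrac12$ is attained are correct additions not found in the paper.
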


\begin{proof}
Let $v\in\mathbb{R}^m$ be any unit vector. Then
\[
v^\top(\mathrm{diag}(p)-pp^\top)v
=
\sum_{i=1}^m p_i v_i^2-\Bigl(\sum_{i=1}^m p_i v_i\Bigr)^2
\]
equals the variance $\mathrm{Var}(X)$ of a random variable $X$ with $\mathbb{P}(X=v_i)=p_i$.
By Popoviciu's inequality,
\[
\mathrm{Var}(X)\le \frac{(\max X-\min X)^2}{4}.
\]
Since $\max_i v_i-\min_j v_j \le \max_{i,j}|v_i-v_j|
\le \max_{i,j}\|e_i-e_j\|_2\|v\|_2=\sqrt{2}$,
we obtain $\mathrm{Var}(X)\le (\sqrt{2})^2/4=1/2$.
Taking the supremum over unit vectors $v$ yields $\|\mathrm{diag}(p)-pp^\top\|_2\le 1/2$.
The second claim follows from Lemma~\ref{lem:app:softmax_jacobian} with $p=\sigma(z)$.
\end{proof}

\begin{lemma}[Local linear convergence from Jacobian contraction]\label{lem:app:local_contraction}
Let $(\mathbb{R}^n,\|\cdot\|)$ be a normed space and let $F:\mathbb{R}^n\to\mathbb{R}^n$ be $C^1$.
Assume $x^*\in\mathbb{R}^n$ satisfies $F(x^*)=x^*$ and $\|\nabla F(x^*)\|_{\rm op}<1$.
Then there exist $\varepsilon>0$ and $q\in(0,1)$ such that for all $x$ with $\|x-x^*\|<\varepsilon$ and all $k\ge 0$,
\[
\|F^k(x)-x^*\| \le q^k \|x-x^*\|.
\]
In other words, $F^k(x)$ converges to $x^*$ locally linearly.
\end{lemma}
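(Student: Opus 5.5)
We prove the lemma statement just above (Lemma~\ref{lem:app:local_contraction}) with the standard contraction argument, driven by continuity of the Jacobian. Write $J:=\nabla F(x^*)$ and let $\|\cdot\|_{\rm op}$ be the operator norm induced by the given norm $\|\cdot\|$. Since $\|J\|_{\rm op}<1$, we may fix $q$ with $\|J\|_{\rm op}<q<1$, for instance $q:=(1+\|J\|_{\rm op})/2$.

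Because $F$ is $C^1$, the map $x\mapsto\nabla F(x)$ is continuous. The operator norm is continuous on the finite-dimensional space of matrices, since all norms there are equivalent. Hence there is $\varepsilon>0$ such that $\|\nabla F(y)\|_{\rm op}\le q$ for every $y$ in the open ball $B:=\{y:\|y-x^*\|<\varepsilon\}$.

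Next we obtain a one-step contraction toward $x^*$. Take $x\in B$. The segment $x^*+t(x-x^*)$, $t\in[0,1]$, lies in $B$ because the ball is convex. By the fundamental theorem of calculus applied to $t\mapsto F(x^*+t(x-x^*))$,
\[
F(x)-x^* = F(x)-F(x^*) = \int_0^1 \nabla F\bigl(x^*+t(x-x^*)\bigr)(x-x^*)\,dt.
\]
This is a vector-valued integral of a continuous integrand. Bounding the norm of the integral by the integral of the norm gives
\[
\|F(x)-x^*\|\le \int_0^1 \|\nabla F(x^*+t(x-x^*))\|_{\rm op}\,\|x-x^*\|\,dt\le q\|x-x^*\|.
\]
If one prefers to avoid vector-valued integration, the same bound follows from a mean-value inequality: pair $F(x)-x^*$ with a norming functional $\ell$ of operator-dual norm $1$ (Hahn--Banach) and apply the scalar mean value theorem to $t\mapsto\ell(F(x^*+t(x-x^*)))$.

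Finally, induct on $k$. The case $k=0$ is trivial. Suppose $\|F^k(x)-x^*\|\le q^k\|x-x^*\|$. Since $q^k\|x-x^*\|<\varepsilon$, the point $F^k(x)$ remains in $B$, so the one-step bound applies and gives $\|F^{k+1}(x)-x^*\|\le q\,q^k\|x-x^*\|$. This proves the claim.

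The only point requiring care is justifying the mean-value inequality in an arbitrary norm together with its induced operator norm; the integral form, or the norming-functional trick, handles it. Everything else is routine.
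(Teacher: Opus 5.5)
Your proof is correct and follows the paper's argument: continuity of $\nabla F$ gives a ball $B$ on which $\|\nabla F\|_{\mathrm{op}}\le q<1$, the fundamental theorem of calculus along a segment in $B$ gives $\|F(x)-x^*\|\le q\|x-x^*\|$, and induction, using the invariance of $B$, gives the $q^k$ rate. The only differences are minor: the paper first proves the contraction estimate for arbitrary $x,y\in B$ and then sets $y=x^*$, whereas you work directly with the segment from $x^*$, and you add an optional norming-functional justification of the mean-value inequality.
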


\begin{proof}
By continuity of $\nabla F$, there exist $\varepsilon>0$ and $q\in(0,1)$ such that
$\|\nabla F(x)\|_{\rm op}\le q$ for all $x$ in $B:=\{x\mid \|x-x^*\|<\varepsilon\}$.
For any $x,y\in B$, define $\gamma(t)=tx+(1-t)y$.
By the fundamental theorem of calculus,
\[
F(x)-F(y)=\int_0^1 \nabla F(\gamma(t))(x-y)\,dt,
\]
hence $\|F(x)-F(y)\|\le \int_0^1 \|\nabla F(\gamma(t))\|_{\rm op}\,dt\cdot \|x-y\|\le q\|x-y\|$.
Thus $F$ is a contraction on $B$. Taking $y=x^*$ and using $F(x^*)=x^*$, we obtain
\[
\|F(x)-x^*\|\le q\|x-x^*\| \qquad (x\in B).
\]
Hence $F(B)\subset B$. By induction, for all $k\ge0$ and $x\in B$,
\[
\|F^k(x)-x^*\|\le q^k\|x-x^*\|.
\]
This inequality gives local linear convergence. In particular, since $q\in(0,1)$, we have $q^k\to0$, and therefore $\lim_{k\to\infty}F^k(x)=x^*$.
\end{proof}

\begin{theorem}[Local linear convergence of the regularized dynamics]\label{thm:app:normal_form_local_convergence}
Let $\alpha,\beta\in\mathbb{R}_{>0}$ and let $A\in\mathbb{R}^{m\times m}$.
Define $f:({\rm int}\Delta)^2\to({\rm int}\Delta)^2$ by
\[
f\left(\begin{pmatrix}p\\q\end{pmatrix}\right)
=
\begin{pmatrix}
\sigma\!\left(\frac{Aq + \beta\log p}{\alpha+\beta}\right)
\\[0.5ex]
\sigma\!\left(\frac{-A^\top p + \beta\log q}{\alpha+\beta}\right)
\end{pmatrix}.
\]
Let $(p^*,q^*)\in({\rm int}\Delta)^2$ be a fixed point of $f$.
Define
\[
P^* := \mathrm{diag}(p^*)-p^*{p^*}^\top,\qquad
Q^* := \mathrm{diag}(q^*)-q^*{q^*}^\top.
\]
If
\begin{equation}\label{eq:app:local_condition}
\alpha(\alpha+2\beta)>\|P^{*1/2}AQ^{*1/2}\|_2^2,
\end{equation}
then $f$ is locally linearly convergent around $(p^*,q^*)$, which implies that there exists a neighborhood $U$ of $(p^*,q^*)$
such that for all $(p_0,q_0)\in U$, the iterates $(p_{k+1},q_{k+1})=f(p_k,q_k)$ satisfy
\[
\lim_{k\to\infty}(p_k,q_k)=(p^*,q^*).
\]
\end{theorem}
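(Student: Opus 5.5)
The plan is to linearize $f$ at $(p^*,q^*)$ within the affine hull of the simplex, change coordinates so that the Jacobian becomes "scalar plus skew-symmetric", and then apply Lemma~\ref{lem:app:local_contraction}.

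\textbf{Coordinates.} Since $f$ maps $({\rm int}\Delta)^2$ into itself, I will regard it as a $C^1$ map on an open subset of the affine space $(p^*,q^*)+V$, where $V:=\mathbf 1^\perp\times\mathbf 1^\perp$. Writing $(p,q)=(p^*+u,\,q^*+v)$ with $(u,v)\in V$ gives a $C^1$ map $F$ on an open neighbourhood of $0$ in $V\cong\mathbb R^{2(m-1)}$, with $F(0)=0$. The proof of Lemma~\ref{lem:app:local_contraction} only uses $F$ on a small ball around the fixed point, so it applies verbatim here, with any norm we like on $V$.

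\textbf{Jacobian.} By Lemma~\ref{lem:app:softmax_jacobian} and the chain rule, using $d\log p=\mathrm{diag}(p)^{-1}dp$, the first block is
\[
dp'=\tfrac{1}{\alpha+\beta}\,P^*\bigl(A\,dq+\beta\,\mathrm{diag}(p^*)^{-1}dp\bigr).
\]
For $dp\in\mathbf 1^\perp$ we have $P^*\mathrm{diag}(p^*)^{-1}dp=dp-p^*(\mathbf 1^\top dp)=dp$. The second block is handled the same way. Hence on $V$,
\[
J=\frac{1}{\alpha+\beta}\begin{pmatrix}\beta I & P^*A\\ -Q^*A^\top & \beta I\end{pmatrix}.
\]
$P^*$ is symmetric positive semidefinite with kernel ${\rm span}\{\mathbf 1\}$ (Lemma~\ref{lem:app:softmax_jacobian}). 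Therefore $P^{*1/2}$ maps $\mathbf 1^\perp$ isomorphically onto itself, and likewise for $Q^{*1/2}$. So $S:=\mathrm{diag}(P^{*1/2},Q^{*1/2})$ restricted to $V$ is invertible.

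\textbf{Similarity transform.} Conjugating gives
\[
S^{-1}JS=\frac{1}{\alpha+\beta}\begin{pmatrix}\beta I & B\\ -B^\top & \beta I\end{pmatrix}\quad\text{on }V,\qquad B:=P^{*1/2}AQ^{*1/2}.
\]
Both $B$ and $B^\top$ map into $\mathbf 1^\perp$, since their ranges lie in the ranges of $P^{*1/2}$ and $Q^{*1/2}$ respectively. This matrix is $\beta I+K$ with $K$ skew-symmetric, hence normal. Moreover, for $w\in V$,
\[
\|(\beta I+K)w\|^2=\beta^2\|w\|^2+\|Kw\|^2\le(\beta^2+\|B\|_2^2)\|w\|^2,
\]
because $w^\top Kw=0$ and $\|K\|_2=\|B\|_2$. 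So the operator norm of $J$ in the norm $\|w\|_S:=\|S^{-1}w\|_2$ on $V$ is at most
\[
\frac{\sqrt{\beta^2+\|B\|_2^2}}{\alpha+\beta}.
\]
This is $<1$ exactly when $\|B\|_2^2<(\alpha+\beta)^2-\beta^2=\alpha(\alpha+2\beta)$, which is condition~\eqref{eq:app:local_condition}.

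\textbf{Conclusion and main obstacle.} Lemma~\ref{lem:app:local_contraction} with the norm $\|\cdot\|_S$ then gives local linear convergence, and hence the stated limit. I expect the main obstacle to be the bookkeeping on the tangent space. One must check that $\mathrm{diag}(p^*)^{-1}$ composed with $P^*$ acts as the identity on $\mathbf 1^\perp$. One must also check that $P^{*1/2}$ is invertible there, so that $\|\cdot\|_S$ is a genuine norm on $V$. Finally, one must justify that the iteration stays in the affine chart, so that the lemma is applied to a map $V\to V$ rather than to a map on $\mathbb R^{2m}$, where the Jacobian would fail to be contractive in the $\mathbf 1$ directions.
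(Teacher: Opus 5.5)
Your proof is correct and is essentially the paper's argument. Both use the same weighted norm built from $P^{*1/2},Q^{*1/2}$, the same conjugated Jacobian $\frac{1}{\alpha+\beta}\bigl(\beta I+K\bigr)$ with $K$ skew-symmetric with off-diagonal blocks $B,-B^\top$, the same bound $\sqrt{\beta^2+\|B\|_2^2}/(\alpha+\beta)$, and then the local contraction lemma.

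The only difference is the chart. The paper linearizes the centered-logit map (off-diagonal blocks $AQ^*$, $-A^\top P^*$, norm $\|Sz\|_2$). You linearize $f$ directly on the tangent space of the simplex (blocks $P^*A$, $-Q^*A^\top$, norm $\|S^{-1}w\|_2$). The two correspond under $d\sigma=\mathrm{diag}(P^*,Q^*)$. Your version gives linear convergence of $(p_k,q_k)$ directly, without the paper's transfer back through $\sigma$ and $\log$.
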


\begin{proof}
Step 1: Coordinate transform. Let
\[
\Pi := I-\frac{1}{m}\mathbf 1\mathbf 1^\top
\]
be the orthogonal projection onto $\mathbf 1^\perp$.
By Definition~\ref{def:app:softmax}, softmax is shift-invariant, hence for any $z\in\mathbb{R}^m$,
\[
\sigma(z)=\sigma(\Pi z).
\]
Define $F:\mathbf 1^\perp\times\mathbf 1^\perp\to \mathbf 1^\perp\times\mathbf 1^\perp$ by
\[
F\left(\begin{pmatrix}x \\ y\end{pmatrix}\right)
:=
\begin{pmatrix}
\Pi\frac{A\sigma(y)+\beta x}{\alpha+\beta}
\\[0.5ex]
\Pi\frac{-A^\top\sigma(x)+\beta y}{\alpha+\beta}
\end{pmatrix}
=
\frac{1}{\alpha+\beta}
\begin{pmatrix}\Pi&0\\0&\Pi\end{pmatrix}
\begin{pmatrix}A\sigma(y)+\beta x\\-A^\top\sigma(x)+\beta y\end{pmatrix}.
\]
Using $\sigma(\cdot)=\sigma(\Pi\cdot)$, we can write
\[
f = \begin{pmatrix}\sigma\\\sigma\end{pmatrix}\circ F \circ \begin{pmatrix}\log\\\log\end{pmatrix},
\]
where $\begin{pmatrix}\log\\\log\end{pmatrix}(p,q)=(\log p,\log q)$.
Therefore, local convergence of $F$ around the fixed point
\[
(x^*,y^*) := (\Pi\log p^*,\,\Pi\log q^*)
\]
implies local convergence of $f$ around $(p^*,q^*)$ by continuity of $\log$ and $\sigma$.

Step 2: Jacobian of $F$ at the fixed point. 
Let $P^*=\nabla\sigma(x^*)$ and $Q^*=\nabla\sigma(y^*)$.
By Lemma~\ref{lem:app:softmax_jacobian}, these coincide with
$P^*=\mathrm{diag}(p^*)-p^*{p^*}^\top$ and $Q^*=\mathrm{diag}(q^*)-q^*{q^*}^\top$.
Differentiating $F$ yields
\[
\nabla F(x^*,y^*)
=
\frac{1}{\alpha+\beta}
\begin{pmatrix}\Pi&0\\0&\Pi\end{pmatrix}
\begin{pmatrix}
\beta I & AQ^*\\
- A^\top P^* & \beta I
\end{pmatrix}.
\]
Since the domain is $\mathbf 1^\perp\times\mathbf 1^\perp$, $\Pi$ acts as the identity and can be omitted
when evaluating operator norms restricted to this subspace.

Step 3: A weighted norm and the operator-norm bound.
Define a norm $\|\cdot\|_*:\mathbf 1^\perp\times\mathbf 1^\perp\to\mathbb{R}_{\ge 0}$ by
\[
\left\|\begin{pmatrix}x\\y\end{pmatrix}\right\|_* := \sqrt{x^\top P^*x + y^\top Q^*y}.
\]
By Lemma~\ref{lem:app:softmax_jacobian}, $P^*$ and $Q^*$ are symmetric positive definite on $\mathbf 1^\perp$,
so $\|\cdot\|_*$ is a valid norm on $\mathbf 1^\perp\times\mathbf 1^\perp$.
Let $S:=\mathrm{diag}(P^{*1/2},Q^{*1/2})$. Then for $z=\begin{pmatrix}x\\y\end{pmatrix}$,
\[
\|z\|_*=\|Sz\|_2.
\]
Hence for any linear map $L$ on $\mathbf 1^\perp\times\mathbf 1^\perp$,
\[
\|L\|_{*,{\rm op}}=\|SLS^{-1}\|_2.
\]
Applying this to $L=\nabla F(x^*,y^*)$ gives
\[
\|\nabla F(x^*,y^*)\|_{*,{\rm op}}
=
\frac{1}{\alpha+\beta}
\left\|
\begin{pmatrix}
\beta I & B\\
-B^\top & \beta I
\end{pmatrix}
\right\|_2,
\qquad
B:=P^{*1/2}AQ^{*1/2}.
\]
Let $M:=\begin{pmatrix}\beta I & B\\-B^\top & \beta I\end{pmatrix}$. Then
\[
M^\top M=
\begin{pmatrix}
\beta^2 I+BB^\top & 0\\
0 & \beta^2 I+B^\top B
\end{pmatrix},
\]
and therefore
\[
\|M\|_2^2=\|M^\top M\|_2=\beta^2+\|B\|_2^2
\quad\Longrightarrow\quad
\|M\|_2=\sqrt{\beta^2+\|B\|_2^2}.
\]
Consequently,
\[
\|\nabla F(x^*,y^*)\|_{*,{\rm op}}
=
\frac{1}{\alpha+\beta}\sqrt{\beta^2+\|B\|_2^2}.
\]
Thus $\|\nabla F(x^*,y^*)\|_{*,{\rm op}}<1$ is equivalent to
\[
\beta^2+\|B\|_2^2<(\alpha+\beta)^2
\quad\Longleftrightarrow\quad
\|B\|_2^2<\alpha(\alpha+2\beta),
\]
which is exactly \eqref{eq:app:local_condition}.
By Lemma~\ref{lem:app:local_contraction}, $F$ is locally linearly convergent around $(x^*,y^*)$ under $\|\cdot\|_*$.
Moreover, $\log$ and $\sigma$ are locally Lipschitz around $(p^*,q^*)$ and $(x^*,y^*)$, respectively.
Hence, by the composition identity $f=\begin{pmatrix}\sigma\\\sigma\end{pmatrix}\circ F\circ\begin{pmatrix}\log\\\log\end{pmatrix}$, $f$ is locally linearly convergent around $(p^*,q^*)$.
\end{proof}

\begin{corollary}\label{cor:app:sufficient_condition}
Under the same definitions as in Theorem~\ref{thm:app:normal_form_local_convergence}, the regularized policy update rule is locally linearly convergent to the unique fixed point if the following condition is satisfied:
\begin{equation}
\alpha(\alpha+2\beta) > \frac{1}{4}\|A\|_2^2.
\end{equation}
\end{corollary}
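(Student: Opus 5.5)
The plan is to reduce the corollary to Theorem~\ref{thm:app:normal_form_local_convergence} by bounding the weighted quantity $\|P^{*1/2}AQ^{*1/2}\|_2^2$ by $\|A\|_2^2/4$, and then to address existence and uniqueness of the fixed point separately.

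For the norm bound, submultiplicativity gives
\[
\|P^{*1/2}AQ^{*1/2}\|_2\le \|P^{*1/2}\|_2\,\|A\|_2\,\|Q^{*1/2}\|_2 .
\]
Since $P^*$ and $Q^*$ are symmetric positive semidefinite (Lemma~\ref{lem:app:softmax_jacobian}), we have $\|P^{*1/2}\|_2=\|P^*\|_2^{1/2}$, and likewise for $Q^*$. By Lemma~\ref{lem:app:softmax_jacobian_bound}, $\|P^*\|_2\le 1/2$ and $\|Q^*\|_2\le 1/2$. Combining these,
\[
\|P^{*1/2}AQ^{*1/2}\|_2^2\le \tfrac12\cdot\tfrac12\,\|A\|_2^2=\tfrac14\|A\|_2^2 .
\]
Hence the assumed condition $\alpha(\alpha+2\beta)>\|A\|_2^2/4$ implies \eqref{eq:app:local_condition} at every interior fixed point. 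This bound holds uniformly, whatever the fixed point turns out to be.

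For existence and uniqueness, I would identify the fixed points of $f$ with the stationary points of the entropy-regularized game. At a fixed point, $(\alpha+\beta)\log p^*=Aq^*+\beta\log p^*+c\mathbf 1$, so $p^*=\sigma(Aq^*/\alpha)$, and symmetrically $q^*=\sigma(-A^\top p^*/\alpha)$. These are exactly the first-order optimality conditions of the saddle problem
\[
\max_{p\in\Delta}\min_{q\in\Delta}\; p^\top Aq+\alpha H(p)-\alpha H(q).
\]
The objective is strictly concave in $p$ and strictly convex in $q$ because $\alpha>0$. By the minimax theorem a saddle point exists. It lies in $(\mathrm{int}\Delta)^2$ because the derivative of the entropy blows up at the boundary, and strict convexity--concavity makes it unique. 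Every fixed point of $f$ satisfies these optimality conditions, so the fixed point is unique. Applying Theorem~\ref{thm:app:normal_form_local_convergence} at this point then gives local linear convergence.

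The main obstacle is the uniqueness step, more precisely ensuring that the argument is rigorous: every interior fixed point must satisfy the quantal-response equations, and those equations must have exactly one solution. I would argue this by a monotonicity argument. Take two solutions $(p_1,q_1)$ and $(p_2,q_2)$, subtract their log-equations, and pair the result with $p_1-p_2$ and $q_1-q_2$. The bilinear cross terms cancel by the zero-sum structure, leaving
\[
\alpha\langle \log p_1-\log p_2,\,p_1-p_2\rangle+\alpha\langle \log q_1-\log q_2,\,q_1-q_2\rangle=0 .
\]
Each inner product is nonnegative and vanishes only when the two points coincide, because $\log$ is strictly monotone componentwise. This forces $p_1=p_2$ and $q_1=q_2$.
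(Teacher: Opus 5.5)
Your proposal is correct. The norm-bound step is exactly the paper's proof: submultiplicativity, $\|P^{*1/2}\|_2=\|P^*\|_2^{1/2}$ for positive semidefinite $P^*$, and $\|\mathrm{diag}(p)-pp^\top\|_2\le 1/2$ from Lemma~\ref{lem:app:softmax_jacobian_bound}. Together these give $\|P^{*1/2}AQ^{*1/2}\|_2^2\le \|A\|_2^2/4$, after which Theorem~\ref{thm:app:normal_form_local_convergence} is invoked.

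Where you go beyond the paper is existence and uniqueness of the fixed point. The paper's proof never addresses this; it simply applies the theorem at ``the'' fixed point, so ``unique'' in the statement is asserted, not proved. Your reduction of fixed points of $f$ to the quantal-response equations $p^*=\sigma(Aq^*/\alpha)$, $q^*=\sigma(-A^\top p^*/\alpha)$ is correct. So is your monotonicity argument: the bilinear cross terms cancel because the game is zero-sum, leaving $\alpha\langle\log p_1-\log p_2,p_1-p_2\rangle$ plus its $q$-analogue, both nonnegative. This closes the gap, and it also explains why the general-sum extension, Theorem~\ref{thm:app:general_sum_extension}, makes no uniqueness claim.

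For existence, you should state explicitly the converse direction you use implicitly when you apply the theorem ``at this point.'' If $\alpha\log p^*=Aq^*+c\mathbf 1$, then $(Aq^*+\beta\log p^*)/(\alpha+\beta)=\log p^*+c'\mathbf 1$, and similarly for $q^*$, so $f(p^*,q^*)=(p^*,q^*)$. Hence the saddle point is indeed a fixed point of $f$ in $(\mathrm{int}\Delta)^2$.

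You could also shorten the existence step. Applying Brouwer to the continuous map $(p,q)\mapsto(\sigma(Aq/\alpha),\sigma(-A^\top p/\alpha))$ on $\Delta\times\Delta$ gives a fixed point directly, and the image of $\sigma$ is automatically interior. This avoids both the minimax theorem and the boundary-derivative argument.
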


\begin{proof}
Recall the definition $B=P^{*1/2}AQ^{*1/2}$ from the proof of Theorem~\ref{thm:app:normal_form_local_convergence}.
Using the submultiplicativity of the spectral norm, we have
\[
\|B\|_2 \le \|P^{*1/2}\|_2 \|A\|_2 \|Q^{*1/2}\|_2.
\]
Note that for any positive semidefinite matrix $M$, $\|M^{1/2}\|_2 = \sqrt{\|M\|_2}$.
From Lemma~\ref{lem:app:softmax_jacobian_bound}, we have the bounds $\|P^*\|_2 \le 1/2$ and $\|Q^*\|_2 \le 1/2$.
Substituting these into the inequality yields
\[
\|B\|_2 \le \sqrt{\frac{1}{2}} \|A\|_2 \sqrt{\frac{1}{2}} = \frac{1}{2}\|A\|_2.
\]
Squaring both sides, we obtain
\[
\|B\|_2^2 \le \frac{1}{4}\|A\|_2^2.
\]
Therefore, the condition provided in the corollary,
\[
\frac{1}{4}\|A\|_2^2 < \alpha(\alpha+2\beta),
\]
is sufficient to guarantee
\[
\|B\|_2^2 < \alpha(\alpha+2\beta),
\]
which satisfies the sufficient condition for local linear convergence \eqref{eq:app:local_condition} in Theorem~\ref{thm:app:normal_form_local_convergence}.
\end{proof}

Theorem~\ref{thm:app:normal_form_local_convergence} extends to general-sum settings as follows.

\begin{theorem}[General-sum extension of local linear convergence]\label{thm:app:general_sum_extension}
Let \(\alpha,\beta\in\mathbb{R}_{>0}\), \(A_1,A_2\in\mathbb{R}^{m\times n}\), and define
\[
f:\bigl({\rm int}\Delta_m\bigr)\times\bigl({\rm int}\Delta_n\bigr)\to\bigl({\rm int}\Delta_m\bigr)\times\bigl({\rm int}\Delta_n\bigr)
\]
by
\[
f\!\left(\begin{pmatrix}p\\q\end{pmatrix}\right)
=
\begin{pmatrix}
\sigma\!\left(\dfrac{A_1q+\beta\log p}{\alpha+\beta}\right)\\[0.6ex]
\sigma\!\left(\dfrac{A_2^\top p+\beta\log q}{\alpha+\beta}\right)
\end{pmatrix}.
\]
Let \((p^*,q^*)\in({\rm int}\Delta_m)\times({\rm int}\Delta_n)\) be a fixed point of \(f\), and define
\[
P^*:=\mathrm{diag}(p^*)-p^*{p^*}^{\top},\qquad
Q^*:=\mathrm{diag}(q^*)-q^*{q^*}^{\top},
\]
\[
B_1:=P^{*1/2}A_1Q^{*1/2},\qquad
B_2:=Q^{*1/2}A_2^\top P^{*1/2}.
\]
If
\[
\left\|
\begin{pmatrix}
\beta I_m & B_1\\
B_2 & \beta I_n
\end{pmatrix}
\right\|_2<\alpha+\beta,
\]
then \(f\) is locally linearly convergent around \((p^*,q^*)\).
\end{theorem}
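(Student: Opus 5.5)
The plan mirrors the proof of Theorem~\ref{thm:app:normal_form_local_convergence}; only the final norm computation changes. Let $\Pi_m := I_m-\frac1m\mathbf 1\mathbf 1^\top$ and $\Pi_n := I_n-\frac1n\mathbf 1\mathbf 1^\top$ be the orthogonal projections onto $\mathbf 1^\perp\subset\mathbb{R}^m$ and $\mathbf 1^\perp\subset\mathbb{R}^n$. By the shift invariance of softmax (Definition~\ref{def:app:softmax}), we can pass to log-coordinates. Define
\[
F(x,y) := \frac{1}{\alpha+\beta}\begin{pmatrix}\Pi_m\bigl(A_1\sigma(y)+\beta x\bigr)\\ \Pi_n\bigl(A_2^\top\sigma(x)+\beta y\bigr)\end{pmatrix}
\]
on $\mathbf 1^\perp_m\times\mathbf 1^\perp_n$. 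Then $f=(\sigma,\sigma)\circ F\circ(\log,\log)$, and $(x^*,y^*)=(\Pi_m\log p^*,\Pi_n\log q^*)$ is a fixed point of $F$. Since $\log$ is a smooth bijection from ${\rm int}\Delta$ onto $\mathbf 1^\perp$ after projection, with inverse $\sigma$, local linear convergence of $F$ transfers to $f$, exactly as in Step~1 of the zero-sum proof.

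Next, we differentiate $F$ at the fixed point. By Lemma~\ref{lem:app:softmax_jacobian}, $\nabla\sigma(x^*)=P^*$ and $\nabla\sigma(y^*)=Q^*$, so restricted to $\mathbf 1^\perp_m\times\mathbf 1^\perp_n$,
\[
\nabla F(x^*,y^*)=\frac{1}{\alpha+\beta}\begin{pmatrix}\beta I_m & \Pi_m A_1Q^*\\ \Pi_n A_2^\top P^* & \beta I_n\end{pmatrix}.
\]
We then use the weighted norm $\|(x,y)\|_*:=\sqrt{x^\top P^*x+y^\top Q^*y}$, which is a genuine norm on $\mathbf 1^\perp\times\mathbf 1^\perp$ by Lemma~\ref{lem:app:softmax_jacobian}. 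With $S=\mathrm{diag}(P^{*1/2},Q^{*1/2})$, the operator norm satisfies $\|L\|_{*,\rm op}=\|SLS^{-1}\|_2$, where $S^{-1}$ means the inverse on $\mathbf 1^\perp$.

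The conjugation is the main step to verify. The off-diagonal blocks become $P^{*1/2}\Pi_mA_1Q^*Q^{*-1/2}$ and $Q^{*1/2}\Pi_nA_2^\top P^*P^{*-1/2}$. Since $P^*\mathbf 1=0$, we have $P^{*1/2}\mathbf 1=0$, and hence $P^{*1/2}\Pi_m=P^{*1/2}$. Also $Q^*Q^{*-1/2}=Q^{*1/2}$ on $\mathbf 1^\perp$. So the off-diagonal blocks reduce to $B_1$ and $B_2$. The diagonal blocks remain $\beta I$ on the subspaces, and therefore
\[
\|\nabla F(x^*,y^*)\|_{*,\rm op}\le\frac{1}{\alpha+\beta}\left\|\begin{pmatrix}\beta I_m&B_1\\B_2&\beta I_n\end{pmatrix}\right\|_2.
\]
The restriction to $\mathbf 1^\perp$ can only decrease the norm, and $B_1,B_2$ map into $\mathbf 1^\perp$ because $P^{*1/2},Q^{*1/2}$ annihilate $\mathbf 1$. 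By hypothesis the right-hand side is $<1$.

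Unlike the zero-sum case, no block-orthogonality simplification is available, since $B_2\neq -B_1^\top$ in general. This is why the hypothesis is stated directly in terms of the full block matrix, and we do not need to compute its norm. Lemma~\ref{lem:app:local_contraction} then gives local linear convergence of $F$ in $\|\cdot\|_*$. Finally, local Lipschitzness of $\sigma$ and $\log$ near $(x^*,y^*)$ and $(p^*,q^*)$ transfers this to local linear convergence of $f$.

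The step needing the most care is making the restriction to $\mathbf 1^\perp$ and the pseudo-inverse $S^{-1}$ rigorous, since $P^{*1/2}$ is singular on all of $\mathbb{R}^m$. The fact that $P^{*1/2}$ has kernel exactly ${\rm span}\{\mathbf 1\}$ and is invertible on $\mathbf 1^\perp$ (Lemma~\ref{lem:app:softmax_jacobian}) should settle this.
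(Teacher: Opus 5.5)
Your proposal is correct and follows essentially the same route as the paper's proof. Both pass to projected log-coordinates $F$ on $\mathbf 1_m^\perp\times\mathbf 1_n^\perp$, conjugate $\nabla F(x^*,y^*)$ by $S=\mathrm{diag}(P^{*1/2},Q^{*1/2})$ to obtain the block matrix in $B_1,B_2$, and conclude via Lemma~\ref{lem:app:local_contraction} and local Lipschitzness of $\sigma$ and $\Pi\log$. Your explicit use of $P^{*1/2}\Pi_m=P^{*1/2}$ and your inequality (rather than equality) for the restricted operator norm are slightly more careful than the paper, which silently drops $\Pi_m$ in front of $A_1Q^*y$ and states the norm identity as an equality.
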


\begin{proof}
Let
\[
\Pi_m:=I_m-\frac{1}{m}\mathbf 1_m\mathbf 1_m^\top,\qquad
\Pi_n:=I_n-\frac{1}{n}\mathbf 1_n\mathbf 1_n^\top,
\]
and define \(F:\mathbf 1_m^\perp\times\mathbf 1_n^\perp\to\mathbf 1_m^\perp\times\mathbf 1_n^\perp\) by
\[
F\!\left(\begin{pmatrix}x\\y\end{pmatrix}\right)
:=
\begin{pmatrix}
\Pi_m\dfrac{A_1\sigma(y)+\beta x}{\alpha+\beta}\\[0.6ex]
\Pi_n\dfrac{A_2^\top\sigma(x)+\beta y}{\alpha+\beta}
\end{pmatrix}.
\]
With \((x^*,y^*):=(\Pi_m\log p^*,\,\Pi_n\log q^*)\), we have \(\sigma(x^*)=p^*\), \(\sigma(y^*)=q^*\), and \((x^*,y^*)\) is a fixed point of \(F\).
Moreover, by shift invariance of softmax,
\[
f=\begin{pmatrix}\sigma\\\sigma\end{pmatrix}
\circ F\circ
\begin{pmatrix}\Pi_m\log\\\Pi_n\log\end{pmatrix}.
\]

At \((x^*,y^*)\), using \(\nabla\sigma(x^*)=P^*\) and \(\nabla\sigma(y^*)=Q^*\),
\[
\nabla F(x^*,y^*)
=
\frac{1}{\alpha+\beta}
\begin{pmatrix}
\beta I_m & \Pi_mA_1Q^*\\
\Pi_nA_2^\top P^* & \beta I_n
\end{pmatrix}.
\]
On \(\mathbf 1_m^\perp\times\mathbf 1_n^\perp\), \(\Pi_m,\Pi_n\) act as identities, so for \(z=(x,y)\) in this subspace,
\[
\nabla F(x^*,y^*)z
=
\frac{1}{\alpha+\beta}
\begin{pmatrix}
\beta x+A_1Q^*y\\
A_2^\top P^*x+\beta y
\end{pmatrix}.
\]

Define the weighted norm
\[
\left\|\begin{pmatrix}x\\y\end{pmatrix}\right\|_*:=\sqrt{x^\top P^*x+y^\top Q^*y}.
\]
Since \(P^*,Q^*\) are positive definite on \(\mathbf 1_m^\perp,\mathbf 1_n^\perp\), this is a norm.
Let \(u:=P^{*1/2}x\), \(v:=Q^{*1/2}y\). Then
\[
\|z\|_*=\left\|\begin{pmatrix}u\\v\end{pmatrix}\right\|_2
\]
and
\[
\left\|\nabla F(x^*,y^*)z\right\|_*
=
\frac{1}{\alpha+\beta}
\left\|
\begin{pmatrix}
\beta I_m & B_1\\
B_2 & \beta I_n
\end{pmatrix}
\begin{pmatrix}u\\v\end{pmatrix}
\right\|_2.
\]
Hence
\[
\|\nabla F(x^*,y^*)\|_{*,{\rm op}}
=
\frac{1}{\alpha+\beta}
\left\|
\begin{pmatrix}
\beta I_m & B_1\\
B_2 & \beta I_n
\end{pmatrix}
\right\|_2
<1.
\]
By Lemma~\ref{lem:app:local_contraction}, \(F\) is locally linearly convergent around \((x^*,y^*)\): there exist \(\varepsilon>0\), \(r\in(0,1)\) such that
\[
\|F^k(z)-z^*\|_*\le r^k\|z-z^*\|_*
\]
for all \(z\) with \(\|z-z^*\|_*<\varepsilon\), where \(z^*:=(x^*,y^*)\).

Finally, \((\Pi_m\log,\Pi_n\log)\) and \((\sigma,\sigma)\) are locally Lipschitz around \((p^*,q^*)\) and \((x^*,y^*)\), respectively. Therefore, there exist constants \(C_1,C_2>0\) and a neighborhood \(U\) of \((p^*,q^*)\) such that for all \((p_0,q_0)\in U\),
Since \((\Pi_m\log,\Pi_n\log)\circ(\sigma,\sigma)=\mathrm{id}\) on \(\mathbf 1_m^\perp\times\mathbf 1_n^\perp\), we have \(f^k=(\sigma,\sigma)\circ F^k\circ(\Pi_m\log,\Pi_n\log)\) for all \(k\ge0\).
\[
\left\|f^k(p_0,q_0)-(p^*,q^*)\right\|_2
\le
C_2\,r^k\,C_1\,\left\|(p_0,q_0)-(p^*,q^*)\right\|_2.
\]
Thus \(f\) is locally linearly convergent around \((p^*,q^*)\).
\end{proof}

\begin{remark}
Extending Theorem~\ref{thm:app:normal_form_local_convergence}, Theorem~\ref{thm:app:general_sum_extension}, and Corollary~\ref{cor:app:sufficient_condition} to multi-player games is challenging. Two-player games use a two-dimensional payoff matrix, whereas three or more players require a higher-order payoff tensor. The contraction-mapping approach may remain effective, but analyzing the higher-order mapping requires extending the current proof framework beyond the Jacobian-based block-matrix analysis used here.
\end{remark}

\subsection{Finite-Length Games}\label{sec:app:finite_length_games}
\begin{theorem}[Convergence of KLENT under bounded episode length]\label{thm:app:finite_length_convergence}
Assume that there exists $T_{\max}$ such that the terminal step $T$ always satisfies $T \le T_{\max}$.
Then the KLENT policy converges to an entropy-regularized optimal policy that satisfies
\[
\pi(a | s) \propto \exp\!\left(\frac{Q^{\pi}(s,a)}{\alpha}\right),
\qquad \alpha>0.
\]
\end{theorem}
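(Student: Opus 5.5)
Model the algorithm in an idealized tabular form: at every reachable state $s$ the agent keeps a policy $\pi_k(\cdot|s)$ and an action-value $Q_k(s,a)$. One iteration sets $\pi_{k+1}(\cdot|s)$ by Equation~\ref{update_rule} with $Q^{\pi_k}$ in place of the network output, and takes $Q_{k+1}$ to be the exact evaluation of the current policy. Every $\lambda$-return target has the policy's action-value as its fixed point, so exact evaluation is what the value fit converges to. The footnote after the assumption says the reachable transition graph is acyclic with depth at most $T_{\max}$. So stratify states by height $h(s)$, the maximum number of remaining steps before termination, and argue by induction on $h$.

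\emph{Base case.} Suppose $h(s)=1$, meaning every action from $s$ leads to a terminal state. Then $Q(s,a)=r(s,a)$ is a constant $c_a$ that does not depend on any policy. The update becomes the scalar affine recursion in log-space
\[
\log\pi_{k+1}(a|s)=\tfrac{\beta}{\alpha+\beta}\log\pi_k(a|s)+\tfrac{1}{\alpha+\beta}c_a+\text{const}_k .
\]
Projecting onto $\mathbf 1^\perp$ as in the proof of Theorem~\ref{thm:app:normal_form_local_convergence} removes the normalization constant. What remains is a contraction with factor $\beta/(\alpha+\beta)<1$, because $\alpha>0$. Its unique fixed point is $\Pi\log\pi=\Pi c/\alpha$, that is $\pi(\cdot|s)=\sigma(c/\alpha)$. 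This already has the claimed form $\pi\propto\exp(Q^\pi/\alpha)$, and the convergence is geometric.

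\emph{Inductive step.} Assume every state of height $<h$ has $\pi_k(\cdot|s')\to\pi^*(\cdot|s')$ with $\pi^*$ regularized-optimal there. Successor states of a height-$h$ state all have smaller height. With the two players' rewards sign-flipped, $Q^{\pi_k}(s,a)$ is a continuous function of the policies at successor states only. Hence $Q^{\pi_k}(s,a)\to Q^*(s,a)$, the action-value of $\pi^*$ at $s$; this value is fixed by the lower levels and does not depend on the policy at $s$. The log-space recursion at $s$ is now $x_{k+1}=\rho x_k+(1-\rho)u_k$, where $x_k=\Pi\log\pi_k(\cdot|s)$, $\rho=\beta/(\alpha+\beta)$, and $u_k=\Pi Q^{\pi_k}(s,\cdot)/\alpha$ with $u_k\to u^*$.

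The key lemma is the standard one for such a recursion: if $\rho<1$ and $u_k\to u^*$, then $x_k\to u^*$. To prove it, unroll to get $x_k-u^*=\rho^k(x_0-u^*)+(1-\rho)\sum_{j<k}\rho^{k-1-j}(u_j-u^*)$. This is a convolution of a summable geometric kernel with a null sequence, so it tends to $0$. Therefore $\pi_k(\cdot|s)\to\sigma(Q^*(s,\cdot)/\alpha)$. The limit policy $\pi^*$, defined level by level, satisfies $\pi^*(a|s)\propto\exp(Q^{\pi^*}(s,a)/\alpha)$ at every $s$, which is the claim.

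\emph{Main obstacle.} The hardest part is justifying the idealization. The inductive step needs $Q^{\pi_k}$ at $s$ to converge, and this uses continuity of the value in the successor policies over a tree of depth at most $T_{\max}$. That is easy because the composition of finitely many continuous maps is continuous. One must also confirm the zero-sum sign convention: in Count Up each mover maximizes its own $Q$, so the opponent's $Q$ enters with a minus sign, which preserves continuity. One must further check that states unreachable under the limit policy cause no trouble; with $\alpha>0$ every action has positive probability, so all states of the finite graph stay reachable. I would state explicitly that the value-fitting phase is assumed exact (or asymptotically exact), since with function approximation only this idealized version is provable. With that assumption the induction on $h\le T_{\max}$ finishes the proof.
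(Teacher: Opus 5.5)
Your proposal is correct and follows essentially the same route as the paper: backward induction on the maximum number of steps to termination. At each state, the log-space recursion contracts at rate $\beta/(\alpha+\beta)$ toward $Q(s,\cdot)/\alpha$; you project onto $\mathbf 1^\perp$ where the paper takes log-ratios against a reference action $a_{\rm ref}$, which is equivalent. Your lemma for $x_{k+1}=\rho x_k+(1-\rho)u_k$ with $u_k\to u^*$, together with the continuity argument for $Q^{\pi_k}(s,\cdot)$ in the successor policies, makes the inductive step more rigorous than the paper's. The paper only asserts that the action values have converged ``after sufficiently many updates'' and reuses the closed-form solution for a fixed $Q$, even though above the base level $Q^{\pi_k}(s,\cdot)$ converges only asymptotically.
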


\begin{proof}
We use mathematical induction together with a multi-stage convergence argument.
Assign to each state $s$ an index defined as "the maximum number of steps required to reach a terminal state from $s$."
Because we assume the existence of $T_{\max}$, this index can be assigned to every state.
For each state $s$, let $\pi_n(\cdot|s)$ denote the policy after the $n$-th update.

Step 1. Base case: index $=1$: If the index of $s$ is $1$, then from $s$ the next state must be terminal.
Hence the value of the next state is fully determined by the environment reward, and, after sufficiently many updates,
the action-value estimate at $s$ converges to that value. In particular, after sufficiently many updates, $Q_{\theta}(s,a)$ has converged to $Q^\pi(s,a)$ for each action $a$.
Fix a reference action $a_{\rm ref}$ with $\pi_n(a_{\rm ref}|s)>0$ and define
\[
z_n(a):=\log\frac{\pi_n(a|s)}{\pi_n(a_{\rm ref}|s)}.
\]
Taking the log-ratio of Equation~\ref{update_rule} for \(a\) and \(a_{\rm ref}\) gives
\[
\begin{aligned}
\log\frac{\pi_{n+1}(a|s)}{\pi_{n+1}(a_{\rm ref}|s)}
&=
\log\frac{\frac{1}{Z_n(s)}\exp\!\left(\frac{Q^\pi(s,a)+\beta\log\pi_n(a|s)}{\alpha+\beta}\right)}
{\frac{1}{Z_n(s)}\exp\!\left(\frac{Q^\pi(s,a_{\rm ref})+\beta\log\pi_n(a_{\rm ref}|s)}{\alpha+\beta}\right)}\\
&=
\frac{Q^\pi(s,a)-Q^\pi(s,a_{\rm ref})+\beta\!\left(\log\pi_n(a|s)-\log\pi_n(a_{\rm ref}|s)\right)}{\alpha+\beta}.
\end{aligned}
\]
By the definition of \(z_n(a)\), this is equivalent to
\[
z_{n+1}(a)=\frac{\beta}{\alpha+\beta}z_n(a)+\frac{Q^\pi(s,a)-Q^\pi(s,a_{\rm ref})}{\alpha+\beta},
\]
whose explicit solution is
\[
z_n(a)=z_*(a)+\left(\frac{\beta}{\alpha+\beta}\right)^n\!\bigl(z_0(a)-z_*(a)\bigr),\qquad
z_*(a)=\frac{Q^\pi(s,a)-Q^\pi(s,a_{\rm ref})}{\alpha}.
\]
Therefore $z_n(a)\to z_*(a)$, and after normalization $\pi_n(\cdot|s)$ converges to the desired Boltzmann policy.

Step 2. Induction step: Assume that, for all states whose indices are at most $k$, both the policy and the action-value estimates have converged.
Consider a state whose index is $k+1$.
By the same argument as in the base case, after sufficiently many updates, the action-value estimate at that state converges to
$Q^{\pi}$, and the same logit-recurrence argument implies that the policy at that state converges to the desired Boltzmann form.

By induction, the claim holds for all states.
\end{proof}
\begin{remark}
The proof of Theorem~\ref{thm:app:finite_length_convergence} relies primarily on a bounded horizon and backward induction, and depends less on the two-player zero-sum structure. This suggests that the argument may naturally extend to multi-player general-sum settings.
\end{remark}
\begin{remark}
Our current proof for finite-horizon games is based on backward induction and relies on the existence of \(T_\text{max}\), so it does not directly extend to infinite-horizon games. Therefore, a theoretical extension to infinite-horizon games would likely require a different analytical framework, for example one incorporating discounting or a stochastic horizon.
\end{remark}

\section{Details on Count Up Game}\label{sec:app:convergence_limit_of_klent}
In this section, we report the details of experiments on count up game. In two-player games, quantal response equilibrium \citep{mckelvey1995quantal} is defined as a policy which satisfies the following equation:
\[
\pi(a|s) = \frac1{Z(s)}\exp(Q^\pi(s, a)/\alpha).
\]
\citet{sokota2022a} have provided a theoretical analysis on the combination of KL and entropy regularization, and it suggests that the convergence limit of this combination is the quantal response equilibrium. To verify this expectation, we have conducted experiments on a simple small-scale game, namely, the count-up game. The rule is defined as follows.

\paragraph{Formal Rule} Let us consider a two-player sequential zero-sum game. Let $N$ and $k$ be positive integers. The state space is non-negative integers $\mathcal S = \{0, 1, 2, \dots\}$ and the action space is positive integers up to $k$: $\mathcal A = \{1, \dots, k\}$. The initial state $S_0$ is always $0$, and the state transition, termination, and rewards are defined as follows.

\begin{itemize}
    \item Transition: The next state is defined as $S_{t+1} = S_t + A_t$.
    \item Termination: The game terminates when $S_t + A_t \ge N$.
    \item Rewards: The reward is defined as follows:
\[
r(S_t, A_t) = \begin{cases}
1 & \text{if } S_t + A_t \ge N\\
0 & \text{otherwise}\\
\end{cases}
\]
\end{itemize}

\paragraph{Interpretation of the Rule} This rule can be interpreted as follows. There are two players and they declare the number of $S_{t+1}$ alternately. Let $N=7$ and $k = 2$, for example. Then, the first player can declare $1$ or $2$, and the next player can declare a number that is larger by $1$ or $2$ than the previously declared number. If a player declares a number which is equal to or larger than $7$, the player wins the game.

In this simple and small-scale game, we analyze the convergence limit of KLENT. Below, we assume that $N=7$ and $k = 2$ unless otherwise described.

\paragraph{Optimal Strategy} The optimal strategy of this game can be calculated in a backward manner as Table \ref{tab:optimal_strategy_in_countup}.

\begin{table}[ht]
    \centering
    \caption{Optimal strategy in the count up game.}
    \label{tab:optimal_strategy_in_countup}
    \begin{tabular}{c|c}
    \toprule
        State $S_t$ & Optimal Strategy \\
    \midrule
        6 & Win with $A_t = +1 \text{ or } +2$. \\
        5 & Win with $A_t = +2$.\\
        4 & Lose anyway.\\
        3 & Win with $A_t = +1$.\\
        2 & Win with $A_t = +2$.\\
        1 & Lose anyway.\\
        0 & Win with $A_t = +1$.\\
    \bottomrule
    \end{tabular}
\end{table}

\paragraph{Quantal Response Equilibrium} In this game, quantal response equilibrium can also be calculated in a backward manner. We have calculated them for $\alpha = 0.03$ and $\alpha = 1.0$. The results are shown in Figure \ref{fig:qre_in_countup}. It can be observed that the equilibrium of $\alpha = 0.03$ is close to the optimal strategy in Table \ref{tab:optimal_strategy_in_countup}, and that of $\alpha = 1.0$ is a softer policy.

\paragraph{Convergence Limit of KLENT} We have run the KLENT algorithm on this game, especially with $\alpha=1.0$. The evolution of learned policy and action values is shown in Figure \ref{fig:klent_in_countup}. The results confirm the expectation that KLENT converges to the quantal response equilibrium.

\begin{figure}[ht]
    \centering
    \includegraphics[width=0.5\linewidth, clip, trim={50 200 50 200}]{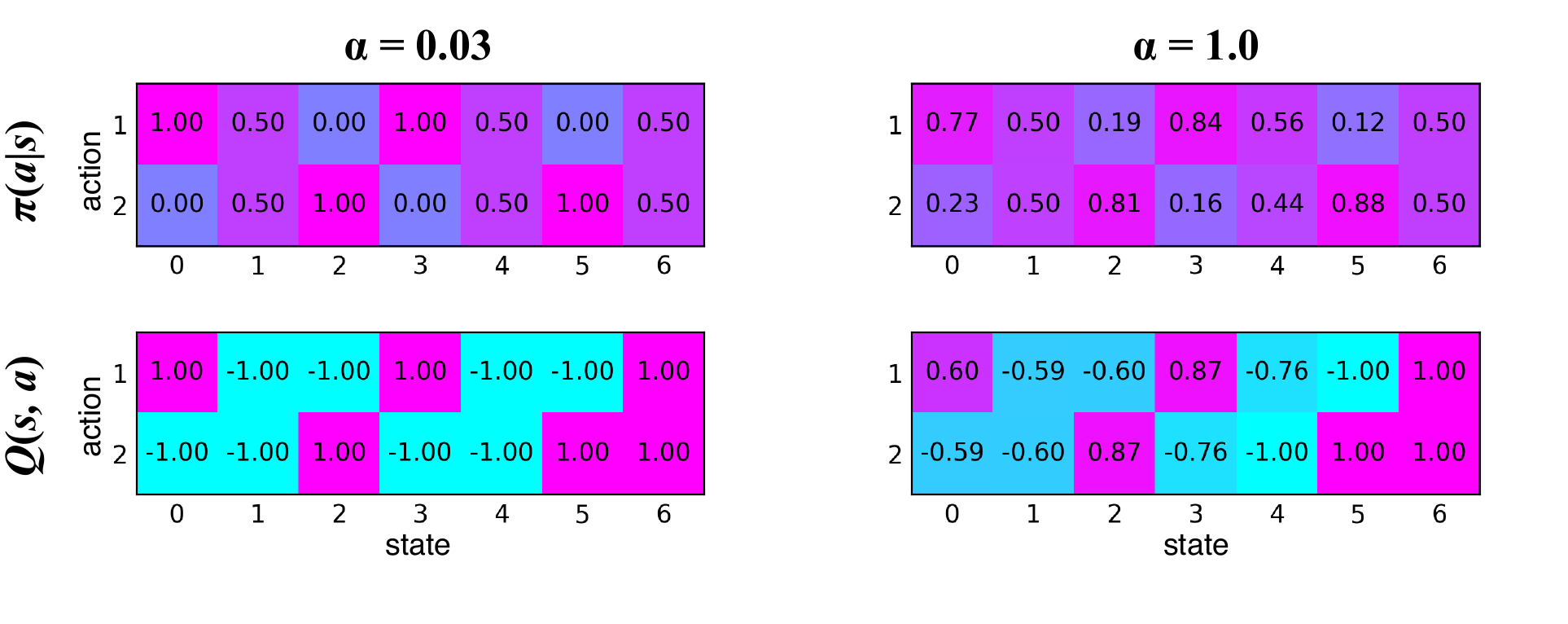}
    \caption{Quantal response equilibrium in the count-up game.}
    \label{fig:qre_in_countup}
\end{figure}

\begin{figure}[H]
    \centering
    \includegraphics[width=1.0\linewidth, clip, trim={100 300 100 300}]{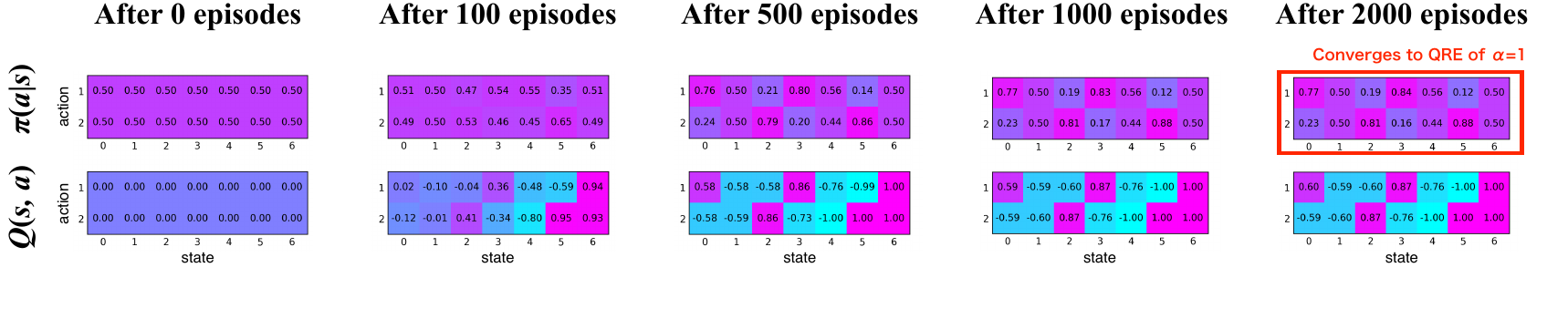}
    \caption{The evolution of learned policy and action values of KLENT in the count-up game. The results show that KLENT converges to the quantal response equilibrium.}
    \label{fig:klent_in_countup}
\end{figure}

\part{Details of Experiments}
\section{Experimental Setup Details}\label{sec:app:setup}

We explain the detailed experimental setup in this section.
For performance evaluation, we used the baseline opponent provided by Pgx as an anchored opponent. This anchored opponent selects actions stochastically based on its policy. The evaluated methods used deterministic policies by setting the temperature parameter to zero for softmax policies and \(\epsilon\) to zero for \(\epsilon\)-greedy policies. 
In particular, the KLENT uses the greedy policy corresponding to the output \(\pi\) of the policy network. In the evaluation, all agents select actions without search unifying their test-time computational resources\footnote{For search-based evaluations, please refer to Appendix \ref{sec:app:ttc} and \ref{sec:app:head-to-head_matches}.}. The evaluation was conducted by playing 1024 matches against the anchored opponent, and the win rate was plotted on the vertical axis of the graph. Draws were treated as half-wins. The horizontal axis represents the total number of simulator evaluations during training, which includes all interactions with the environment simulator such as rollouts in tree search. This choice is consistent with prior literature that measures computational cost in terms of simulator evaluations, as seen in studies such as KataGo \citep{wu2020accelerating}. Methods closer to the upper-left in the graph are interpreted as more efficient, achieving higher performance with fewer simulator accesses. For each method, experiments were conducted using three random seeds, and the mean and standard deviation of the obtained metrics were displayed on the graph.

\section{Implementation Details}\label{sec:app:implementation}

This section describes the implementation details used in the experiments. For model-based methods, AlphaZero, TRPO AlphaZero, and Gumbel AlphaZero, we used open-source implementations provided by Mctx \citep{Danihelka2022} and Pgx \citep{Koyamada2023}. Each iteration performed self-play in parallel across 1024 threads, with each thread executing up to 256 state transitions. If a game ended before 256 steps, a new game state was immediately initialized to continue the threads. Monte Carlo tree search was conducted for decision-making with a simulation budget of 32 for each action selection.

For model-free methods, including PPO, DQN, and our agent KLENT, self-play was similarly conducted in parallel across 1024 threads, but with each thread executing up to 2048 state transitions without search. The process for initializing new games upon completion was the same as for model-based methods. The hyperparameters of our agent KLENT were set as \((\alpha, \beta, \lambda) = (0.03, 0.1, e^{-1/8})\), as specified in Appendix \ref{sec:app:setup}. The hyperparameters for PPO and DQN were determined referring to the implementation in Stable-Baselines3 \citep{stablebaselines3}. For PPO, the regularization applied a clipping method to impose proximity, with the clipping ratio set to 0.2. The Generalized Advantage Estimator (GAE) in PPO used the same \(\lambda = e^{-1/8}\) as KLENT. In the case of DQN, the \(\epsilon\)-greedy policy started with an \(\epsilon\) value of 1.0, which was linearly reduced to 0.05 over the first \(10^8\) simulator evaluations, and fixed at 0.05 thereafter.

The network architecture was consistent across all methods and based on ResNetV2 \citep{he2016identity}. The number of hidden layer channels was set to 128, for 6 residual blocks. Policy, state-value, and action-value heads were added as required by each method. Table \ref{tab:heads} summarizes the inclusion of these heads for each method. The network takes a state observation as an input, with the policy head and action-value head outputting \(|\mathcal A|\)-dimensional vectors, and the state-value head outputting a scalar value. Due to variations in input and output shapes depending on the games and methods, the number of parameters varied slightly but remained within the range of 1.7 to 2.1 million across all experimental settings. Training of the networks was conducted with a batch size of 4096, a learning rate of 0.001, and the Adam optimizer \citep{adam}.

\begin{table}[htb]
\caption{Summary of the network heads included for each method.}
\vspace{1ex}
\centering
\begin{tabular}{l|ccc}
\toprule
& Policy Head & State-Value Head & Action-Value Head \\
\midrule
KLENT & Yes & No & Yes \\
AlphaZero & Yes & Yes & No \\
TRPO AlphaZero& Yes & Yes & No \\
Gumbel AlphaZero& Yes & Yes & No \\
DQN & No & No & Yes \\
PPO & Yes & Yes & No \\
\bottomrule
\end{tabular}
\label{tab:heads}
\end{table}

\section{Sensitivity Analysis of Hyperparameters in KLENT}\label{sec:app:sensitivity}
This section examines the performance variation of KLENT with respect to changes in the hyperparameters \(\alpha, \beta, \lambda\). Specifically, for 9x9 Go, we conducted experiments on 27 combinations of hyperparameter values 
as follows: \((\alpha, \beta, \lambda) \in \{0.01, 0.03, 0.1\}\times\{0.03, 0.1, 0.3\}\times\{e^{-1/4}, e^{-1/8}, e^{-1/16}\}\).
For each combination, we used three random seeds and calculated the average win rate against the anchored opponent during the training steps between 600 and 800 million simulator evaluations. The results are shown in Figure \ref{fig:sensitivity}. Other experimental settings follow those described in Section \ref{sec:experiments}.

\begin{figure}[htb]
    \centering
    \includegraphics[width=0.7\linewidth, clip, trim={0 90 0 100}]{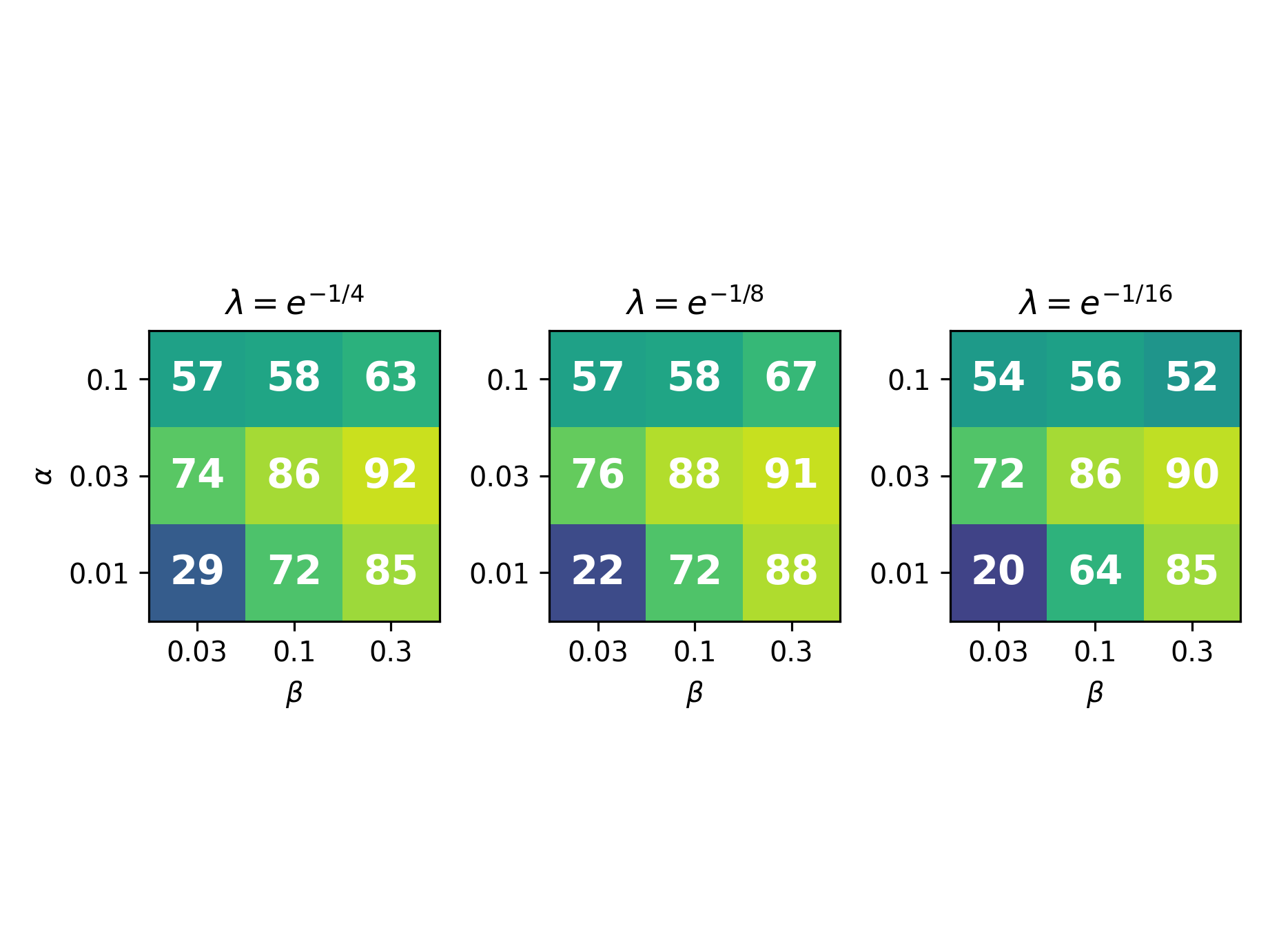}
    \caption{The results of sensitivity analysis in 9x9 Go.}
    \label{fig:sensitivity}
\end{figure}

When the coefficients of KL regularization and entropy regularization were both set to small values, specifically \((\alpha, \beta) = (0.01, 0.03)\), a notable decline in performance was observed. This is likely due to the improved policy, defined by Equation \ref{update_rule}, becoming overly sharp. These results suggest that the regularization coefficients need to be set to sufficiently large and appropriate values. On the other hand, within the range of experiments conducted, the performance appears to be robust to variations in the time constant of \(\lambda\)-returns.

\section{Experimental Comparison of Network Architectures}\label{sec:app:network_architecture_comparison}

In the main paper, we used the AlphaZero-style shared-backbone separate-heads architecture throughout in order to compare RL algorithms under a common setup. However, we agree that the reviewer's question is important, and we therefore conducted additional experiments on 9x9 Go with the following two architectures.
\begin{itemize}
  \item (i) shared-backbone single-head: a single-network variant where policy and action-value fully share one output representation, with both training losses applied to that shared head.
  \item (ii) separate-backbones separate-heads (2x Computation Cost): a fully separated two-network variant with independent backbones and heads for policy and action-value, so that no internal representation is shared.
\end{itemize}
All other configurations were kept identical to those of the original KLENT. Table~\ref{tab:network_architecture_comparison} summarizes the results.

\begin{table}[H]
\centering
\caption{The results of experimental comparison of network architectures in 9x9 Go. Columns represent training simulator evaluations and percentages denote win rates against the anchored baseline.}
\label{tab:network_architecture_comparison}
\begin{tabular}{lcccc}
\toprule
Architecture & 200M & 400M & 600M & 800M \\
\midrule
(i) Shared-Backbone Single-Head & 3 \% & 2 \% & 1 \% & 1 \% \\
(ii) Separate-Backbones Separate-Heads (2x Computation Cost) & 61 \% & 85 \% & 87 \% & 89 \% \\
(Original KLENT) Shared-Backbone Separate-Heads & 53 \% & 80 \% & 85 \% & 89 \% \\
\bottomrule
\end{tabular}
\end{table}

First, variant (i) shows that fully sharing an output representation for policy and action-value is substantially less effective in our setting. This suggests that, although the two quantities are closely related, a single shared head can be too restrictive to effectively learn them. We also tested an EMA-based version of this design, but observed no improvement. These results indicate that separating the policy and action-value heads is beneficial here.

Second, for (ii), the comparison should be made in terms of computation cost. At the same number of simulator evaluations, variant (ii) shows a higher win rate in the early stages of training than the original shared-backbone separate-heads architecture, with both reaching the same win rate at 800M evaluations. However, variant (ii) requires about twice as much computation time per evaluation because it trains two independent networks. When comparing these performances at equivalent compute budgets, the original shared-backbone separate-heads architecture at 400M (80\%) and 800M (89\%) evaluations achieves higher win rates than variant (ii) at 200M (61\%) and 400M (85\%) evaluations, respectively. Therefore, we consider that the shared-backbone design is still more effective from the perspective of computational efficiency for training.

\section{Experimental Comparison with Other Regularized Policy Optimization Approaches}\label{sec:app:regularized_policy_optimization_comparison}

We compared KLENT with closely related regularized policy optimization alternatives in 9x9 Go. Apart from PPO \citep{schulman2017proximal}, which is already included in the main paper as a representative regularized policy optimization baseline, we further evaluated the following two approaches:

\begin{itemize}
  \item Forward-KL: a variant of KLENT using forward-KL regularization.
  \item Entropy Reward: a variant of KLENT using entropy-only regularization, with entropy incorporated into return calculation.
\end{itemize}

\begin{table}[H]
\centering
\caption{The results of experimental comparison with other regularized policy optimization approaches in 9x9 Go. Columns represent training simulator evaluations and percentages denote win rates against the anchored baseline.}
\label{tab:regularized_policy_optimization_comparison}
\begin{tabular}{lcccc}
\toprule
Method & 200M & 400M & 600M & 800M \\
\midrule
Forward-KL & 29\% & 54\% & 56\% & 61\% \\
Entropy Reward & 5\% & 8\% & 13\% & 14\% \\
KLENT & 53\% & 80\% & 85\% & 89\% \\
\bottomrule
\end{tabular}
\end{table}

The results in Table~\ref{tab:regularized_policy_optimization_comparison} show that KLENT consistently performs best among these closely related regularized policy optimization variants. We believe they support that the reverse-KL + entropy design in KLENT is not an arbitrary combination of known techniques, but an empirically effective design choice for efficient self-play learning in board games.

We also evaluated Discrete SAC \citep{christodoulou2019sacdiscrete} and V-MPO \citep{song2020vmpo}, which are discrete counterparts of SAC \citep{haarnoja2018soft} and MPO \citep{abdolmaleki2018maximum}, but learning remained unstable in 9x9 Go and neither exceeded a 1\% win rate after 800M evaluations. While we do not claim these methods are universally ineffective, these results suggest that obtaining strong self-play performance with standard model-free regularized RL methods is non-trivial.

\section{Experimental Comparison with Recent Model-Free Approaches}\label{sec:app:recent_model_free_comparison}

We ran additional experiments with two recent model-free approaches: Munchausen DQN \citep{vieillard2020munchausen} and GRPO \citep{shao2024deepseekmath}, which are prominent in discrete action domains such as Atari and the language domain.

\begin{table}[H]
\centering
\caption{The results of experimental comparison with recent model-free approaches in 9x9 Go. Columns represent training simulator evaluations and percentages denote win rates against the anchored baseline.}
\label{tab:recent_model_free_comparison}
\begin{tabular}{lcccc}
\toprule
Method & 200M & 400M & 600M & 800M \\
\midrule
Munchausen DQN & 8 \% & 5 \% & 3 \% & 2 \% \\
GRPO & 1 \% & 1 \% & 2 \% & 3 \% \\
KLENT & 53 \% & 80 \% & 85 \% & 89 \% \\
\bottomrule
\end{tabular}
\end{table}

The results in Table~\ref{tab:recent_model_free_comparison} show that KLENT consistently achieves higher win rates than these recent model-free baselines.

\section{Additional Evaluation on the Reliability of the AlphaZero Implementation}\label{sec:app:az_reliability}
\subsection{Reliability of the Pgx Implementation as a Baseline}\label{sec:app:az_reliability_pgx}
In this study, we adopt the Pgx implementation as the baseline for AlphaZero-family methods.
The original AlphaZero implementation by its authors is not publicly available. Similarly, for Gumbel AlphaZero, only the MCTS technique has been released through the \texttt{Mctx} library, and the full training pipeline is not open-sourced. Therefore, reproducing the full experimental setup of AlphaZero-family methods requires either relying on third-party open-source implementations or building one from scratch.
To the best of our knowledge, Pgx is the only open-source implementation that satisfies all of the following criteria:
\begin{itemize}
  \item \textbf{Peer-reviewed implementation}: Pgx was accepted to the NeurIPS 2023 benchmark track, indicating that its experimental setup has undergone peer review.
  \item \textbf{Evaluated across multiple environments}: Pgx has been tested on five different board games, not just a single domain. This suggests that the implementation is robust and not reliant on environment-specific tricks.
  \item \textbf{Performance comparison against other agents}: According to the Pgx paper, its baseline agent outperforms \texttt{pachi}, a reasonably strong Go engine.
  \item \textbf{Use of the \texttt{Mctx} library for MCTS}: Pgx utilizes the \texttt{Mctx} library for its MCTS technique, ensuring consistency with the Gumbel AlphaZero implementation, which was developed by some of the original AlphaZero authors.
\end{itemize}
For these reasons, we consider Pgx to be a reliable and robust open-source implementation of AlphaZero-family methods, and adopt it as the baseline in our experiments.

\subsection{Performance Comparison with Other Implementations}\label{sec:app:az_reliability_comparison}
To strengthen the credibility of the AlphaZero and baseline implementations used in this study, we conducted a comparative evaluation against a well-known open-source implementation available at \href{https://github.com/suragnair/alpha-zero-general}{https://github.com/suragnair/alpha-zero-general}. This repository provides pretrained models for several games, including 8$\times$8 Othello. We used the provided checkpoint file \texttt{pretrained\_models/othello/8x8\_100checkpoints\_best.pth.tar} to construct an evaluation agent.
We conducted a round-robin tournament involving the following four agents, where each pair played 100 games. Draws were counted as 0.5 wins for each agent.
\begin{itemize}
  \item \textbf{Random}: An agent that selects legal moves uniformly at random.
  \item \textbf{AlphaZero-General}: An agent that follows the policy from the above checkpoint of \texttt{alpha-zero-general}.
  \item \textbf{Pgx Baseline}: The baseline agent used throughout our experiments.
  \item \textbf{Pgx's AlphaZero}: Our implementation of AlphaZero using the Pgx framework, trained with 800 million simulator evaluations.
\end{itemize}
The number of wins for each agent against the others is shown in Table~\ref{tab:az_comparison}. Each cell indicates the number of wins achieved by the row agent when playing against the column agent.
\begin{table}[ht]
  \centering
  \caption{Win rates among AlphaZero implementations and baselines in Othello.}
  \label{tab:az_comparison}
  \begin{tabular}{lcccc}
    \toprule
    & Random & AlphaZero-General & Pgx Baseline & Pgx's AlphaZero \\
    \midrule
    Random            & -- & 3 \%   & 0 \%   & 3 \%   \\
    AlphaZero-General & 97 \% & --  & 17 \%  & 13 \%  \\
    Pgx Baseline      & 100 \% & 83 \% & --  & 42 \%  \\
    Pgx's AlphaZero   & 97 \% & 87 \%  & 58 \%  & --  \\
    \bottomrule
  \end{tabular}
\end{table}
As shown in the table, AlphaZero-General achieves a 97\% win rate against the random agent, confirming that it is significantly stronger than random. However, both the Pgx Baseline and Pgx's AlphaZero implementation clearly outperform AlphaZero-General, achieving win rates of 83\% and 87\% respectively. These results support the reliability and strength of the implementations used in our experiments. 

\section{Extended Experiments on Rollout Counts and Training Budgets for AlphaZero}\label{sec:app:az_rollout_budget}

This section presents additional experiments to examine how AlphaZero's performance is affected by the number of rollouts per move and the total training budget.

\subsection{Performance of AlphaZero with Varying Rollout Counts in 9x9 Go}\label{sec:app:az_rollout_9x9}

AlphaZero performs Monte Carlo Tree Search (MCTS) at each move, where the number of rollouts corresponds to the number of simulator evaluations used per search. We investigated how this parameter affects learning efficiency.

The experiments were conducted in the 9x9 Go environment, using rollout counts of 2, 4, 8, 16, 32, and 64. The total number of simulator evaluations used during training was fixed at 200M, 400M, 600M, and 800M. Evaluation was performed by measuring the win rate against a fixed baseline agent. Note that for a fixed training budget, increasing the rollout count reduces the number of parameter updates, since each update consumes a number of simulator evaluations proportional to the rollout count. This highlights a trade-off: deeper search per move comes at the cost of fewer parameter updates. The results are shown in Table~\ref{tab:az_rollouts}.

\begin{table}[ht]
\centering
\caption{Performance of AlphaZero with different rollout counts (9x9 Go). Each entry shows the win rate (\%) against the baseline agent.}
\vspace{1ex}
\begin{tabular}{l|cccc}
\toprule
Simulator Evaluations & 200M & 400M & 600M & 800M \\
\midrule
AZ (2 rollouts) & 7 \% & 7 \% & 7 \% & 8 \% \\
AZ (4 rollouts) & 16 \% & 35 \% & 51 \% & 61 \% \\
AZ (8 rollouts) & \textit{20 \%} & \textit{39 \%} & \textit{56 \%} & \textit{69 \%} \\
AZ (16 rollouts) & 15 \% & 28 \% & 42 \% & 57 \% \\
AZ (32 rollouts) & 6 \% & 13 \% & 20 \% & 34 \% \\
AZ (64 rollouts) & 5 \% & 7 \% & 11 \% & 15 \% \\
(cf: KLENT) & \textbf{53 \%} & \textbf{80 \%} & \textbf{85 \%} & \textbf{89 \%} \\
\bottomrule
\end{tabular}
\label{tab:az_rollouts}
\end{table}

The results indicate that in 9x9 Go, setting the rollout count to around 8 leads to the most efficient learning for AlphaZero. Nevertheless, even when the rollout count is optimized, KLENT achieves substantially higher performance under the same training budget, highlighting its superior efficiency.

\subsection{Performance of AlphaZero with Varying Rollout Counts in 19x19 Go}\label{sec:app:az_rollout_19x19}

We also tuned the number of rollouts in 19x19 Go with values of 4, 16, 64, and 256. As shown in Figure \ref{fig:az_only}, 16 rollouts achieved the most efficient learning. Accordingly, we reported this result as the performance of AlphaZero in Figure \ref{fig:19x19go} in Section \ref{sec:scalability_to_large_scale_games}.

\begin{figure}[h]
    \centering
    \includegraphics[width=0.4\linewidth, clip, trim={0 10 10 10}]{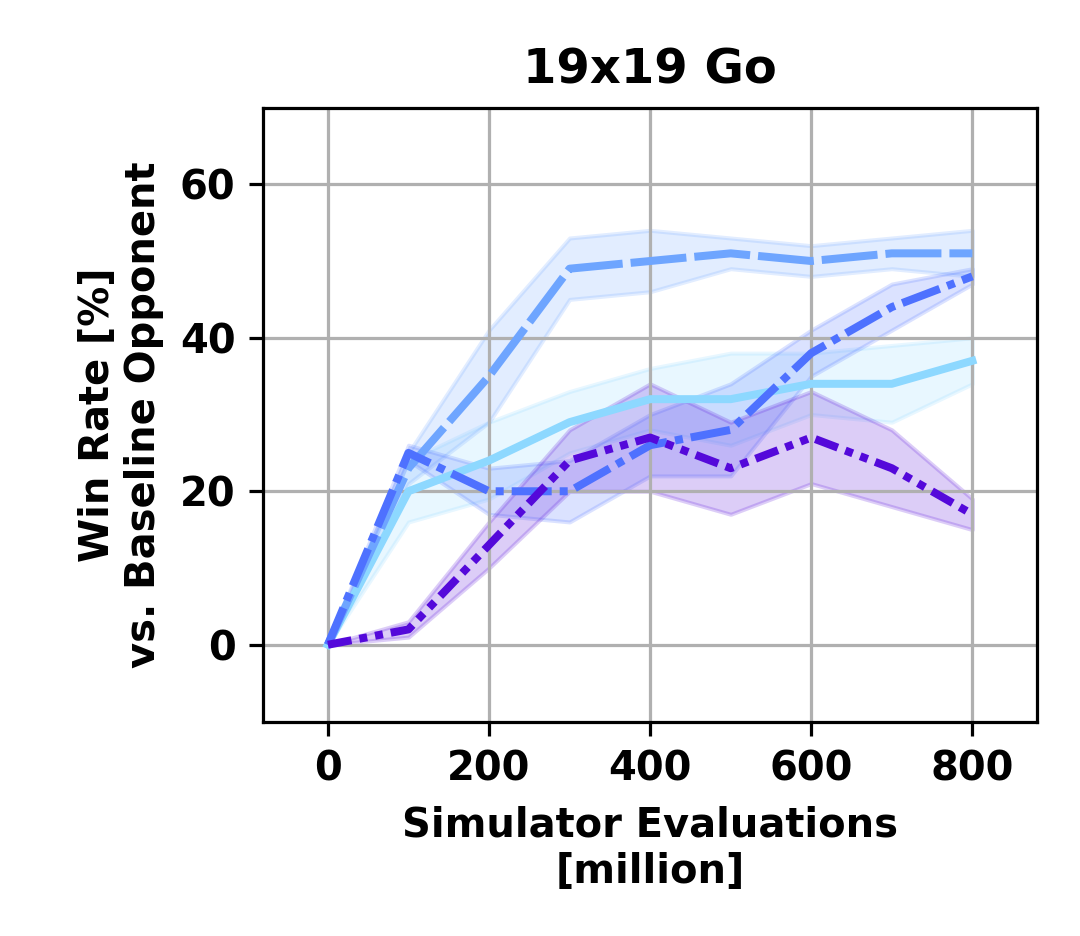}
    \includegraphics[width=0.3\linewidth, clip, trim={25 20 45 70}]{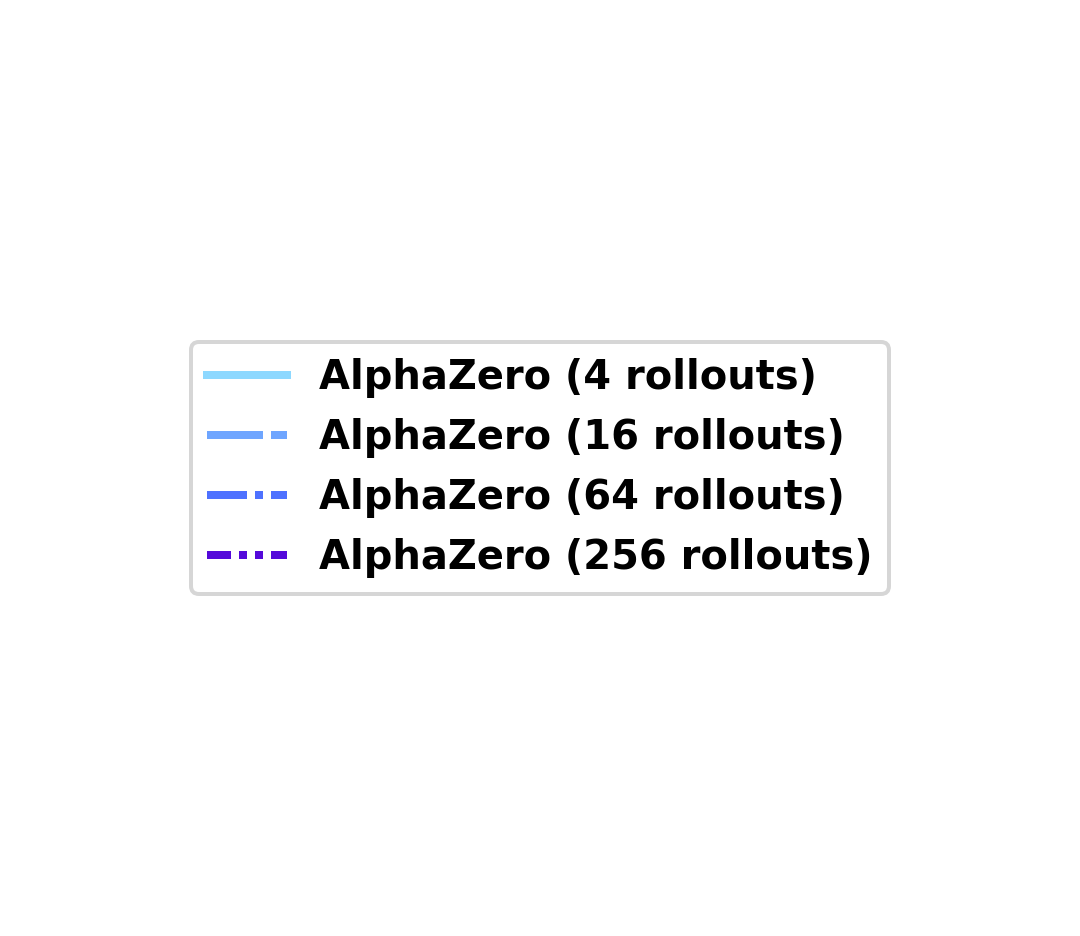}
    \caption{The results of rollout count tuning in 19x19 Go. 16 rollouts achieve the most efficient learning.}
    \label{fig:az_only}
\end{figure}

\subsection{Performance of AlphaZero with Increased Training Budgets}\label{sec:app:az_large_budget}

We also conducted additional experiments to examine AlphaZero's asymptotic performance by increasing the total training budget. The experimental settings were the same as above, and the number of simulator evaluations was extended up to 4,800M. The results are presented in Table~\ref{tab:az_large_budget}.

\begin{table}[ht]
\centering
\caption{Performance of AlphaZero under increased training budgets (9x9 Go). Each entry shows the win rate (\%) against the baseline agent.}
\vspace{1ex}
\begin{tabular}{l|cccccc}
\toprule
Simulator Evaluations & 800M & 1,600M & 2,400M & 3,200M & 4,000M & 4,800M \\
\midrule
AZ (2 rollouts) & 8 \% & 18 \% & 17 \% & 18 \% & 19 \% & 18 \% \\
AZ (4 rollouts) & 61 \% & 75 \% & 75 \% & 85 \% & 86 \% & 85 \% \\
AZ (8 rollouts) & 69 \% & 79 \% & 85 \% & 85 \% & 85 \% & 86 \% \\
AZ (16 rollouts) & 57 \% & 80 \% & 83 \% & 88 \% & \textbf{89 \%} & \textbf{89 \%} \\
AZ (32 rollouts) & 34 \% & 60 \% & 71 \% & 78 \% & 83 \% & 83 \% \\
AZ (64 rollouts) & 15 \% & 34 \% & 51 \% & 59 \% & 67 \% & 72 \% \\
(cf: KLENT) & \textbf{89 \%} & -- & -- & -- & -- & -- \\
\bottomrule
\end{tabular}
\label{tab:az_large_budget}
\end{table}

These results show that AlphaZero reaches approximately 89\% win rate when the total training budget is increased to around 3,200M to 4,000M simulator evaluations. This confirms the intuitive expectation that AlphaZero can achieve strong asymptotic performance given sufficient training budget. At the same time, KLENT achieves comparable performance using only 800 million simulator evaluations, which is approximately four to five times fewer than those required by AlphaZero, underscoring its efficiency advantage.

\section{Strength Scaling with Additional Test-time Computation}\label{sec:app:ttc}

Additional simulations during test time can improve the strength of agents. In this section, we investigate how performance scales with the number of simulations for models trained with KLENT and those trained with Gumbel AlphaZero in 9x9 Go. For both methods, parameters trained with 800 million simulator evaluations are used. We adopt an off-the-shelf Gumbel AlphaZero Monte Carlo Tree Search (MCTS) for test-time computation, applying the same procedure to both sets of parameters. While Gumbel AlphaZero learns policy and state-value networks, KLENT trains policy and action-value networks. To address this difference, for KLENT, the inner product of the policy and action-value is used as the state-value estimate during MCTS. The anchored baseline opponent uses parameters provided by Pgx and runs with 800 simulations. \citet{Koyamada2023} have reported that this agent has achieved 62 wins and 38 losses against Pachi \citep{baudivs2011pachi} with 10,000 simulations. We measure the win rates of the evaluated target agents, using either KLENT or Gumbel AlphaZero parameters, under 0, 16, 32, 64, 100, 200, 400, and 800 simulations. Here, 0 indicates that the agent conducts no search and deterministically chooses action solely based on its policy network. In this experiment, the evaluation is conducted for 100 matches. The win rates are measured with three random seeds and the mean and the standard deviation are plotted.

The results are shown in Figure \ref{fig:ttc}, where the horizontal axis represents the number of simulations and the vertical axis represents win rates against the anchored baseline. KLENT demonstrates that it can effectively scale its strength with test-time computation. 
In this experiment, we calculate the state-values from policy and action-value estimates as 
\[
V^\pi(s) = \sum_a \pi(a|s)Q^\pi(s, a), 
\]
which is based on the policy $\pi$ we actually use for rollout.

\begin{figure}[ht]
    \centering
    \includegraphics[height=0.35\linewidth, clip, trim={0 0 0 0}]{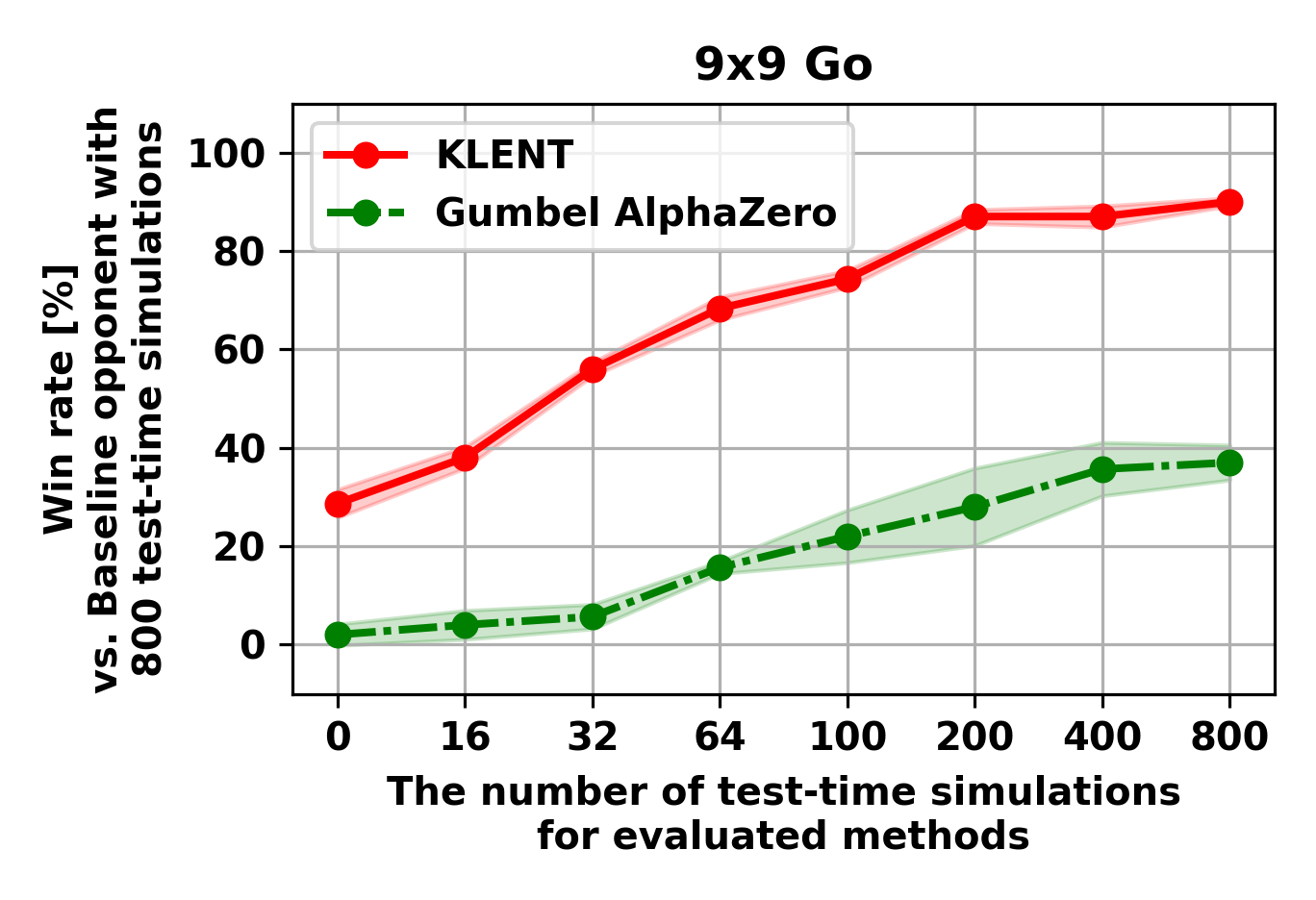}
    \caption{Performance changes with increased test-time computation budget. The simulation budget of the anchored baseline opponent is fixed at 800. The horizontal axis represents the simulation budget for the evaluated agents, while the vertical axis shows their win rate against the anchored opponent. The results demonstrate that agents using parameters trained with KLENT can scale their strength as the number of test-time simulations increases.}
    \label{fig:ttc}
\end{figure}

\paragraph{Wall-clock Inference Time:} An evaluation with equal wall-clock search time is a fair condition for performance comparison. Actually, as we use the same ResNet architecture for each method, the wall-clock search time is proportional to the test-time rollout count. Therefore, the equal wall-clock search time comparison is equivalent to our equal rollout count comparison. To further verify this point, we have measured the wall-clock time spent for each action selection and summarized in the following table.

\begin{table}[ht]
  \centering
  \caption{Test-time rollout counts and wall-clock inference time.}
  \begin{tabular}{ccc}
    \hline
    Rollout Count & Gumbel AlphaZero & KLENT + Test-Time MCTS \\
    \hline
    0   & $(7.128 \pm 0.400) \times 10^{-4}$ & $(7.157 \pm 0.084) \times 10^{-4}$ \\
    16  & $(2.601 \pm 0.013) \times 10^{-2}$ & $(2.603 \pm 0.012) \times 10^{-2}$ \\
    32  & $(5.403 \pm 0.019) \times 10^{-2}$ & $(5.359 \pm 0.026) \times 10^{-2}$ \\
    64  & $(1.093 \pm 0.006) \times 10^{-1}$ & $(1.100 \pm 0.005) \times 10^{-1}$ \\
    100 & $(1.732 \pm 0.007) \times 10^{-1}$ & $(1.734 \pm 0.007) \times 10^{-1}$ \\
    200 & $(3.435 \pm 0.010) \times 10^{-1}$ & $(3.444 \pm 0.017) \times 10^{-1}$ \\
    400 & $(6.918 \pm 0.041) \times 10^{-1}$ & $(6.970 \pm 0.021) \times 10^{-1}$ \\
    800 & $(1.391 \pm 0.009) \times 10^{0}$  & $(1.389 \pm 0.004) \times 10^{0}$  \\
    \hline
  \end{tabular}
\end{table}

The results show there is no significant difference between wall-clock inference time of Gumbel AlphaZero and KLENT + test-time MCTS. Therefore, we conclude that the evaluation with fixed rollout counts is equivalent to equal wall-clock time evaluation.

\section{Head-to-Head Matches}\label{sec:app:head-to-head_matches}
\subsection{Evaluation against Pachi and GnuGo in 9x9 Go}\label{sec:app:h2h_pachi_gnugo}
In the domain of 9x9 Go, we conducted additional head-to-head experiments against GnuGo and Pachi, which are baselines confirmed to have been used in prior studies. The detailed configurations of these agents are provided below.

\begin{itemize}
    \item Evaluated Agent
    \begin{itemize}
        \item KLENT: The model trained with KLENT. Similarly to Appendix \ref{sec:app:ttc}, Gumbel AlphaZero was employed as the search algorithm at test time, with the number of rollouts set to 2,000 (approximately two seconds per move). For the neural network parameters, we used the model trained by KLENT with 800M simulator evaluations. While the MCTS in Gumbel AlphaZero requires estimates of the policy and state value, KLENT's neural network estimates the policy and action values. To account for this difference, we used the inner product of the policy and action-value predictions as the state-value estimate.
    \end{itemize}
    \item Anchored Opponent
    \begin{itemize}
        \item GnuGo \citep{gnugo}: A classical and lightweight MCTS-based Go engine. The strength level was set to 10 (the strongest level), following the evaluation setting in prior work \citep{pmlr-v139-hessel21a}.
        \item Pachi \citep{baudivs2011pachi}: A fairly strong MCTS-based Go engine. This program has been reported to have the strength of a KGS 7-dan player in 9x9 Go \citep{pachi_github}, which corresponds to the top 0.5--1\% of players on Kiseido Go Server. The strength was set by configuring the MCTS rollout count to 10,000, consistent with the evaluation settings in prior work \citep{pmlr-v139-hessel21a, Danihelka2022, Koyamada2023}. 
    \end{itemize}
\end{itemize}

Under these conditions, we conducted 100 games, and the win rate of KLENT is presented in Table \ref{tab:gnugo_pachi}. These results demonstrate the win rates against agents that have been used for evaluation in prior studies, and we believe they can serve as one of the credible reference points.

\begin{table}[t]
    \centering
    \caption{The results of head-to-head matches against GnuGo and Pachi.}
    \label{tab:gnugo_pachi}
    \begin{tabular}{c|c}
         \toprule
         Anchored Opponent & Winrate of KLENT's side \\
         \midrule
         GnuGo (Level 10) & 100 \% \\
         Pachi (10K rollouts)	& 81 \%\\
         \bottomrule
    \end{tabular}
\end{table}

\subsection{Head-to-Head Match against AlphaZero in 19x19 Go}\label{sec:app:h2h_az_19x19}

We additionally conducted direct head-to-head matches between the final checkpoints trained in Section \ref{sec:scalability_to_large_scale_games}. In this setting, the evaluation used MCTS with 800 rollouts per move. The AlphaZero checkpoint was trained with 16 rollouts, which was the strongest among the tested settings. Under this protocol, KLENT won all evaluation games, yielding a 100\% win rate against AlphaZero trained with the same simulator budget. These result also support that KLENT can achieve efficient learning under a fixed training budget.

\section{Performance Comparison in Elo Ratings}
\label{appendix:elo}
While win rate was used as the primary metric for comparing trained agents in the main paper, for reference, we provide Elo scores in Figure~\ref{fig:elo-rating}.
Specifically, we fix the Elo score of the Pgx baseline agent at \( R_0 = 1000 \), and apply the following standard formula for Elo rating:
\[
R = 400 \log_{10} \left( \frac{W}{L} \right) + R_0,
\]
where \( W \) denotes the win rate against the Pgx baseline and \( L = 1 - W \) is the corresponding loss rate. Since the mapping from win rate to Elo is monotonic, this transformation does not alter our primary claim that KLENT outperforms the baselines under a fixed computational budget.
However, Elo scores must be interpreted with care, as they are highly sensitive to the composition of the tournament pool. Indeed, in our preliminary experiments, we observed that Elo ratings of fixed agents could vary significantly when the set of evaluated agents is modified. This sensitivity has also been pointed out in prior works~\citep{NEURIPS2018_cdf1035c, liu2025reevaluating, lanctot2025evaluatingagentsusingsocial}. These studies highlight that Elo ratings can be manipulated by adding redundant or biased agents, even when anchor points are fixed.
Therefore, cross-paper comparisons of Elo scores require identical tournament configurations, which is difficult in our case since neither the full tournament details of the Pgx implementation nor those of Gumbel AlphaZero are publicly available. For this reason, we present Elo scores only as supplementary information.

\begin{figure}[H]
\centering
\includegraphics[width=0.96\linewidth]{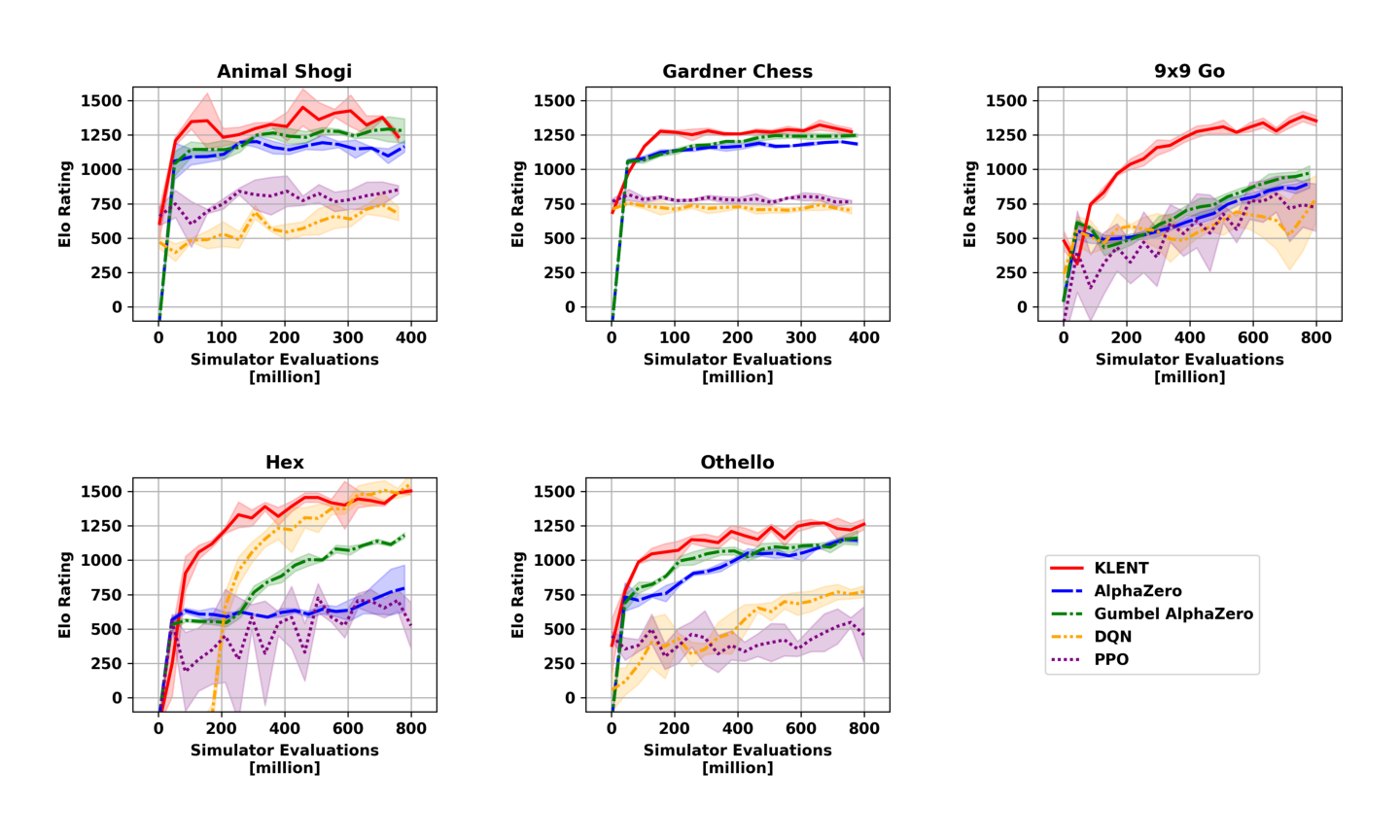}
\caption{\textbf{Performance comparison in Elo scores.} Win rates are converted by fixing the Pgx baseline to Elo 1000. Note that Elo-based cross-paper comparisons are unreliable due to sensitivity to tournament configurations.}
\label{fig:elo-rating}
\end{figure}

\section{Computational Requirements}\label{sec:app:Computational_Requirements}

For overall experiments, we have spent approximately 2,000 GPU hours on NVIDIA A100 GPU in total to run the main experiments in Figure \ref{fig:performance} to \ref{fig:19x19go}. KLENT algorithm can be run on a single NVIDIA A100 GPU. This section describes the computational and memory requirements of the algorithm.

\paragraph{Memory Usage}
KLENT stores improved policies in a replay buffer for reuse. In our experiments, memory usage was not an issue on a single A100 GPU with 80 GB of memory. Even when memory becomes a limiting factor, this issue can be mitigated using a sparse representation. Since the improved policy assigns non-zero probabilities only to legal actions and sets all others to zero, sparse storage formats can significantly reduce memory consumption.

To illustrate this, we collected states from 10{,}000 games played by baseline agents implemented with Pgx and computed the average and maximum number of legal actions per game. The results are shown in Table~\ref{table:legal_action_stats}.

\begin{table}[ht]
\centering
\caption{Statistics of legal actions collected from 10,000 games for each environment.}
\vspace{1ex}
\begin{tabular}{l|ccc}
\toprule
Game & Action Space Size & Mean Legal Actions & Max Legal Actions \\
\midrule
Animal Shogi & 132 & 7.5 & 36 \\
Gardner Chess & 1,225 & 9.5 & 40 \\
9x9 Go & 82 & 42.3 & 82 \\
Hex & 122 & 90.6 & 121 \\
Othello & 65 & 8.0 & 22 \\
\bottomrule
\end{tabular}
\label{table:legal_action_stats}
\end{table}

These results indicate that the number of legal actions is often much smaller than the full action space. Therefore, sparse representations provide an effective solution in memory-constrained settings.

\paragraph{Computation Time}
One of KLENT's strengths lies in its training efficiency. For example, in the 9x9 Go environment, KLENT reduced the time required to surpass the baseline agent by more than 25\% compared to Gumbel AlphaZero and AlphaZero.

This efficiency stems from KLENT requiring fewer simulator interactions and neural network evaluations per training sample. As a result, it offers practical advantages in terms of wall-clock training time and computational cost.

\part{Further Discussions}

\section{Extension to Stochastic Environments}\label{sec:app:stochastic_environments}

In our experiments, we focused on the five board games in Table~\ref{table:game-environments}, all of which have deterministic state transitions. However, the formulation of KLENT assumes only a finite-action MDP and does not require deterministic transitions. Therefore, we consider that KLENT can be naturally applied to stochastic environments, and such applications are an important future research direction.

\section{Extension to Continuous Action Spaces}\label{sec:app:further_discussions}

In finite-action settings, the updated policy is obtained by explicitly normalizing the right-hand side of Equation~\ref{update_rule}. However, in continuous action spaces, this normalization becomes intractable due to the integral required for the partition function, and an alternative approach is needed. A natural extension is a sampling-based update, which weights actions sampled from the current policy \(\pi_\theta\) based on \(Q_\theta\) and \(\pi_\theta\) to approximate the distribution. This idea conceptually aligns with MPO \citep{abdolmaleki2018maximum}, which carries out policy improvement through weighted empirical distributions over sampled actions. We therefore consider that a sampling-based approximation is a feasible candidate for extending KLENT to continuous action spaces.

\end{document}